\documentclass[11pt,letterpaper]{article}
\usepackage{url}
\usepackage{fullpage}
\usepackage[subtle,mathspacing=normal]{savetrees}
\usepackage{amsmath}
\usepackage{amssymb}
\usepackage{amsthm}
\usepackage{mathtools}
\usepackage{accents}
\usepackage{thm-restate}

\usepackage{bm}
\usepackage{color}
\usepackage{array}
\usepackage{xy}
\usepackage{setspace}
\usepackage{varwidth}      
\usepackage{multicol}
\usepackage{mdframed}
\usepackage{enumitem}
\usepackage{ragged2e}
\usepackage{microtype}
\usepackage{xspace}

\usepackage{graphicx}
\usepackage{subcaption}
\usepackage{booktabs}      
\usepackage{threeparttable}

\usepackage{tikz}
\usepackage{standalone}
\usepackage{todonotes}

\usepackage{hyperref}
\usepackage[capitalize]{cleveref}

\usepackage[
    backend=bibtex,
    style=alphabetic,
    natbib=true,
    maxalphanames=6,
    minalphanames=4,
    maxnames=10,
]{biblatex}
\usepackage{complexity}

\usetikzlibrary{positioning, shapes.geometric, arrows.meta, arrows, calc,
                fit, backgrounds, matrix, chains, decorations,
                decorations.pathreplacing}

\tikzstyle{system}=[rectangle, draw, fill=lightgray, minimum height=0.8cm,
                    minimum width=0.8cm, thick]
\tikzstyle{BC}=[system]
\tikzstyle{resource}=[system]
\tikzstyle{RO}=[resource, minimum width=1cm]
\tikzstyle{protocol}=[circle, inner sep=0.7mm, draw]
\tikzstyle{simulator}=[circle, inner sep=0.7mm, draw]
\tikzstyle{memory}=[resource]
\tikzstyle{distinguisher}=[resource, fill=white, minimum width=3.5cm,
                           minimum height=1.2cm]
\tikzstyle{link}=[]

\newtheorem{thm}{Theorem}[section]

\theoremstyle{remark}

\newtheorem{theorem}[thm]{Theorem}
\newtheorem{definition}[thm]{Definition}
\newtheorem{lemma}[thm]{Lemma}
\newtheorem{proposition}[thm]{Proposition}
\newtheorem{claim}[thm]{Claim}

\theoremstyle{remark}
\newtheorem{remark}[thm]{Remark}

\newcommand{\defn}[1]{\textbf{\emph{#1}}}             
\renewcommand{\paragraph}[1]{\vspace{.2 cm} \noindent \textbf{#1}}

\newmdenv[
    skipabove=\smallskipamount,
    skipbelow=\smallskipamount,
    innertopmargin=6pt,
    innerbottommargin=6pt,
    innerleftmargin=8pt,
    innerrightmargin=8pt,
    linewidth=0.8pt,
    roundcorner=2pt,
]{protocolbox}

\newcommand{\negl}{\mathsf{negl}}        
\newcommand{\algo}[1]{\mathsf{#1}}       
\newcommand{\tO}{\widetilde{O}}          

\newcommand{\NN}{\mathbb{N}}
\newcommand{\bbN}{\mathbb{N}}            
\newcommand{\FF}{\mathbb{F}}            
\newcommand{\bin}{\{0,1\}}
\newcommand{\Set}[1]{\left\{ #1 \right\}}
\newcommand{\eps}{\varepsilon}          

\newcommand{\bx}{\mathbf{x}}
\newcommand{\bu}{\mathbf{u}}

\newcommand{\probcond}[2]{
    \Pr\left[
    \begin{array}{c}
    #1
    \end{array}
    :
    \begin{array}{c}
    #2
    \end{array}
    \right]
}

\newcommand{\secp}{\lambda}
\renewcommand{\sec}{\secp}              

\newcommand{\secparam}{1^\secp}

\newcommand{\calF}{\mathcal{F}}        
\newcommand{\calM}{\mathcal{M}}
\newcommand{\Rel}{\mathcal{R}}         

\newcommand{\Prover}{\mathcal{P}}
\newcommand{\prove}{\Prover}
\newcommand{\MProver}{\widetilde{\Prover}} 
\newcommand{\Verifier}{\mathcal{V}}
\newcommand{\ver}{\Verifier}

\newcommand{\calP}{\mathcal{P}}
\newcommand{\calV}{\mathcal{V}}
\newcommand{\calA}{\mathcal{A}}
\newcommand{\calE}{\mathcal{E}}

\newcommand{\Setup}{\mathsf{Setup}}
\newcommand{\Extract}{\mathsf{Ext}}
\newcommand{\ext}{\Extract}

\newcommand{\Open}{\mathsf{Open}}

\newcommand{\Commit}{\mathsf{Commit}}

\newcommand{\pp}{\mathsf{pp}}          
\newcommand{\hk}{\mathsf{hk}}          
\newcommand{\rt}{\mathsf{rt}}          
\newcommand{\tr}{\mathsf{tr}}          
\newcommand{\st}{\mathsf{st}}          
\newcommand{\cval}{\mathsf{c}}         
\newcommand{\PC}{\mathsf{PC}}          

\newcommand{\smht}{\mathsf{ht}}
\newcommand{\htgen}{\mathcal{G}_\smht}
\newcommand{\hthash}{\mathcal{H}_\smht}

\newcommand{\htprove}{\mathcal{P}_\smht}
\newcommand{\htver}{\mathcal{V}_\smht}

\newcommand{\kil}{\mathsf{Kil}}
\newcommand{\kilsetup}{\Setup_\kil}
\newcommand{\kilprove}{\prove_\kil}
\newcommand{\kilver}{\ver_\kil}
\newcommand{\kilext}{\ext_\kil}

\newfunc{\seBARG}{seBARG}

\newfunc{\RAM}{RAM}
\newfunc{\RAMDel}{RAMDel}
\newfunc{\RAMSNARG}{RAM~SNARG}
\newfunc{\SNARG}{SNARG}
\newfunc{\SNAP}{SNAP}
\newfunc{\flSEH}{flSEH}
\newfunc{\HT}{HT}
\newfunc{\Digest}{Digest}
\newfunc{\digest}{digest}
\newfunc{\node}{node}
\newfunc{\layer}{layer}

\newfunc{\RightSize}{RightSize}
\newfunc{\Neighbor}{Neighbor}
\newfunc{\Sketch}{Sketch}
\newfunc{\Recover}{Recover}

\DeclarePairedDelimiter{\lrag}{\langle}{\rangle}
\DeclarePairedDelimiter{\abs}{\lvert}{\rvert}
\renewcommand{\lang}{\mathcal{L}}

\newcommand{\sigmaIPP}{\sigma + \log(2d)}

\newcommand{\SIS}{\mathsf{SIS}}

\newcommand{\cksum}{\ensuremath{\mathsf{cksum}}\xspace}
\newcommand{\cksumT}{{\ensuremath{R_\cksum}\xspace}}

\title{How to Avoid Debate: \\ Scalable AI Safety via Doubly-Efficient Interactive Proofs}
\author{
Liyan Chen\thanks{MIT. \texttt{cliyan@mit.edu}.} 
    \and 
Yael Tauman Kalai\thanks{MIT. \texttt{tauman@mit.edu}.} 
    \and 
Zoe Xi\thanks{MIT. \texttt{zoexi@mit.edu}.}
}
\date{}

\begin{document}

\maketitle

\begin{abstract}

As AI models continue to develop powerful
  capabilities, it becomes critical that we are able to
  verify that their output is aligned with our intentions. A recent line of work focuses on
  verification via debate, a model
  of interactive proofs where two competing powerful provers, or AI
  models, debate each other to convince a weak verifier, or a human,
  of the correctness of their claim. However, debate assumes that the two AI models possess equal abilities and that one of them is truthful, which may not be realistic. 
  
  In this work, we show \emph{how to avoid debate}: we
  initiate the study of \emph{single-prover}
  interactive proofs for AI safety. 
Prior results in single-prover interactive proofs do not immediately
  carry over to the AI safety setting: for example, they do not work when the computation has access to an oracle, such as to human judgment or an external database such as the web.   
  We present doubly-efficient
  single-prover interactive proofs and arguments for oracle-aided computations (also known as relativizing proofs),  in the settings where (1) the
  computation is robust, in the sense that the output does not change
  if at most a small fraction of the answers to oracle queries are
  incorrect, or (2) the oracle is a low-degree polynomial. These results suggest that interactive verification is possible even without debate, under structured or noise-tolerant oracle access.

\end{abstract}  


\newpage
\tableofcontents
\newpage

\section{Introduction}\label{sec:intro}


As machine learning models become increasingly capable, they are
 being trained to perform complex tasks that would be
prohibitively expensive for a human to verify, leading to critical
safety concerns. How can we efficiently supervise the training of an
AI model to perform some complex task in a way that aligns with our
intentions, when we may only have an obscure understanding of the task
ourselves? 

As a concrete example, consider the scenario of training a large
language model (LLM) to write a long high-stakes document, such as a
legal contract.\footnote{This example is due to Brown-Cohen, Irving, and Piliouras
  \cite{brown2024scalable}.} To generate a training label on a contract
produced by the LLM, it is necessary to verify that every passage of
the contract is correct, where correctness here is dictated by human
judgment. However, it would be unreasonably expensive to require a
human to carefully read through a long legal contract to
produce just one training label. In this setting, it is important to
have a training protocol that is extremely efficient in its use of
human judgments. For another example, we might consider a scenario where we train an LLM on a large public dataset (e.g., the web). The LLM generates an output, and we would like to verify that the output is correct while making only very few queries to the dataset. 


Various approaches to this problem, known as \emph{scalable
  oversight}, have been proposed in the AI safety literature
\cite{christiano2018supervising, leike2018scalable, irving2018ai}. In particular, a
recent line of work focuses on designing verifiable 
training protocols by leveraging tools from theoretical cryptography, namely
the area of interactive proofs \cite{irving2018ai, brown2024scalable,
  brown2025avoiding}. An interactive proof \cite{GoldwasserMR85,Babai85} is a
protocol wherein a powerful but untrusted prover interacts with a weak
verifier to try to convince the verifier of the validity of some
statement. For security, if the statement is indeed valid, then the verifier should
accept, and if the statement is not valid, then the verifier should
reject (with high probability). 
At a high level, taking the prover to
be an AI model and the verifier to be a human, this closely resembles
what we want in our safe training setting.

The main reason why prior work in interactive proofs does not
immediately carry over to our setting, however, is that these classical results assume that the computation is given in a well-defined mathematical manner, such as a Turing machine.  In the setting of AI safety, tasks can be modeled as computation with access to an oracle, which is a ``black-box'' function that we can't see the inner workings of; an oracle might represent human judgment or an external database such as the web, for example. Going back to our examples, we would like to have a protocol where the verifier makes few queries to a human expert or to the dataset, which we could model as
an oracle. Interactive proofs for oracle-aided computations are said to {\em relativize}, and we do not have interactive proofs that relativize. At the same time, relativization is believed to be essential to AI safety: the UK AISI Alignment team claims that ``in order to be relevant to AI safety, an interactive proof result must relativize'' \cite{irving2025relativise}. 



This aim of relativization has driven researchers to study a new model of interactive proofs, known as \emph{debate}.
In this model,
introduced by Irving, Christiano, and Amodei in their 2018 paper ``AI
Safety via Debate'' \cite{irving2018ai}, there are two debating
provers, who hold opposing claims, and a verifier, and the provers
debate each other to try to convince the verifier of the validity of
their claim. In the same paper, Irving, Christiano, and Amodei show
that $\mathsf{DEBATE} = \mathsf{PSPACE}$ 
(where $\mathsf{DEBATE}$ is the
class of problems that have a debate interactive proof), and this
result relativizes, whereas the celebrated $\mathsf{IP} = \mathsf{PSPACE}$ result \cite{shamirIP=PSPACE} (where
$\mathsf{IP}$ is the class of problems that have a single-prover
interactive proof), does not; the moral is that to recover
relativization one can introduce a second debating prover. A
follow-up work by Brown-Cohen, Irving, and Piliouras \cite{brown2024scalable} introduces an
analogue of this model of debate, called \emph{doubly-efficient}
debate, where the provers are constrained to run in polynomial time
instead of being computationally unbounded---the motivation is that in
our AI safety setting the provers are AI models, and AI models, while
powerful, are still computationally bounded.

However, debate relies on assumptions that may not be realistic: in order to use debate for AI safety, we need that the two AI models are properly incentivized to debate, that they have the same computational abilities, and that one of them is truthful. This last assumption of truthfulness is particularly concerning: Irving, Christiano, and Amodei note that it hinges on the assumption that it is harder to lie than to refute a lie, that is, that one of the AI models is incentivized to be truthful, which may or may not be true in any
particular setting \cite{irving2018ai}.
Further limitations of debate as an approach for
AI alignment are discussed in \cite{irving2018ai}.

In light of these limitations, in this work we investigate the following question:

\begin{center}
\emph{Can we construct a single-prover interactive proof that relativizes?}
\end{center}





In the ``doubly-efficient'' model that we study, there is a
polynomial-time prover and an even more efficient verifier, who both
have access to an oracle, and the prover tries to convince the
verifier of the correctness of some computation that may depend on
queries to the oracle. Notably, the super-efficient verifier not only
runs in much less time than the prover, but he also makes few queries
to the oracle, much fewer than the number that he would need in order
to perform the computation by himself---in our examples,
this would mean that very few queries to the human expert or to the database are needed.

Unfortunately, Barbara, Chiesa, and Guan \cite{DBLP:conf/tcc/BarbaraCG25} prove that there do not exist interactive proofs that relativize for all computations. Intuitively, their observation is that if the computation uses many oracle gates to a  random oracle, and the output of the computation is unstable, in the sense that it hinges on the answers of all the oracle queries being correct, then the verifier would need to check the correctness of every oracle call, and thus would not be efficient.

\subsection{Our Results}

In this paper, we construct relativizing doubly-efficient interactive proofs and arguments for two natural settings, where the computation is ``robust'' or where the oracle is a low-degree polynomial. Our results suggest we can recover relativization even without two debating provers. 

\paragraph{Robust computation.} We say that a computation is robust if the output does not change even if a small fraction of answers to oracle queries are modified. Intuitively, robustness allows a verifier to ``spot-check'' oracle calls instead of having to evaluate all of them. It seems natural to assume that the computation is robust, especially in the setting where the oracle is taken to represent human judgment: human judgment is already error-prone, so as a safeguard, the output of the computation should not change if a small number of the human answers change, otherwise the output is not very meaningful.
More broadly, many natural tasks may be able to be made robust using redundancy, e.g., in the case of human judgment, we might rephrase the question or ask multiple people and take the majority answer.

\paragraph{Low-degree oracles.}
We also consider computations with access to an oracle that can be represented by a low-degree polynomial. Unlike the above robust setting, here we do not assume anything about the computation, but rely on the strong algebraic structure of the oracle in order to make the verifier efficient. Our protocol in this setting can be viewed as a first step towards designing protocols for computations with access to an oracle that is ``learnable'' by a simple machine learning model class. More concretely, we might take the oracle to represent a database, such as the web: any database $\bin^{\log N} \to \bin$ can be converted into a low-degree oracle $\mathbb{F}^{\log N} \to \mathbb{F}$ via a standard low-degree extension (see Chapter 3 of \cite{DBLP:journals/ftsec/Thaler22}). Then our protocol would allow us to efficiently verify the correctness of any computation that can interact with this database. 

Our protocols exhibit different tradeoffs between soundness and efficiency guarantees. Some protocols achieve statistical soundness, which guarantees security against even computationally unbounded cheating provers, and some achieve only computational soundness, which guarantees security against polynomial-time cheating provers. The protocols have varying efficiency guarantees regarding the number of bits communicated between the prover and the verifier, the running time of the verifier, and the number of queries that the verifier makes to the oracle. For a more detailed explanation of these notions, see \cref{sec:prelims}. Also see Table~\ref{tab:overview} for an overview of our results. 



\begin{table*}[t]
    \centering
    \caption{An overview of our results. We model the computation as an oracle Boolean circuit of depth $D$ with $\ell$ oracle gates divided into $d$ levels. Here $\mathcal{V}_{\text{time}}$ is the verifier's running time, $\mathcal{V}_{q}$ is the number of oracle queries made by the verifier, $r$ is the round complexity, $\sigma$ is a soundness parameter, and $\eps$-robust means that the output of the circuit does not change even if at most an $\eps$-fraction of the answers to oracle queries are changed on any input (see \cref{sec:prelims} for more formal definitions).}
    \begin{threeparttable}
    \setlength{\tabcolsep}{3.55pt}
    \begin{tabular}{lcccccc}
            \toprule
            Protocol & Soundness & $\mathcal{V}_{\text{time}}$ & $\mathcal{V}_q$ & $r$ & Setting & Adaptive? \\
            \midrule
            \cref{thm:proof-nonadaptive} & statistical & $\widetilde{O}(\sigma(\eps\ell D + 1/\eps + n))$ & $O(\sigma/\eps)$ & $\widetilde{O}(D)$ & $\eps$-robust & no \\
            \cref{thm:proof-adaptive} & statistical & $\widetilde{O}(\sigma d(\eps\ell D + 1/\eps + n))$ & $\widetilde{O}(\sigma d/\eps)$ & $\widetilde{O}(dD)$ & $\eps$-robust & yes \\
            \cref{thm:arg-adaptive} & computational & $\widetilde{O}(1/\eps + n)$ & $O(1/\eps)$ & $O(1)$ & $\eps$-robust & yes \\
            \cref{thm:low-degree} & computational & $\widetilde{O}(n)$ & 1 & $\widetilde{O}(1)$ & $\poly$-degree oracle & yes \\
            \bottomrule
        \end{tabular}%

    
    \end{threeparttable}

    \label{tab:overview}
\end{table*}



\paragraph{Outline of the paper.} We begin in \cref{sec:tech} with an overview of the key ideas behind our results. In \cref{sec:prelims}, we introduce our model of doubly-efficient interactive proofs for oracle-aided computation and more definitions that we need. In \cref{sec:robust} and \cref{sec:low-deg}, we present our results for the settings where the computation is robust and the oracle is a low-degree polynomial, respectively.  We end in \cref{sec:conclusion} with some open questions.  

\subsection{Related Work}


\paragraph{AI safety via debate.} The work most closely related to
ours is Brown-Cohen, Irving, and Piliouras' ``Scalable AI Safety Via
Doubly Efficient Debate,'' which presents doubly-efficient
\emph{debate} protocols for oracle-aided computation
\cite{brown2024scalable}; we study doubly-efficient
\emph{single-prover} protocols for oracle-aided computation. Debate
was first proposed as an approach to scalable oversight in 2018 by
Irving, Christiano, and Amodei in their paper ``AI Safety via Debate''
\cite{irving2018ai}, which assumes that the two debating provers are
computationally unbounded. More recently, work on debate has focused
on limitations, e.g., the ``obfuscated arguments problem,'' wherein a dishonest debater
can come up with a flawed argument where the flaw is very hard to find
\cite{barnes2020obfuscated, brown2025avoiding, buhl2025alignment}.

\paragraph{Interactive proofs for trustworthy machine learning.} Recently, there has been a line of work on applying interactive proofs to problems of trust in machine learning \cite{amit2024models, hammond2024neural, anil2021learning, wäldchen2024interpretabilityguaranteesmerlinarthurclassifiers, kirchner2024prover}. The most closely related work in this line is that of Amit, Goldwasser, Paradise, and Rothblum \cite{amit2024models} 
who introduce self-proving models, which are models that, in addition to providing answers, prove the correctness of those answers to a verifier using an interactive proof (with standard soundness and relaxed distributional completeness guarantees). 
Their focus is on training models to produce correctness proofs for computations that already admit interactive proofs. 
In contrast, our work expands the class of doubly efficient interactive proofs and arguments to oracle-aided computations.

\paragraph{Delegating computation.}
The problem of supervising complex computation with a weak verifier has been studied extensively in computational complexity and cryptography. Besides the celebrated $\mathsf{IP}=\mathsf{PSPACE}$ theorem \cite{shamirIP=PSPACE} and the $\mathsf{PCP}$ theorem \cite{arora1998probabilistic}, there has been a line of work studying doubly-efficient interactive proofs, motivated by the application of delegating computation \cite{goldwasser2015delegating, STOC:ReiRotRot16, berger2025efficientlybatchingunambiguousinteractive}. On the other hand, if we only consider security against polynomial-time algorithms, there has been a long line of work on constructing protocols for super-efficient verification for any $\mathsf{NP}$ statement, starting with the seminal work of Kilian and Micali \cite{kilian1992note, micali2000computationally}. These protocols, known as SNARGs or SNARKs, are already widely used in practice (see, e.g., \cite{SP:BCGGMT14}). Our work can be regarded as an extension of both of the above lines, to the setting of oracle-aided computation.

\section{Technical Overview}\label{sec:tech}


Here, let us sketch the key ideas behind our results.

In all of our protocols, we model the computation that the verifier
wishes to verify as an oracle Boolean circuit, i.e., a Boolean circuit
$C$ that may
contain gates that compute an arbitrary function $O: \{0, 1\}^*\to\{0, 1\}$ thought of as an oracle.  We call such gates oracle gates and denote the circuit by $C^O$. 
Given input $x$, a succinct description of circuit
$C$, and access to oracle $O$, the prover and verifier engage in a
protocol to prove that $C^O(x) = 1$.

Throughout this overview, we use $\widetilde{O}(\cdot)$ to hide polylogarithmic factors as well as, for simplicity, factors depending on the soundness parameter $\sigma$, the security parameter $\secp$, and the oracle query length.

\subsection{Proof system for robust oracle computation}

\paragraph{A starting point: an IPP where the verifier has unreliable
  access to the input.} Our proof system in the robust setting makes
key use of an 
interactive proof of proximity (IPP) \cite{rothblum2013interactive, rothblum2020batch}, which is a
doubly-efficient interactive proof where the verifier is extremely
efficient, running in only sublinear time in the input size, and the soundness guarantee
is relaxed so that the verifier is only required to reject inputs that
are ``far'' from any string in the language in, say, fractional
Hamming distance. Notably, since the verifier runs in sublinear time,
he cannot even read the entire input: in an IPP, the verifier is
assumed to have query access to the input, meaning that he can read
any bit in one timestep. 

In our setting, we would like the verifier to
make very few oracle queries, sublinear in the number of oracle gates
in the circuit, which seems to resemble the IPP setting. One natural
idea would be to have the prover and verifier engage in an IPP to show
that $C_Q(x) = 1$, where $Q$ consists of queries that $C$ makes to $O$ on input $x$ along with the corresponding answers, and $C_Q(x)$ runs $C(x)$ but with the
inputs and outputs of oracle gates replaced by those in $Q$. The
verifier in this protocol would only have to read a few bits of $Q$,
which is exactly what we want. Soundness follows from the relaxed
soundness guarantee under the assumption that $C$ is robust.

But this protocol doesn't work as is, since we haven't specified how
the verifier accesses bits of $Q$. In an IPP, the verifier is assumed
to have reliable access to $Q$ (i.e., $Q$ is written down in advance),
but in our setting the verifier does not have reliable access to $Q$. Instead,
at the end of their interaction,\footnote{We assume that the verifier can make all of his queries to the input at the end of the
  interaction.  This is indeed the case in both known IPP constructions in the literature \cite{rothblum2013interactive, rothblum2020batch}.} we have the verifier ask the unreliable prover to
send the bits in $Q$ that he wishes to query, but since the prover
might lie, we then have the verifier check that the prover's answers
are correct (or ``close'' to being correct). If
the verifier requests position $i$ in $Q$ and the prover sends
$(x_i, y_i)$, then the verifier can easily check that $y_i = O(x_i)$
by making one oracle call, but it might be inefficient for the
verifier to check that $x_i$ is computed correctly.

\paragraph{Recovering reliable access.} In the case where the circuit
$C$ makes ``nonadaptive'' oracle queries, meaning that queries are not
allowed to depend on answers to other oracle queries, we can fix this
problem by having the prover and verifier also engage in a
doubly-efficient interactive proof (DEIP) to show that the $x_i$'s
that the prover sends are computed correctly from $x$. In the general
setting where $C$ may make adaptive oracle queries, i.e., queries may
depend on answers to other oracle queries, this fix no longer works,
since the computation that we would like to do a DEIP for may now
contain oracle gates. Namely, in the nonadaptive setting, each $x_i$ can only
depend on $x$, while in the general setting, each $x_i$ may also
depend on the answers to several oracle queries, which may in turn
depend on the answers to several oracle queries, and so on.

In the general setting, the idea is to instead combine this IPP idea
with recursion. After the initial IPP to prove that $C_Q(x) = 1$, one
natural idea is to do another IPP to prove that the prover's message containing $Q$ is computed correctly, and
so on. As we explain below, this idea does not work as is, but does work if we add a minor tweak.

In what follows, we view the circuit $C$ as containing $d$ adaptivity levels of
oracle gates, where queries can only depend on answers corresponding
to oracle gates in smaller levels. Assume for simplicity that $d = 2$
(the argument for general $d$ follows the same structure), and let
$Q_1$ and $Q_2$ denote the substrings of $Q$ corresponding to levels
$1$ and $2$, respectively. 

\paragraph{First attempt.}  After the initial IPP for proving that
$C_Q(x) = 1$, the verifier queries the prover on some positions in
$Q$, which we can separate into queries to $Q_1$ and queries to $Q_2$,
and the prover sends back answers. Letting $A_2$ denote the prover's
answers to $Q_2$, the prover and verifier engage in an IPP to show
that the queries in $A_2$ are computed correctly from $x$ and
$Q_1$. At the end of this IPP, the verifier queries the prover for
some positions in $Q_1$, and the prover sends back answers. Letting
$A_1$ denote the prover's answers to $Q_1$ across both IPPs, the
prover and verifier engage in a DEIP to show that the queries in $A_1$
are computed correctly from $x$. The verifier also checks himself that
all the oracle calls in $A_1$ and $A_2$ are computed correctly.

This protocol doesn't work as is: the problem is that the computation that computes the answers for $Q_2$ is an oracle-aided computation that is not necessarily robust. Specifically, assuming that the input $Q_1$ to
the second IPP is fixed (which in some sense is the case, since we
check using a DEIP that $A_1$ is computed correctly from $x$),
soundness of the second IPP only guarantees that there is some string
$Q_1'$ close to $Q_1$ such that $A_2$ is computed correctly from
$Q_1'$ and $x$, which means that there are potentially many different
strings that the prover can choose to set $A_2$ to. So the input to
the initial ``IPP'' is not fixed in advance (which means that the
initial ``IPP'' is not in fact an IPP). 

\paragraph{Our fix.}  To fix this, we have the
prover commit (in a weak sense) to $Q_1$ and to $Q_2$ at the beginning of the protocol using a checksum: the prover computes checksums of $Q_1$ and $Q_2$ and sends these to the verifier. Then in each IPP, the prover additionally proves that the
checksums of the inputs are correct: we change the initial IPP to prove that
$C_Q(x) = 1$ and that the checksums of $Q_1$ and $Q_2$ are the same as the prover's checksums, and
the second IPP to prove that $A_2$ is computed correctly from $x$ and
$Q_1$ and that the checksum of $Q_1$ is the same as the prover's checksum. In this $d = 2$
example, and for general $d$, we can show inductively that if the
verifier accepts, then the input to each IPP is ``fixed'' assuming the
inputs to IPPs corresponding to lower levels are ``fixed,'' taking the
IPP on input $Q_1$ as a base case, so that the ``IPPs'' in the
protocol are in fact IPPs (albeit on inputs which may not be the
claimed inputs, but are in some sense close to the claimed
inputs). Under the assumption that the circuit is robust, the
soundness guarantees of the IPPs and DEIP, the security of the checksum, and the verifier's checks together imply
soundness.

\paragraph{Complexity analysis.} The number of IPPs is equal to the adaptivity level, $d$, and each IPP corresponds to a depth $\leq D$ computation. Suppose that the circuit $C$ is $\eps$-robust, meaning that the output of $C$ does not change even if at most an $\eps$ fraction of answers to oracle queries are changed (see \cref{defn:robustness} for a formal definition). Using \cite{rothblum2020batch} (see \Cref{rothblumx2ipp}) and that $C$ is $\eps$-robust, we can implement each IPP to consist of at most $\widetilde{O}(D)$ rounds, with query complexity $\widetilde{O}(1/\eps)$, communication complexity $\widetilde{O}(\eps\ell D)$, and verifier running time $\widetilde{O}(\eps\ell D + 1/\eps + n)$. At the end, the prover and verifier run a DEIP corresponding to a depth $\leq D$ computation, which, using \cite{goldwasser2015delegating} (see \Cref{deIP}), consists of $\widetilde{O}(D)$ rounds, with communication complexity $\widetilde{O}(D)$ and verifier running time $\widetilde{O}(D+n)$. Putting everything together, we get that the protocol consists of $\widetilde{O}(dD)$ rounds, with query complexity $\widetilde{O}(d/\eps)$, communication complexity $\widetilde{O}(\eps\ell dD)$, and verifier running time $\widetilde{O}(d(\eps\ell D + 1/\eps + n))$.



\subsection{Argument system for robust oracle computation}

We also give an argument system for robust oracle circuits that
achieves better parameters than the proof system, by leveraging tools
from cryptography. Recall that an argument
system guarantees that soundness holds  only against polynomial-time provers, while
the soundness guarantee of a proof system must hold even against
computationally unbounded provers (so essentially argument systems
trade security for efficiency).

In our argument system, the prover uses a Merkle hash to commit to a
string of queries and answers, which is supposed to be the string $Q$
of queries and answers that $C$ makes to $O$ on input $x$. The
prover and verifier engage in a succinct argument of knowledge \cite{kilian1992note} to prove
that there exists a string $Q'$ such that (i) $Q'$ is consistent with
the commitment and (ii) $C_{Q'}(x) = 1$, where $C_{Q'}(x)$ runs $C(x)$
but with the inputs and outputs of oracle gates replaced by those in
$Q'$. The additional knowledge property ensures that there not only
exists a $Q'$ satisfying (i) and (ii), but that the prover also knows
such a $Q'$, which we can efficiently ``extract'' out. The verifier
also asks the prover to open the commitment at a few random positions (specifically, at $O(1/\eps)$ positions, where we assume the circuit $C$ is ``$\eps$-robust'')
and checks that the oracle calls at these positions are correct. If
the verifier accepts, then we can show that with high probability
there is some fixed string $Q'$ such that the computation ``with
respect to'' $Q'$ is correct, and most of the oracle calls in $Q'$ are
correct; then using the robustness assumption, we get soundness.

Using the ``succinctness property'' of the succinct argument of knowledge \cite{kilian1992note} (see \cref{thm:kilian}), we get that the communication complexity of this protocol is dominated by the communication in the step where the prover opens the commitment at $O(1/\eps)$ positions, which is $\widetilde{O}(1/\eps)$. Similarly, the verifier's running time is $\widetilde{O}(1/\eps + n)$. The query complexity is $O(1/\eps)$, and the round complexity is $O(1)$.  

\subsection{Argument system for computation with access to a low-degree oracle}

We also give an argument system for a different setting, where the
circuit (which we no longer assume is robust) has access to an oracle
that can be represented by a low-degree polynomial. In this protocol,
we have the prover use a polynomial commitment scheme to commit to two
polynomials $G$ and $F$, which are supposed to encode the strings of
queries and answers that $C$ makes to $O$ on input $x$,
respectively. The prover uses a succinct argument of knowledge \cite{kilian1992note} to
convince the verifier that she knows $G$ and $F$ such that if we run
$C(x)$ but with the queries and answers to $O$ replaced with those
encoded by $G$ and $F$, then the output is $1$. The verifier also asks
the prover to open the commitments at a random field point $u$ and
checks that $F(u) = O(G(u))$. Letting $G^*$ and $F^*$ denote the
polynomials that we can extract out of the succinct argument of
knowledge (using the knowledge property), if $F^*$ does not correctly
encode the oracle answers corresponding to $G^*$, then using that
$F^*$ and $G^*$ are low-degree and applying Schwartz-Zippel, we get
that w.h.p.  $F^*(u)\neq O(G^*(u))$, so we achieve
soundness. Notably, the verifier in this protocol only makes one query
to the oracle.

Applying the guarantees of the succinct argument of knowledge \cite{kilian1992note} (see \cref{thm:kilian}) and the polynomial commitment scheme \cite{C:CMNW24} (see \cref{thm:prelim_poly_commitment}) that we use, we get that this protocol has round complexity $\widetilde{O}(1)$ and communication complexity $\widetilde{O}(1)$. The verifier runs in time $\widetilde{O}(n)$, which consists of the verification in Kilian's protocol and the polynomial commitment scheme, and checking that $F(u) = O(G(u))$. The query complexity is $1$. 


\section{Preliminaries}\label{sec:prelims}
A \defn{language} $\mathcal{L}\subseteq\{0, 1\}^*$ is a set of
binary strings, and a family of circuits $\{C_n\}_{n\in\mathbb{N}}$,
where for every $n$, $C_n$ takes as input a binary string of length
$n$ and outputs a single bit, \defn{decides} a language $\mathcal{L}$
if for every $n$ and every $x\in\{0, 1\}^n$, we have $x\in\mathcal{L}$
if and only if $C_n(x) = 1$. A language is a formal description of a
problem; if a circuit family decides a language, we can also think of
it as solving the corresponding problem.


\subsection{Oracle-Aided Computation and Robustness}

In our protocols, we model the computation that the prover and verifier perform as an
oracle Boolean circuit. Informally, a Boolean circuit is a diagram that shows how to obtain
an output bit from a binary input string by applying some sequence of
OR ($\lor$), AND ($\land$), and NOT ($\neg$) operations, and an oracle Boolean circuit with access to \defn{oracle} $O: \{0, 1\}^*\to\{0, 1\}$ additionally can make calls to $O$.\footnote{We assume for simplicity that $O$ outputs a single bit. Our protocols can be readily extended to circuits with access to oracles that output multiple bits.} Here is a more formal definition:

\begin{definition}[(Oracle) Boolean circuit]

  For every $n\in\mathbb{N}$, an $n$-input, single-output Boolean
  circuit is a directed acyclic graph with $n$ \emph{sources}, i.e., vertices
  with no incoming edges, and one \emph{sink}, i.e., a vertex with no
  outgoing edges. Every non-source vertex is called a \emph{gate} and
  is labelled with one of $\lor$, $\land$, or $\neg$; $\lor$ and
  $\land$ gates have two incoming edges and $\neg$ gates have one
  incoming edge. For some input string $x\in\{0, 1\}^n$, the
  output of the $i$th source vertex is the $i$th bit of $x$,
  and the value of a gate is defined recursively as the result of
  applying the logical operation of the gate on the values of its \emph{children}, i.e., the vertices with an edge going into the
  gate. The \emph{output} of the circuit on $x$ is the output of the
  sink vertex.

  The \emph{size} of a circuit is the number of gates it contains, and the \emph{depth} is the length of the longest path from a source vertex to the sink vertex.

  An oracle Boolean circuit with respect to oracle
  $O: \{0, 1\}^*\to\{0, 1\}$ additionally has oracle gates,
  which have an arbitrary number of incoming edges. We view the input
  to an oracle gate as a string given by the values of its
  children. The value of an oracle gate is the value of the oracle
  applied to its input string.
  
\end{definition}

We say that an oracle circuit makes \defn{adaptive} oracle queries if
it contains two oracle gates that are connected by a path. This means
that there is some oracle query that depends on the answer to another
oracle query. An oracle circuit that does not make adaptive oracle
queries makes \defn{nonadaptive} oracle queries. A \defn{$d$-adaptive}
oracle circuit is one where the oracle gates are divided into $d$
levels, and an oracle gate in level $i$ can only be connected to
oracle gates in levels $i+1, \ldots, d$, i.e., queries can depend only
on answers corresponding to oracle gates in smaller levels. 
We can view any oracle circuit as $d$-adaptive: a circuit that makes
nonadaptive oracle queries can be viewed as a circuit with one level
of oracle gates, and a circuit that makes adaptive queries as a
circuit with multiple levels. 

Now we formally define what it means for an oracle circuit to be
robust. Recall that what we want from this definition is that the
output of a robust circuit should not change if at most a small
fraction of the answers to its oracle queries are incorrect. Consider
an $n$-input circuit $C$ with access to oracle $O$; we denote this by
$C^O$. On input $x\in\{0, 1\}^n$, $C^O$ makes a sequence of oracle
queries and gets the corresponding answers, which we can denote by the
string $Q = Q(x) = ((x_1, y_1), \ldots, (x_\ell, y_\ell))$,
where for each $i$, $x_i$ is the $i$th oracle query and $y_i = O(x_i)$
the $i$th oracle answer. We call $Q$ the \defn{true query-answer}
string. One could also imagine substituting $Q$ with another
length-$\ell$ string
$Q' = ((x_1', y_1'), \ldots, (x_\ell', y_\ell'))$, where
for all $i$, $x_i'$ is a binary string and $y_i'$ is a bit, though it
may not be the case that $x_i'$ is computed correctly from $x$ and previous oracle answers, or that $y_i' = O(x_i')$---we call a string of this
form a \defn{query-answer} string.  For a query-answer string $S$, let us use $S^{\text{in}}$ (resp. $S^\text{out}$) to denote the string of queries (resp. answers) in $S$. Then we could run the computation of $C$
on $x$ with ``help'' from $Q'^{\text{out}}$ instead, i.e., replacing the oracle
answers as given by $Q^{\text{out}}$ with those given by $Q'^{\text{out}}$. We
will denote this circuit by $C_{Q'}$. Note that
$C^O(x) = C_{Q}(x)$.

We first define what it means for a true query-answer string and a query-answer
string to be ``close,'' which we will then use to define
robustness. Let $Q$ be the true query-answer string for an oracle circuit
$C^O$ on input $x$, and write $Q = (Q_1, \ldots, Q_d)$, where $Q_i$
consists of all query-answer pairs in $Q$ for oracle gates in the $i$th
adaptivity level. We can
decompose $C^O$ level by level into circuits $C_0,\ldots,C_d$ without oracle gates as
follows: $C_0$ maps $x\mapsto Q_1^{\text{in}}$, for $i\in[d-1]$, $C_i$ maps
$(x,Q_1, \ldots, Q_i)\mapsto Q_{i+1}^{\text{in}}$, and $C_d$ maps
$(x,Q_1, \ldots, Q_d)\mapsto C^O(x)$.

\begin{definition}[Closeness]\label{defn:closeness}

  Consider the following ``$\eps$-closeness'' algorithm: given a
  $d$-adaptive oracle circuit $C^O$ with $\ell$ oracle gates, an input $x$,
  and the true query-answer string $Q = (Q_1, \ldots, Q_d)$, we set
  $T_1 = Q_1$ and modify some positions in $T_1$, obtaining a string
  $Q_1'$. Then for $i = 2, \ldots, d$, we set
  $T_{i}^{\text{in}} = C_{i-1}(x, Q_1', \ldots, Q_{i-1}')$ and $T_i = (T_{i}^{\text{in}}, O(T_i^{\text{in}}))$, where $O(T_{i}^{\text{in}})$ is obtained by applying $O$ to each query in $T_{i}^{\text{in}}$, and modify some positions
  in $T_i$, obtaining a string $Q_i'$. If we can obtain
  $Q' = (Q_1', \ldots, Q_d')$ via this algorithm by modifying at most
  $\eps\ell$ positions (across all of the $T_i$'s), then we say that
  $Q'$ is $\eps$-close to $Q$.\footnote{Note that, when the circuit makes
    nonadaptive queries, this definition of closeness is exactly a
    fractional Hamming distance definition, i.e., $Q$ and $Q'$ are
    $\eps$-close if we can turn $Q$ into $Q'$ by changing at most
    $\eps\ell$ of its positions. We use this more involved definition
    because in circuits that make adaptive queries, changing the
    output of an oracle gate can affect what the ``correct'' input is to
    another oracle gate.}
  
\end{definition}  

\begin{definition}[Robustness]\label{defn:robustness}

  We say that $C^O$ is $\eps$-robust if, for any length-$n$ binary
  string $x$, $C_{Q'}(x) = C^O(x)$ for any $Q'$ that is
  $\eps$-close to the true query-answer string $Q$.
  
\end{definition}  

\subsection{Succinct Descriptions of Sets and Functions}
\label{sec:desc}

We next define a notion of succinct representation of circuits. Loosely speaking, a function $f: \bin^n \to \bin$ has a succinct representation if there is a short string $\lrag f$ of poly-logarithmic length that describes $f$. That is, $\lrag f$ can be expanded to a full description of $f$. The actual technical definition is slightly more involved and in particular requires that the full description of $f$ be a logarithmic-depth (i.e. $\NC^1$) circuit:

\begin{definition}[Succinct Description of Functions]\label{def:funcdesc}
  We say that a function $f : \bin^n \to \bin$ of size $s$ has a succinct
  description if there exists a string $\lrag f$ of length $\polylog(n)$
  and a logspace Turing machine $M$ (of constant size, independent of
  $n$) such that on input $1^n$, the machine $M$ outputs a full
  description of an $\NC^1$ circuit $C$ such that for every $x \in \bin^n$
  it holds that $C(\lrag f, x) = f(x)$. We refer to $\lrag f$ as the
  succinct description of $f$.
\end{definition}

We also define a notion of succinct representation for sets $S \subseteq [k]$.
Roughly speaking this means that the set can be described by a string of
length $\polylog(k)$. The formal definition is somewhat more involved:

\begin{definition}[Succinct Description of Sets]\label{def:setdesc}
  We say that a set $S \subseteq [k]$ of size $s$ has a succinct
  description if there exists a string $\lrag S$ of length $\polylog(k)$
  and a logspace Turing machine $M$ such that on input $1^k$, the machine
  $M$ outputs a full description of a depth $\polylog(k)$ and size
  $\poly(s, \log k)$ circuit (of constant fan-in) that on input $\lrag S$
  outputs all the elements of $S$ as a list (of length $s \cdot \log(k)$).
\end{definition}

We emphasize that the size of the circuit that $M$ outputs is
proportional to the actual size of the set $S$, rather than the universe
size $k$.

\subsection{Interactive Proofs}

An interactive proof is an interactive protocol between a
weak verifier algorithm $\mathcal{V}$ and a powerful prover algorithm
$\mathcal{P}$, where $\mathcal{P}$ tries to convince $\mathcal{V}$ of
a statement of the form ``$x\in\mathcal{L}$.''

\begin{definition}[Interactive proof system]\label{defn:ip}

  An interactive proof for a language $\mathcal{L}$ is an interactive
  protocol between a probabilistic polynomial-time verifier algorithm
  $\mathcal{V}$ and a computationally unbounded prover algorithm
  $\mathcal{P}$. On common input $x$, $\mathcal{V}$ and $\mathcal{P}$
  back-and-forth exchange messages in a number of rounds. In each
  round, $\mathcal{V}$ sends $\mathcal{P}$ a message and then
  $\mathcal{P}$ sends $\mathcal{V}$ a message. Both $\mathcal{V}$'s
  and $\mathcal{P}$'s messages can depend on $x$ and any prior
  messages, and $\mathcal{V}$'s messages can additionally depend on
  $\mathcal{V}$'s random bits $r$. At the end of this interaction,
  their messages form a transcript
  $t = (\mathcal{V}(r), \mathcal{P})(x)$, and based on $t$, $r$, and
  $x$, $\mathcal{V}$ decides to accept or reject. The protocol
  satisfies completeness and soundness properties, namely,

  \begin{itemize}

  \item (Completeness.) For every $x\in\mathcal{L}$, there exists an honest prover strategy $\mathcal{P}$ such that $\text{Pr}_r[V(x, t, r) = 1] = 1$, where $t = (\mathcal{V}(r), \mathcal{P})(x)$.

  \item ($\delta$-Soundness.) For every $x\notin\mathcal{L}$ and for every
    (computationally unbounded) prover algorithm $\widetilde{P}$,
    $\text{Pr}_r[V(x, \widetilde{t}, r) = 1]\leq\delta$, where
    $\widetilde{t} = (\mathcal{V}(r), \mathcal{\widetilde{P}})(x)$.
    
  \end{itemize}
  
\end{definition}

The \emph{parameters} of interactive proofs that we are interested in include the prover's running time, the verifier's running time, the number of rounds (i.e., the round complexity) and the total number of bits communicated (i.e., the communication complexity).

In a standard interactive proof, the verifier is constrained to run in
time polynomial in $|x| = n$ and the prover is computationally
unbounded. A \defn{doubly-efficient} interactive proof is one where
the honest prover is constrained to run in polynomial time (though
soundness still holds against computationally unbounded dishonest
provers), and the verifier is even more efficient, running in
near-linear time. In our setting of doubly-efficient interactive
proofs for oracle computation, we additionally require that the
verifier makes only a \emph{sublinear} number of queries to the oracle.

We will make use of a construction of doubly-efficient interactive
proofs due to Goldwasser, Kalai, and Rothblum:

\begin{theorem}[\cite{goldwasser2015delegating}]\label{deIP}

  Let $\mathcal{L}$ be a language that has logspace-uniform Boolean
  circuits of depth $D = D(n)$ and size $S = S(n)$. For any soundness parameter $\sigma \in \NN$, there is a
  doubly-efficient interactive proof for $\mathcal{L}$
  with the following parameters:

  \begin{itemize}

  \item soundness error $2^{-\sigma}$;
  \item communication complexity $O(\sigma\cdot D\cdot\polylog(S))$;
  \item round complexity $O(D\cdot\polylog(S))$;
  \item verifier running time $O(\sigma\cdot(n + D\cdot\polylog(S)))$; and
  \item prover running time $\poly(S, \sigma)$.

  \end{itemize}

\end{theorem}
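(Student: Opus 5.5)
The statement is the Goldwasser--Kalai--Rothblum (GKR) theorem, which the paper imports as a black box. My plan is to follow the standard GKR argument: arithmetize the circuit layer by layer and reduce a claim about the output layer to a claim about the input layer, one layer at a time, using the sumcheck protocol.

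\textbf{Preprocessing and setup.}
\begin{enumerate}
\item Put the depth-$D$, size-$S$ logspace-uniform circuit into layered form with fan-in two. This keeps depth $O(D)$ and size $\poly(S)$.
\item Work over an extension field $\FF$ of characteristic two with $|\FF| = \poly(S)$. Label the gates of layer $i$ by $\{0,1\}^{m}$, with $m = O(\log S)$.
\item Let $\tilde V_i : \FF^m \to \FF$ be the multilinear extension of the gate values at layer $i$.
\item Write the wiring predicates $\mathrm{add}_i$ and $\mathrm{mult}_i$ on $\{0,1\}^{3m}$, plus a similar predicate for negation.
\item The key identity is, for every $z \in \FF^m$,
\[
\tilde V_i(z)=\sum_{p,q\in\{0,1\}^m}\Big(\widetilde{\mathrm{add}}_i(z,p,q)\big(\tilde V_{i+1}(p)+\tilde V_{i+1}(q)\big)+\widetilde{\mathrm{mult}}_i(z,p,q)\,\tilde V_{i+1}(p)\tilde V_{i+1}(q)\Big).
\]
Here the low-degree extensions of the wiring predicates must be computable by the verifier in $\polylog(S)$ time.
\end{enumerate}

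\textbf{Layer-by-layer reduction.}
\begin{enumerate}
\item Start from the claimed output value.
\item For each layer $i$, run sumcheck on the identity above. This reduces the claim ``$\tilde V_i(z)=v$'' to claims about $\tilde V_{i+1}(p^*)$ and $\tilde V_{i+1}(q^*)$ at random points.
\item Merge the two claims into one by restricting $\tilde V_{i+1}$ to the line through $p^*$ and $q^*$. The prover sends this univariate restriction, and the verifier picks a random point on the line.
\item At the end, the verifier checks the final claim about $\tilde V_{D}$ by computing the multilinear extension of $x$ at a point directly, in time $O(n)$.
\end{enumerate}

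\textbf{Soundness.} Each sumcheck round errs with probability $O(\text{degree}/|\FF|)$ by Schwartz--Zippel. The line-reduction step errs with probability $O(m/|\FF|)$. A union bound gives total error $O(D\log S/|\FF|)$, which is a constant or $1/\poly$. To reach $2^{-\sigma}$, either repeat $\sigma$ times in parallel, or take $|\FF|=2^{O(\sigma)}\poly(S)$ so each field element costs $O(\sigma+\log S)$ bits. This gives communication $O(\sigma D\polylog S)$ and rounds $O(D\polylog S)$, since each sumcheck has $O(m)$ rounds. The verifier time is $O(\sigma(n+D\polylog S))$. The prover runs in $\poly(S,\sigma)$ time, since each sumcheck message is a sum over $\poly(S)$ terms.

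\textbf{Main obstacle.} The hard step is verifier efficiency for the wiring predicates. The verifier cannot afford time proportional to $S$, yet it must evaluate $\widetilde{\mathrm{add}}_i$ and $\widetilde{\mathrm{mult}}_i$ at random points.

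The plan here is the GKR route:
\begin{enumerate}
\item Use logspace uniformity to show the wiring predicates are computable by $\polylog(S)$-space machines.
\item Bootstrap by recursively delegating the wiring-predicate evaluation. GKR do this with an interactive proof for these small-space computations, built from a Savitch-style / $\NC$-circuit construction whose own wiring is simple enough to extend directly.
\item Alternatively, invoke a succinct-description hypothesis, as in \cref{def:funcdesc}, that supplies $\NC^1$-computable predicates whose arithmetization the verifier can evaluate in $\polylog$ time.
\end{enumerate}
I expect this uniformity bookkeeping to dominate the proof. The remaining steps are routine sumcheck analysis.
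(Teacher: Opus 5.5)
Your outline is correct but takes a different route: you re-derive GKR, whereas the paper cites it. The paper's proof is two sentences. It takes the protocol of \cite{goldwasser2015delegating} as a black box with the stated communication, round and running-time bounds and constant soundness error. It then applies $\sigma$-fold parallel repetition, which keeps the rounds and multiplies communication and both running times by $\sigma$. Your parallel-repetition step coincides with this. Everything before it (layering, multilinear arithmetization, per-layer sumcheck, line reduction, evaluating the extension of $x$, bootstrapping the wiring predicates) reconstructs the cited theorem. That explains where the $D\cdot\polylog(S)$ terms come from, and shows that the $O(n)$ input term counts field operations, i.e.\ about $n\cdot\polylog(S)$ bit operations. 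But the hardest step, delegating $\widetilde{\mathrm{add}}_i$ and $\widetilde{\mathrm{mult}}_i$ for merely logspace-uniform circuits, is only sketched, so your argument still leans on GKR, just at a lower level.

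Three corrections to the sketch:
\begin{enumerate}
\item The space-to-circuit conversion turns a space-$s$ machine into an auxiliary circuit of size $2^{O(s)}$. So you need the wiring extensions computable in space $O(\log S)$, which logspace uniformity with $|\FF|=\poly(S)$ does give. Space $\polylog(S)$, as you wrote, would give quasi-polynomial auxiliary circuits and a super-polynomial prover.
\item For the same reason, enlarging the field to $|\FF|=2^{O(\sigma)}\poly(S)$ does not work as stated. Evaluating the wiring extensions over that field takes space $O(\sigma+\log S)$, so the auxiliary circuits have size $2^{O(\sigma)}\poly(S)$. That breaks the $\poly(S,\sigma)$ prover bound once $\sigma\gg\log S$. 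Parallel repetition, which the paper uses, avoids this.
\item Your appeal to \cref{def:funcdesc} is not available: a succinct $\NC^1$ description of the wiring is an extra hypothesis beyond logspace uniformity.
\end{enumerate}
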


\begin{proof}
  The protocol of \cite{goldwasser2015delegating} achieves the stated parameters with constant soundness error. Applying $\sigma$-fold parallel repetition reduces the soundness error to $2^{-\sigma}$, while preserving the round complexity and increasing the communication complexity and the prover's and verifier's running times by a factor of $\sigma$.
\end{proof}

\paragraph{Interactive arguments.} We also consider interactive \defn{argument systems}, which are defined the same as
interactive proof systems except that the soundness guarantee is only
required to hold against provers that run in polynomial time. The soundness guarantee that a proof system is required to satisfy is called \defn{statistical soundness}; argument systems are only required to satisfy \defn{computational soundness}. Unlike statistical soundness, with computational soundness, a malicious prover $\MProver$'s success probability (i.e., the probability that $\Verifier$ outputs $1$) depends on the running time of $\MProver$. We give a formal definition in \Cref{sec:prelims:AoK}.




  

\subsection{Interactive Proofs of Proximity}\label{sec:IPP}

Our proof systems for robust circuits make key use of an
interactive proof of proximity (IPP). Loosely speaking, an IPP is a
doubly-efficient interactive proof where the verifier is extremely
efficient, running in only sublinear time, and the soundness
requirement is weakened so that the verifier is only required to
reject inputs that are ``far'' from the language with high
probability, for, e.g., a fractional Hamming distance notion of
distance. We will deal with IPPs for \defn{pair languages}, where the
input to the verifier is a pair $(x, Q)$ consisting of an
\defn{explicit input} $x$ that the verifier has direct access to and
an \defn{implicit input} $Q$ that the verifier has query access to, i.e., the verifier treats $Q$ as an oracle. We
say that $(x, Q)$ is $\eps$-\defn{Hamming-far} from pair language
$\mathcal{L}$ for \defn{proximity parameter} $\eps\in (0, 1]$ if $Q$
differs in at least an $\eps$ fraction of its positions from every
$Q'$ such that $(x, Q')\in\mathcal{L}$, and that it is
$\eps$\defn{-Hamming-close} otherwise.

We denote the random variable of the outcome of an IPP as $\lrag {\Prover(x), \Verifier^Q(x)}$, where the randomness is over the randomness of $\Prover$ and $\Verifier$.

\begin{definition}[Interactive proof of proximity
  \cite{rothblum2013interactive, rothblum2020batch}]\label{def:ipp}

  An interactive proof of proximity for a pair language $\mathcal{L}$
  is an interactive protocol between a probabilistic sublinear-time
  verifier algorithm $\mathcal{V}$ and a polynomial-time prover
  algorithm $\mathcal{P}$. Both $\mathcal{V}$ and $\mathcal{P}$ have
  access to explicit input $x$ and a proximity parameter $\eps$, and
  $\mathcal{V}$ has query access to implicit input $Q$ while
  $\mathcal{P}$ has direct access to $Q$. The protocol satisfies
  completeness and a relaxed notion of soundness, namely

  \begin{itemize}

  \item (Completeness.) For every $(x, Q)\in\mathcal{L}$ and proximity
    parameter $\eps\in (0, 1]$, there exists an honest prover strategy $\mathcal{P}$ such that
    $\Pr[\lrag {\Prover(x), \Verifier^Q(x)} = 1] = 1$.

  \item ($\delta$-Soundness.) For every $(x, Q)$ that is $\eps$-Hamming-far
    from $\mathcal{L}$, and for every (computationally unbounded)
    prover algorithm $\widetilde{\mathcal{P}}$, we have
    $\text{Pr}_r[\lrag {\widetilde{\Prover}(x), \Verifier^Q(x)} = 1]\le\delta$.
    
  \end{itemize}  
  
\end{definition}

\begin{remark}[Input Encoding]
  In IPP, the input can be treated as a binary string, and it can also be regarded as an element in some finite field $\mathbb{F}$. In the latter case, the communication and verifier's running time also grow with $\log |\mathbb{F}|$, assuming addition and multiplication are efficient in that field.
\end{remark}

The best-known IPP construction is given by \citet{rothblum2020batch}.

\begin{theorem}[\cite{rothblum2020batch}]\label{rothblumx2ipp}

  Let $\eps = \eps(m)\in (0, 1]$ be a proximity parameter, let
  $\sigma \in \NN$ be a soundness parameter, and let
  $\mathcal{L}$ be a pair language that is computable by
  logspace-uniform Boolean circuits of depth $D = D(m)\ge\log(m)$ and
  size $S = S(m)\ge m$ with fan-in $2$, where we use $n$ for the
  length of the explicit input and $m$ for the length of the implicit
  input. Then there is a public-coin IPP with $\eps$ proximity for
  $\mathcal{L}$ with the following parameters:

  \begin{itemize}

  \item soundness error $2^{-\sigma}$;
  \item query complexity $q = O(\sigma/\eps)$;
  \item communication complexity $cc = O(\sigma\cdot\eps\cdot m\cdot D\cdot\polylog(S))$;
  \item round complexity $O(D\cdot\polylog(S))$;
  \item verifier running time $O(\sigma\cdot(\eps \cdot m \cdot D \cdot \polylog(S)+(1/\eps)\cdot\polylog(S) + n \log S))$; and
  \item prover running time $\poly(S,\sigma)$.
    
  \end{itemize}

  Furthermore, the verifier can make all of his queries to the implicit input at the end of the interaction. More formally, at the end of the interaction either the verifier rejects or in time $O(\sigma\cdot\eps \cdot m \cdot D \cdot \polylog(S))$ it outputs a succinct description $\lrag Q$ of a set $Q \subseteq [m]$ of size $q$ and a succinct description $\lrag \phi$ of a predicate $\phi: \bin^q \to \bin$ so that its decision predicate given an implicit input $x^{\mathsf{imp}}$ is equal to $\phi(x^{\mathsf{imp}}_Q)$. Furthermore, the predicate can be evaluated in time $q \cdot \polylog(m)$.

\end{theorem}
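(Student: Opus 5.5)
The statement is essentially the main theorem of \cite{rothblum2020batch}, specialized to our parameter regime. I would derive it the same way \cref{deIP} is derived from \cite{goldwasser2015delegating}. First I would take the protocol of \cite{rothblum2020batch} with \emph{constant} soundness error, say $1/2$, for inputs that are $\eps$-Hamming-far from $\mathcal{L}$. I would read off its parameters for a logspace-uniform circuit of depth $D \ge \log m$ and size $S \ge m$:
\begin{itemize}
\item query complexity $O(1/\eps)$;
\item communication $O(\eps \cdot m \cdot D \cdot \polylog(S))$;
\item round complexity $O(D\cdot \polylog(S))$;
\item verifier time $O(\eps m D \polylog(S) + (1/\eps)\polylog(S) + n\log S)$;
\item prover time $\poly(S)$.
\end{itemize}
This base protocol is public-coin, which I would note explicitly.

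Second, I would apply $\sigma$-fold parallel repetition. Soundness here is statistical, against unbounded provers, and the implicit input $Q$ is fixed before the interaction. Hence the acceptance probability of the repeated protocol on any far input is at most $(1/2)^\sigma = 2^{-\sigma}$; this is the standard fact that parallel repetition of interactive proofs reduces error exponentially. Completeness is preserved, since each copy accepts with probability $1$. The round complexity is unchanged. Queries, communication, verifier time, and prover time each grow by a factor of $\sigma$, which gives exactly the stated bounds.

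Third, for the ``furthermore'' clause, I would inspect the \cite{rothblum2020batch} verifier. The verifier's only access to the implicit input happens in the final step, where it checks the prover's claims about the input. After the interaction, that verifier either rejects or computes, from its public coins and the transcript, a set of $O(1/\eps)$ positions together with a local decision predicate on those bits. Both are described by a short seed (the random coins plus polylogarithmically many transcript values), so by logspace uniformity they admit succinct descriptions in the sense of \cref{def:setdesc,def:funcdesc}. In the repeated protocol I would take $\lrag Q$ to be the concatenation of the $\sigma$ set descriptions, whose union has size $q = O(\sigma/\eps)$. I would take $\lrag\phi$ to be the AND of the $\sigma$ individual predicates. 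The time to produce these descriptions is dominated by processing the transcript, namely $O(\sigma \eps m D \polylog(S))$. Evaluating $\phi$ amounts to evaluating $\sigma$ local predicates on $O(1/\eps)$ bits each, which takes $q\cdot\polylog(m)$ time.

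I expect the main obstacle to be this third step: confirming from the internals of \cite{rothblum2020batch} that all input queries really can be deferred to the end, and that the resulting query set and predicate are succinctly describable by logspace-uniform $\NC^1$/polylog-depth circuits rather than just efficiently computable. I would address it by tracing their final ``consistency with the input'' check and showing that it is a fixed uniform circuit applied to the coins and the last prover message. The parallel-repetition and parameter bookkeeping steps are routine.
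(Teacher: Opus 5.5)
Your proposal matches the paper's proof. The paper likewise takes the constant-error IPP of \cite{rothblum2020batch} (obtained there by composing their Theorems 5.6 and 6.1) and applies $\sigma$-fold parallel repetition, noting that the round complexity is preserved while the query complexity, communication complexity, and running times grow by an $O(\sigma)$ factor; your treatment of how the query-set and predicate descriptions combine under repetition goes beyond the paper, which simply inherits the ``furthermore'' clause from the cited theorems.
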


\begin{proof}
  The theorem follows by composing Theorem 5.6 and Theorem 6.1 in \cite{rothblum2020batch} and applying $\sigma$-fold parallel repetition to obtain soundness error $2^{-\sigma}$. Parallel repetition preserves the round complexity, while increasing the query and communication complexities and the running times by a factor of $O(\sigma)$.
\end{proof}

\subsection{Unique-Decoding Checksums}\label{sec:prelim:cksum}

Let $\rho \in \NN$ be a \emph{deviation radius} and $\cksumT \in \NN$ be a checksum-length parameter.
We use syndromes of linear error-correcting codes as checksums.
\begin{definition}[Unique-Decoding Checksums]
    \label{def:unique-decoding-checksum}
    Let $k,\rho,\cksumT \in \NN$ and let $\FF$ be a field.
    A function $\cksum_\rho:\bin^k \to \bin^\cksumT$ is a \emph{$\rho$-unique decoding checksum function}
    if for any $\bm m \in \bin^k$,
    and for any $\bm m', \bm m'' \in \bin^k$ that are both $\rho$-close to $\bm m$ and $\bm m' \neq \bm m''$,
    $\cksum_\rho(\bm m') \neq \cksum_\rho(\bm m'')$.
\end{definition}
The term \emph{unique decoding} refers to the fact that if we know $\bm m'$ is $\rho$-close to some (fixed) $\bm m$,
then we can uniquely determine $\bm m'$ given $\cksum_\rho(\bm m')$.

\begin{proposition}[Linear-code checksums]
    Let $\mathcal{C}\subseteq\FF^k$ be a linear code of minimum distance $D$ and codimension $\cksumT$, and $H\in\FF^{\cksumT\times k}$ be a parity-check matrix for $\mathcal{C}$. 
    Assume that $|\FF| = 2^{s}$ and $\mathsf{Enc}: \bin^s \to \FF, \mathsf{Dec}: \FF \to \bin^s$ are linear maps converting between a binary string and the corresponding field element. 
    For any $\rho\in\NN$ satisfying $2\rho < D$,
    the syndrome map
    \[
        \cksum_\rho: \bin^{ks} \to \bin^{s\cksumT}, \qquad
        \cksum_\rho(\bm m) \coloneqq \mathsf{Dec} (H \mathsf{Enc}(\bm m)),
    \]
    where $\mathsf{Enc}$ is applied blockwise and $\mathsf{Dec}$ coordinate-wise,
    is a $\rho$-unique decoding checksum function.
    Moreover, with $\tO$ hiding polylogarithmic factors, $\cksum_\rho$ can be evaluated in $\tO(k\cksumT s)$.
\end{proposition}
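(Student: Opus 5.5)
The plan is to argue by contradiction through linearity of the syndrome map, and then to bound the evaluation time directly. One point needs fixing first: the closeness notion. Here ``$\rho$-close'' is read as Hamming distance at most $\rho$ over field symbols. Concretely, $\bm m \in \bin^{ks}$ is viewed as $k$ blocks of $s$ bits, and the distance between $\bm m'$ and $\bm m''$ is the number of blocks in which they differ. This is the reading under which $2\rho < D$ is the right hypothesis.

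Note that bit-level distance at most $\rho$ implies block-level distance at most $\rho$. So if the paper instead intends bit-level closeness, the argument below still applies verbatim.

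\emph{Injectivity on the radius-$\rho$ ball.} Suppose $\bm m', \bm m''$ are both $\rho$-close to $\bm m$ and $\bm m' \neq \bm m''$. Set $u = \mathsf{Enc}(\bm m')$ and $v = \mathsf{Enc}(\bm m'')$, with $\mathsf{Enc}$ applied blockwise.

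\begin{itemize}
\item Since $\mathsf{Enc}$ is a bijection $\bin^s \to \FF$ (a linear map between spaces of equal size $2^s$, which I take to be an isomorphism as implicit in the statement), we get $u \neq v$.
\item The Hamming weight of $u - v$ over $\FF$ equals the number of differing blocks. By the triangle inequality this is at most $2\rho < D$.
\end{itemize}

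Now assume for contradiction that $\cksum_\rho(\bm m') = \cksum_\rho(\bm m'')$. Because $\mathsf{Dec}$ is injective coordinatewise, this gives $Hu = Hv$, so $H(u-v) = 0$. Hence $u - v \in \mathcal{C}$ is a nonzero codeword of weight less than $D$, contradicting the minimum distance. So the checksums differ, and $\cksum_\rho$ is a $\rho$-unique decoding checksum function.

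\emph{Running time.} The computation has three stages:

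\begin{itemize}
\item Encoding the $k$ blocks costs $O(ks)$ bit operations, or $\tO(ks)$ if $\mathsf{Enc}$ is a general linear map on $s$ bits.
\item Computing $Hu$ takes $\cksumT k$ multiplications and additions in $\FF$. Each operation costs $\tO(s)$ with fast arithmetic in $\mathbb{F}_{2^s}$, giving $\tO(k\cksumT s)$ in total.
\item Decoding the $\cksumT$ output coordinates costs $\tO(\cksumT s)$.
\end{itemize}

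The total is $\tO(k\cksumT s)$.

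The only delicate step is the bookkeeping above: stating precisely which distance ``$\rho$-close'' refers to, and checking that $\mathsf{Enc}$ and $\mathsf{Dec}$ are bijective so that equality of bit strings transfers to equality of field vectors. Everything else is the standard syndrome-decoding argument.
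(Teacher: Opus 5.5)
Your proof is correct and is essentially the paper's argument: distinct $\bm m',\bm m''$ in the radius-$\rho$ ball differ in at most $2\rho<D$ blocks, so equal checksums would put the nonzero vector $\mathsf{Enc}(\bm m')-\mathsf{Enc}(\bm m'')$ into $\ker H=\mathcal{C}$, contradicting the minimum distance. The paper reaches the same point by writing $\cksum_\rho(\bm z)=\bm 0$ for $\bm z=\bm m'-\bm m''$ via linearity, while you use injectivity of $\mathsf{Dec}$, and your handling of the closeness convention and of the bijectivity of $\mathsf{Enc},\mathsf{Dec}$ makes explicit what the paper leaves implicit. Your running-time accounting is more detailed than the paper's one-line ``compute the linear forms'' justification, with one small inaccuracy: an arbitrary $\FF_2$-linear $\mathsf{Enc}$ costs $O(s^2)$ per block rather than $\tO(s)$, so that stage implicitly relies on the canonical bit representation of field elements.
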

\begin{proof}
    Fix $\bm m\in\bin^{ks}$ and let $\bm m',\bm m''\in\bin^{ks}$ be distinct vectors that are both $\rho$-close to $\bm m$.
    Then $\bm z=\bm m'-\bm m''$ is nonzero and has Hamming weight at most $2\rho$.
    If $\cksum_\rho(\bm m')=\cksum_\rho(\bm m'')$,
    then $\cksum_\rho(\bm z) = \bm 0$ by linearity,
    so $\mathsf{Enc}(\bm z)\in\ker(H)=\mathcal{C}$.
    This is a nonzero codeword of Hamming weight at most $2\rho<D$,
    contradicting the minimum distance of $\mathcal{C}$.
    Hence $\cksum_\rho(\bm m')\neq \cksum_\rho(\bm m'')$.
\end{proof}

\begin{lemma}[Reed--Solomon checksums]\label{lem:unique-decoding}
    Let $k,\rho \in \NN$ and let $\FF$ be a finite field with $\abs{\FF} = 2^s \ge k$.
    Set $\cksumT \coloneqq 2\rho$.
    Then a $\rho$-unique decoding checksum function $\cksum_\rho: \bin^{k} \to \bin^{s \cksumT}$ exists.
    With $\tO$ hiding $\polylog(k)$ factors,
    $\cksum_\rho: \bin^k \to \bin^{s \cksumT}$ can be evaluated in $\tO(k \cksumT s)$ time.
\end{lemma}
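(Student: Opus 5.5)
The plan is to instantiate the preceding Proposition (linear-code checksums) with a Reed--Solomon code. The map acts on $\bin^k$, so we view each bit as a field element: we apply $\mathsf{Enc}$ to each bit, padding it to $s$ bits or simply embedding $\{0,1\}\subseteq\FF$. This gives a vector in $\FF^k$, and changing one input bit changes exactly one coordinate, so Hamming distance $\rho$ over bits stays Hamming distance $\rho$ over $\FF^k$.

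Since $|\FF| \ge k$, we can fix distinct evaluation points $\alpha_1,\dots,\alpha_k\in\FF$. Let $\mathcal{C}=\mathrm{RS}[k, k-2\rho]$ be the set of evaluation vectors of polynomials of degree $<k-2\rho$. This code is linear of dimension $k-2\rho$, so its codimension is $\cksumT=2\rho$, and its minimum distance is $D=2\rho+1>2\rho$. If $2\rho\ge k$, we instead take $\mathcal{C}=\{0\}$ with $H$ the identity padded by zero rows, which is trivially fine. A standard parity-check matrix is the generalized Vandermonde matrix $H_{j,i}=v_i\alpha_i^{j}$ for $j=0,\dots,2\rho-1$, where $v_i=\prod_{i'\ne i}(\alpha_i-\alpha_{i'})^{-1}$ (the dual of an RS code is a generalized RS code). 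Alternatively we can avoid the multipliers: any nonzero $\bm z$ of weight at most $2\rho$ with $H\bm z=0$ gives a vanishing combination of at most $2\rho$ columns of a $2\rho\times k$ Vandermonde-type matrix, and such columns are linearly independent. This independence is really all that the Proposition's proof uses. Applying the Proposition then gives the $\rho$-unique decoding property with output length $s\cksumT$.

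For the running time, computing $H\mathsf{Enc}(\bm m)$ directly takes $k\cdot 2\rho$ field operations. Each field operation in $\FF_{2^s}$ costs $\tO(s)$, or $\poly(s)$ naively, and computing the powers $\alpha_i^j$ incrementally keeps the total at $\tO(k\cksumT s)$. We take $s=O(\log k)$ where appropriate, so per-operation costs are absorbed into the polylog factors. The main point needing care is justifying that the $2\rho$ rows of the Vandermonde parity-check matrix give minimum distance exceeding $2\rho$. This follows because any $2\rho$ columns $(\alpha_i^j)_{j<2\rho}$ form an invertible Vandermonde matrix, since the $\alpha_i$ are distinct. Apart from that step, the argument is bookkeeping.
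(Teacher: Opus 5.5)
Your proposal is correct and takes essentially the same route as the paper: instantiate the linear-code syndrome Proposition with a Reed--Solomon parity-check matrix when $2\rho<k$, fall back to a zero-padded identity map when $2\rho\ge k$, and bound the running time by evaluating the $2\rho$ defining linear forms. The differences are cosmetic: you embed one bit per field coordinate where the paper pads $\bm m$ to $\bin^{ks}$ before blockwise encoding, and you check the distance directly from invertibility of square Vandermonde minors rather than citing the Reed--Solomon minimum distance.
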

\begin{proof}
    If $2\rho < k$,
    let $H\in\FF^{2\rho\times k}$ be a parity-check matrix of a Reed--Solomon code over $\FF$ with block length $k$,
    dimension $k-2\rho$,
    and minimum distance $2\rho+1$.
    Define $\cksum_\rho(\bm m)\coloneqq \mathsf{Dec}(H \mathsf{Enc}(\bm m || \bm 0))$, where $\bm 0$ is padding with appropriate length.
    By the previous proposition, this is a $\rho$-unique decoding checksum function.

    If $2\rho\ge k$,
    let $\cksum_\rho$ be the identity map on $\bin^k$ padded with $s\cksumT-k$ zero coordinates.
    Then $\cksum_\rho$ is injective, so it is $\rho$-unique decoding.

    The evaluation bound follows by computing the $\cksumT$ linear forms defining $\cksum_\rho$.
\end{proof}


\subsection{Interactive Arguments and Arguments of Knowledge}\label{sec:prelims:AoK}

We consider interactive arguments and arguments of knowledge.  

Let $\mathcal{R} \subseteq \bin^* \times \bin^* \times \bin^*$ be a ternary relation. If $(\pp, x, w) \in \Rel$, we say that $\pp$ are the public parameters, $x$ is a statement and $w$ is a witness for $x$. Define the language $\mathcal{L}$ as $\Set{(\pp, x): \exists w \text{ s.t. } (\pp, x, w) \in \mathcal{R}}$.

\begin{definition}[Interactive argument system]
  Let $\ell \ge 0$ be an integer. A $(2\ell + 1)$-message public-coin argument system $\Pi = (\Setup, \calP, \calV)$ for a relation $\Rel$ consists of a PPT (probabilistic polynomial time) algorithm $\Setup$ and a $(2\ell + 1)$-message protocol between an interactive PPT prover $\calP$ and an interactive PPT verifier $\calV$ associated with a tuple $(X, W, (Z_{i-1}, C_i)_{i \in [\ell]}, Z_\ell)$, with the following properties:

\begin{itemize}
    \item The $\Setup$ algorithm takes as input the security parameter $1^\lambda$ and outputs some public parameters $\pp$.
    \item Both $\calP$ and $\calV$ receive as input the public parameters $\pp$ and a statement $x_0 = x \in X$. The prover $\calP$ additionally receives a witness $w_0 = w \in W$.
    \item The public parameters $\pp$, the statement $x_0$, and the $2\ell + 1$ messages sent by $\calP$ and $\calV$ in the protocol, are collectively called a transcript, labelled as
    \[
    (\pp, x_0, z_0, c_1, \dots, z_{\ell-1}, c_\ell, z_\ell),
    \]
    where $z_i \in Z_i$ is sent by $\calP$ and $c_i \in C_i$ is sent by $\calV$.
    \item The challenges $c_i$ are sampled by $\calV$ uniformly at random from $C_i$.
\end{itemize}

\noindent A transcript $(\pp, x_0, z_0, c_1, \dots, z_{\ell-1}, c_\ell, z_\ell)$ is said to be accepting for $\Pi$ if $\calV(\pp, x_0, z_0, c_1, \dots, z_{\ell-1}, c_\ell, z_\ell) = 1$ holds.
\end{definition}

Now we define completeness, soundness and knowledge soundness.

\begin{definition}[Completeness]
  An argument system $\Pi = (\Setup, \calP, \calV)$ for the relation $\Rel$ has statistical completeness with correctness error $\epsilon$ if for all adversaries $\calA$,
\[
\Pr \left[ b = 0 \land (\pp, x, w) \in \Rel \; \middle| \; 
\begin{array}{c}
\pp \leftarrow \Setup(1^\lambda) \\
(x, w) \leftarrow \calA(\pp) \\
(\tr, b) \leftarrow \langle \calP(\pp, x, w), \calV(\pp, x) \rangle
\end{array}
\right] \le \epsilon(\lambda).
\]
Furthermore, we say that $\Pi$ satisfies perfect completeness if $\epsilon = 0$.

\end{definition}

\begin{definition}[Soundness]
  An argument system $\Pi = (\Setup, \calP, \calV)$ for the relation $\Rel$ has computational soundness if for all stateful PPT adversary $\calP^*$, there exists a function $\delta$ negligible in $\lambda$, such that:
\[
\Pr \left[ b = 1 \land (\pp, x) \notin \lang \; \middle| \; 
\begin{array}{c}
\pp \leftarrow \Setup(1^\lambda) \\
(x, \mathsf{st}) \leftarrow \calP^*(\pp) \\
(\tr, b) \leftarrow \langle \calP^*(\pp, x, \mathsf{st}), \calV(\pp, x) \rangle
\end{array}
\right] \le \delta(\lambda).
\]
\end{definition}

\begin{definition}[Knowledge soundness]
  An argument system $\Pi = (\Setup, \calP, \calV)$ is knowledge sound with knowledge error $\kappa$ for the relation $\Rel^*$ if there exists an expected PPT extractor $\calE$ such that for any stateful PPT adversary $\calP^*$:
\[
\Pr \left[ b = 1 \land (\pp, x, w) \notin \Rel^* \; \middle| \; 
\begin{array}{c}
\pp \leftarrow \Setup(1^\lambda) \\
(x, \mathsf{st}) \leftarrow \calP^*(\pp) \\
(\tr, b) \leftarrow \langle \calP^*(\pp, x, \mathsf{st}), \calV(\pp, x) \rangle \\
w \leftarrow \calE_{\calP^*}(\pp, x)
\end{array}
\right] \le \kappa(\lambda).
\]
Here, the extractor $\calE$ has a black-box oracle access to the (malicious) prover $\calP^*$ and can rewind it to any point in the interaction.
\end{definition}

The classical Kilian's protocol \cite{kilian1992note} is an argument of knowledge for any NP language, i.e., $\Rel = (\bot, x, w)$ where $(x, w)$ is a valid instance-witness pair for some NP relation. Furthermore, Kilian's protocol has the following succinctness property: the communication complexity is $\poly(\secp)$, and the running time of the verifier is $\poly(\secp) \cdot |x|$.

\begin{theorem}\label{thm:kilian}
  Assuming the existence of collision-resistant hash functions, there exists a succinct argument of knowledge for any NP relation $\Rel$ with knowledge error negligible in the security parameter $\lambda$.
\end{theorem}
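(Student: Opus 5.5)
Since \cref{thm:kilian} is the classical result of \cite{kilian1992note}, I will not reprove the underlying PCP machinery from scratch; I will assemble it from two standard ingredients. \textbf{First}, the PCP theorem \cite{arora1998probabilistic}, in a form with proofs of knowledge: for every NP relation $\Rel$ there is a PCP whose honest proof $\pi$ is computable in polynomial time from $(x,w)$. The verifier tosses $O(\log |x|)$ coins, reads $O(1)$ symbols of $\pi$, and has constant soundness error. There is also a polynomial-time extractor that, given any $\pi$ accepted with probability above the soundness error, decodes a witness $w$ with $(x,w)\in\Rel$. \textbf{Second}, a Merkle tree built from a collision-resistant hash family: $\Setup(1^\secp)$ samples a hash key $\hk$, and $\pp=\hk$.

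The protocol runs as follows. The prover computes $\pi$ and sends the Merkle root $\rt$ of $\pi$. The verifier sends random coins for $\Theta(\secp)$ independent PCP repetitions. The prover answers each queried position with the symbol and its authentication path. The verifier checks every path against $\rt$ and runs the PCP decision predicate on each repetition.

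Completeness is immediate from PCP completeness. For succinctness, the messages are the root, $O(\secp)$ coin strings of length $O(\log|x|)$, and $O(\secp)$ openings of length $O(\secp\log|\pi|)$. This gives $\poly(\secp)$ communication, using $\log|x|\le\secp$, or else absorbing it into the polylogarithmic factors. The verifier's time is $\poly(\secp)\cdot|x|$, dominated by evaluating the PCP predicate, which depends on $x$.

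The knowledge-soundness argument proceeds in three steps. The extractor $\calE_{\calP^*}$ first runs $\calP^*$ to obtain $\rt$. It then rewinds $\calP^*$ many times with fresh challenges and records every opened position from accepting runs. If two accepting runs open the same position to different values under the same $\rt$, the two authentication paths yield a hash collision. So, except with negligible probability, the recorded openings define a consistent partial string $\tilde\pi$, and $\calE$ fills the unopened positions arbitrarily. Finally, $\calE$ applies the PCP knowledge extractor to $\tilde\pi$.

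The main obstacle is arguing that $\tilde\pi$ is accepted by the PCP verifier with probability above the soundness threshold whenever $\calP^*$ convinces with non-negligible probability. I would use an averaging argument of the kind in \cite{kilian1992note}. Call a single-repetition challenge \emph{good} if $\calP^*$ answers it acceptingly with noticeable probability. After $\poly(\secp)/\Pr[\text{accept}]$ rewinds, all good challenges have been opened correctly with overwhelming probability. Parallel repetition then implies that if fewer than a constant fraction of the PCP coins were good, acceptance would be $2^{-\Omega(\secp)}$. To keep $\calE$ in expected polynomial time, I would run the rewinding only after an accepting first run, with the number of rewinds scaled to the empirical acceptance rate. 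This is the standard expected-time trick, and it yields negligible knowledge error $\kappa(\secp)$.
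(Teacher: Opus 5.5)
Your proposal is correct and follows the same route as the paper, which gives no proof of its own and simply invokes Kilian's PCP-plus-Merkle-tree protocol with a rewinding extractor. Two refinements make your averaging step the standard analysis: define ``good'' per repetition coordinate with a threshold of about $\Pr[\text{accept}]/(2\secp)$, and scale the number of rewinds by the $2^{O(\log|x|)}$ PCP coin strings rather than only $\poly(\secp)/\Pr[\text{accept}]$. One efficiency caveat: the generic PCP theorem does not give the $\poly(\secp)\cdot|x|$ verifier time, because its verifier runs in time polynomial in the NP-verification time, which is $\poly(S)$ in the paper's applications; you need an efficiently verifiable (holographic) PCP whose verifier runs in time $|x|$ times a polylogarithm of that verification time, as in Kilian's construction.
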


\subsection{Hash Trees}

Our argument system for robust oracle circuits makes use of a hash tree, which is a verifiable hash function that supports local openings (i.e., it allows a prover to commit to a list of elements and later open any individual element) that can be instantiated from any collision-resistant hash function~\cite{C:Merkle89a}.

\begin{definition}\label{def:hash_tree}
  A hash tree consists of a tuple of algorithms $(\htgen, \hthash, \htprove, \htver)$ with the following syntax:
\begin{itemize}
    \item $\htgen(\secparam, S) \to \hk$. This is a randomized algorithm that takes as input a security parameter $\sec$, and a space bound $S$, and outputs a hash key $\hk$. We implicitly assume that $\hk$ includes $\secparam, S$.

    \item $\hthash(\hk, D) \to \rt$. This is a deterministic algorithm that on input a hash key $\hk$ and a database $D \in \bin^S$ outputs a root $\rt$. 
    
    \item $\htprove(\hk, D, i) \to \rho$. This is a deterministic algorithm that on input a hash key $\hk$, a database $D \in \bin^S$, an index $i \in [S]$, outputs an opening proof $\rho$.
    When a set $I \subseteq [S]$ is given as input, we interpret this as providing an opening proof for every $i \in I$. 
    
    \item $\htver(\hk, \rt, i, v,\rho) \to b$. This is a deterministic algorithm that on input a hash key $\hk$, a hash tree root $\rt$, an index $i \in [S]$, a bit $v \in \bin$, and an opening proof $\rho$, outputs a bit $b \in \bin$ indicating whether to accept or reject the opening proof.
    When a set $I \subseteq [S]$ and values $v \in \bin^{|I|}$ are given as input, we interpret this as verifying, for every $j \in [|I|]$, the proof for the value $v[j]$ at the $j$th smallest index in $I$. 
\end{itemize}

\noindent We require $(\htgen, \hthash, \htprove, \htver)$ to satisfy the following properties:
\begin{itemize}
    \item \textbf{Opening completeness.} For any $\sec,S \in \bbN$, $i \in [S]$, database $D \in \bin^S$, it holds that 
    $$\Pr\left[
    \begin{array}{l}
    \htver(\hk, \rt, i, D[i], \rho) = 1 
    \end{array}
    :
    \begin{array}{l}
    \hk \gets \htgen(\secparam, S) \\
    \rt = \hthash(\hk, D) \\
    \rho = \htprove(\hk, D, i)
    \end{array}
    \right] = 1.$$
    \item \textbf{Efficiency.}
    In the opening completeness experiment above for $S \le 2^\sec$, $\htgen$ and $\htver$ run in $\poly(\secp)$ time, and $\hthash$ and $\htprove$ run in $S \cdot \poly(\secp)$ time. Also we require $|\hk| \leq \poly(\secp)$ and $|\rho| \leq \poly(\secp)$.

    \item \textbf{Binding.} For any non-uniform polynomial-time algorithm $A = \{A_\sec\}_{\sec\in\bbN}$ and polynomial $S$, there exists a negligible function $\mu$ such that for all $\sec\in\bbN$, it holds that 
    $$\Pr\left[
    \begin{array}{l}
    v \neq v' \\ 
    \wedge \; 1 = \htver(\hk, \rt, i, v, \rho) \\ 
    \wedge \; 1 = \htver(\hk, \rt, i, v', \rho') 
    \end{array}
    :
    \begin{array}{l}
    \hk \gets \htgen(\secparam, S(\sec)) \\
    (\rt, i, v, v', \rho, \rho') \gets A_\sec(\hk) 
    \end{array}
    \right] \le \mu(\sec).$$
\end{itemize}
\end{definition}

\begin{remark}
  For convenience, we may take the characters in the database $D$ as some finite set $\Sigma$ instead of just binary bits. The definition and properties of hash trees can be easily adapted to this setting, with a $\log |\Sigma|$ overhead on all efficiency parameters.
\end{remark}

\subsection{Polynomial Commitments}\label{sec:prelim:poly_commitment}

Our argument system for the setting where the oracle is low-degree uses a special commitment scheme called a polynomial commitment, which allows a prover to commit to a degree-bounded polynomial and later open evaluations of the polynomial at specific points, along with (interactive) proofs that the evaluations are correct. 

\begin{definition}
  A (non-interactive) commitment scheme over $\mathcal{M}$ is a tuple of polynomial-time probabilistic algorithms $\algo{CM} = (\algo{Setup}, \algo{Commit}, \algo{Open})$ with the following syntax.
\begin{itemize}
    \item[--] $\algo{Setup}(1^\lambda, d) \to \algo{pp}$: Sample public parameters given a security parameter $\lambda$ and message length $d$.
    \item[--] $\algo{Commit}(\algo{pp}, f) \to (C, \algo{st})$: Use the public parameters $\algo{pp}$ to compute a commitment $C$ to a message $f \in \mathcal{M}$ and an auxiliary state $\algo{st}$.
    \item[--] $\algo{Open}(\algo{pp}, C, f, \algo{st}) \to b$: Takes public parameters $\algo{pp}$, a commitment $C$, a message $f \in \mathcal{M}$, and an auxiliary state $\algo{st}$, and outputs a bit $b$ indicating whether $C$ is a valid commitment to $f$ under $\algo{pp}$.
\end{itemize}

\noindent We require commitment schemes to satisfy the following completeness and (relaxed) binding properties.
\end{definition}

\begin{definition}[Completeness]
  A commitment scheme $\algo{CM} = (\algo{Setup}, \algo{Commit}, \algo{Open})$ satisfies completeness if for all $\lambda, d \in \mathbb{N}$, and for every $f \in \mathcal{M}$
\[
\Pr \left[ \algo{Open}(\algo{pp}, C, f, \algo{st}) = 1 \; \middle| \; 
\begin{array}{c}
\algo{pp} \leftarrow \algo{Setup}(1^\lambda, d) \\
(C, \algo{st}) \leftarrow \algo{Commit}(\algo{pp}, f)
\end{array}
\right] \ge 1 - \negl(\lambda).
\]
\end{definition}

\begin{definition}[Binding]
  A commitment scheme $\algo{CM} = (\algo{Setup}, \algo{Commit}, \algo{Open})$ satisfies relaxed binding if for every PPT adversary $\mathcal{A}$,
\[
\Pr \left[ 
\begin{gathered}
f \neq f' \text{ with } f, f' \in \mathcal{M} \\
\land \\
\algo{Open}(\algo{pp}, C, f, \algo{st}) = \algo{Open}(\algo{pp}, C, f', \algo{st}') = 1
\end{gathered} 
\; \middle| \; 
\begin{array}{c}
\algo{pp} \leftarrow \algo{Setup}(1^\lambda, d) \\
(C, (f, \algo{st}), (f', \algo{st}')) \leftarrow \mathcal{A}(\algo{pp})
\end{array}
\right] \le \negl(\lambda).
\]
\end{definition}


Now we define an extractable polynomial commitment.

\begin{definition}[Extractable polynomial commitment scheme]
  Let $\text{PC} = (\Setup_{\text{CM}}, \Commit, \Open, \Setup_{\text{IP}}, \calP, \calV)$ be a tuple of algorithms. $\text{PC}$ is an extractable polynomial commitment scheme for function class $\calF$ if
\begin{itemize}
    \item $(\Setup_{\text{CM}}, \Commit, \Open)$ is a commitment scheme over the function class
    \[
    \calM := \calF.
    \]
    \item $(\Setup_{\text{IP}}, \calP, \calV)$ is an argument system for the relation
    \[
    (\pp, (\pp_{\text{CM}}, C, \bx, \bu), (f, \st)) \in \Rel \quad \Longleftrightarrow \quad \Open(\pp_{\text{CM}}, C, f, \st) = 1 \land f(\bx) = \bu.
    \]
\end{itemize}
\noindent The class of functions $\calF$ supported by a polynomial commitment scheme will be a set of polynomials. We say that the polynomial commitment scheme satisfies completeness and knowledge soundness if $(\Setup_{\text{IP}}, \calP, \calV)$ is complete and knowledge sound respectively. 
\end{definition}

\begin{theorem}[\cite{C:CMNW24}]\label{thm:prelim_poly_commitment}
  Assuming the standard (Module)-SIS assumption, there exists an extractable polynomial commitment scheme with $\poly(\secp, \log L)$ communication and verification times, where $L$ is the degree bound of the committed polynomial. Furthermore, the number of rounds of the local opening argument system is $O(\log L)$.
\end{theorem}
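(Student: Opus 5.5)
This statement is imported from \cite{C:CMNW24}, so my plan is to instantiate it from that construction and check that its guarantees match the interface in \cref{sec:prelim:poly_commitment}. I would not reprove it from scratch.

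\textbf{Commitment.} The scheme of \cite{C:CMNW24} commits to the coefficient vector of a polynomial of degree at most $L$. It does so with a tree of Ajtai-style (Module-SIS) commitments: the coefficients are split into blocks, each block is committed by a short-vector matrix product, and the decomposed commitments are recommitted level by level. The result is a commitment $C$ of size $\poly(\secp)$ and a state $\st$ that records the short openings along the tree.

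\textbf{Completeness and relaxed binding.} $\Open(\pp_{\text{CM}},C,f,\st)$ checks the tree relations together with norm bounds. Completeness is immediate, with the norm slack chosen so that honest openings always pass. For relaxed binding, I would argue that two valid openings to $f\neq f'$ give, at the first tree node where they differ, two distinct short preimages of the same image. Their difference is a short nonzero Module-SIS solution. This uses the ``relaxed'' form of binding, in which openings may carry small slack factors, since that is what the extractor will output.

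\textbf{Evaluation argument and knowledge soundness.} Next I would take $(\Setup_{\text{IP}},\calP,\calV)$ to be the evaluation protocol of \cite{C:CMNW24}. Each round folds the committed vector in half using a random challenge, while the claimed value $f(\bx)=\bu$ is updated through a linear map. After $O(\log L)$ rounds the remaining instance has constant size and is sent in the clear. This gives the round bound directly. Communication and verifier time are $\poly(\secp)$ per round, hence $\poly(\secp,\log L)$ overall, because $\calV$ only handles commitments and challenges of size $\poly(\secp)$.

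For knowledge soundness, I would invoke coordinate-wise special soundness. Given a tree of accepting transcripts with distinct challenges, one solves for the pre-fold vector in each round, as in Bulletproofs-type extraction, with Module-SIS ensuring that the norm growth stays bounded. A standard tree-extraction argument, such as a tree-finding lemma for multi-round public-coin protocols, then yields an expected-polynomial-time extractor with negligible knowledge error. The extracted pair $(f,\st)$ satisfies $\Open=1$ and $f(\bx)=\bu$, so the relation $\Rel$ holds.

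\textbf{Main obstacle.} I expect the hard step to be this extraction: keeping the extractor's running time and the norm blow-up polynomial across $O(\log L)$ rounds of folding, so that the extracted witness is still a valid relaxed opening. I would defer exactly this step to the analysis in \cite{C:CMNW24} rather than reproduce it.
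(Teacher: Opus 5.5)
Your proposal is correct and follows the paper's approach: the paper gives no proof of its own and imports the theorem directly from \cite{C:CMNW24}. Your sketch (a Module-SIS commitment tree with Merkle-style relaxed binding, a split-and-fold evaluation argument giving $O(\log L)$ rounds and $\poly(\secp,\log L)$ verification, and coordinate-wise special soundness for extraction, with the norm analysis deferred to the cited work) is a reasonable summary of that construction, not a different argument.
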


\section{Doubly-Efficient Interactive Proofs for Robust Oracle Circuits}\label{sec:robust}
Here we present our doubly-efficient single-prover
interactive proof and argument systems for robust oracle circuits. We
first present a proof system for robust circuits that make nonadaptive
oracle queries in \cref{sec:non-adaptive} below. In
\cref{sec:adaptive}, we extend this to a proof system for general robust circuits (that may make adaptive queries). In \cref{sec:arg}, we give an argument system for general robust circuits.

\subsection{A Proof System for Circuits Making Nonadaptive Oracle Queries}\label{sec:non-adaptive}

\begin{theorem}\label{thm:proof-nonadaptive}

  Let $\eps = \eps(n)\in (0, 1]$, $\sigma \in \NN$, and let
  $\mathcal{L}$ be a language decidable by $\eps$-robust logspace-uniform
  oracle Boolean circuits with $\ell = \ell(n)$ nonadaptive oracle gates, depth
  $D = D(n)\ge\log(n)$, and size $S = S(n)\ge n$, where all gates
  except possibly the oracle gates have fan-in $2$. We assume that the query length is bounded by $m = \poly(n)$; and
  $\eps \ge 1/\ell$, as otherwise $\eps$-robustness holds vacuously
  (see \cref{defn:robustness}). Then there is a
  doubly-efficient interactive proof system for $\mathcal{L}$ with the
  following parameters:

  \begin{itemize}

  \item soundness error $2^{-\sigma}$;
  \item query complexity $q = O(\sigma/\eps)$;
  \item communication complexity $cc = O(\sigma \cdot m \cdot \eps\cdot\ell \cdot D\cdot\polylog(S) + (\sigma / \eps) \cdot m \cdot \polylog(\ell))$;
  \item round complexity $D\cdot\polylog(S)$;
  \item verifier running time $O(\eps\ell \cdot D +1/\eps)\cdot \sigma \cdot m \cdot\polylog(S) + O(\sigma n\log S)$; and
  \item prover running time $\poly(S, \sigma)$.
    
  \end{itemize}  

\end{theorem}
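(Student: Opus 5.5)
The plan is to follow the technical overview: combine an IPP (\cref{rothblumx2ipp}) on the query-answer string with a DEIP (\cref{deIP}) that certifies the queries the verifier actually reads. Let $Q=((x_1,y_1),\ldots,(x_\ell,y_\ell))$ be the true query-answer string of $C^O$ on $x$. Encode $Q$ as an implicit input of length $\ell(m+1)$, treating each pair $(x_i,y_i)$ as a single symbol over an alphabet of size $2^{m+1}$, so Hamming distance is measured in units of query-answer pairs; this costs an $O(m)$ factor in communication and verifier time.

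Define the pair language $\mathcal{L}'=\{(x,Q'): C_{Q'}(x)=1\}$. Since $C_{Q'}$ only replaces oracle answers by the bits $Q'^{\text{out}}$, it is a logspace-uniform circuit of depth $O(D)$ and size $O(S)$. The protocol is as follows.

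\emph{Step 1.} The prover and verifier run the IPP of \cref{rothblumx2ipp} for $\mathcal{L}'$ with proximity parameter $\eps$ and soundness parameter $\sigma+1$. By the ``furthermore'' clause, at the end the verifier holds a set $I\subseteq[\ell]$ of $O(\sigma/\eps)$ positions and a predicate $\phi$.

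\emph{Step 2.} The verifier asks for $Q_I$. The prover sends pairs $(\tilde x_i,\tilde y_i)_{i\in I}$, and the verifier checks $\phi$ on them.

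\emph{Step 3.} The verifier queries $O(\tilde x_i)$ for each $i\in I$ and checks that it equals $\tilde y_i$. This gives query complexity $O(\sigma/\eps)$.

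\emph{Step 4.} The parties run the DEIP of \cref{deIP}, with soundness parameter $\sigma+1$, for the statement ``$C_0(x)_i=\tilde x_i$ for all $i\in I$.'' Here $C_0$ maps $x\mapsto Q^{\text{in}}$, and the circuit has depth $O(D)$ plus an $O(\log)$-depth comparison tree. Its description depends on $I$ only through the succinct description $\langle I\rangle$ and the $\tilde x_i$, so it remains logspace-uniform with explicit input of size $n+|I|m$.

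Completeness is immediate. For soundness, suppose $C^O(x)=0$. Because the queries are nonadaptive, closeness is plain Hamming distance (footnote to \cref{defn:closeness}). By $\eps$-robustness, every $Q'$ that is $\eps$-close to the true $Q$ has $C_{Q'}(x)=0$, so $(x,Q)$ is $\eps$-Hamming-far from $\mathcal{L}'$.

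The \textbf{main obstacle} is that the verifier never reads the fixed string $Q$. It reads the prover's answers $\tilde Q_I$, which the prover chooses after seeing $I$. Two facts resolve this.
\begin{itemize}
\item Once the DEIP and oracle checks pass, except with probability $2^{-\sigma-1}$, $\tilde Q_I=Q_I$ on every queried position. This holds because $Q$ is uniquely determined by $x$ and $O$ in the nonadaptive case.
\item Therefore the run of IPP followed by Steps 2 and 3 is exactly an IPP execution with honest query access to the fixed, $\eps$-far implicit input $Q$. The IPP verifier's queries can all be deferred to the end, and $I,\phi$ are fixed before the answers are revealed, so the cheating prover's adaptivity is only over the interaction, which the IPP soundness already covers. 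The IPP then accepts with probability at most $2^{-\sigma-1}$.
\end{itemize}
A union bound gives soundness error $2^{-\sigma}$.

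It remains to tally parameters.
\begin{itemize}
\item Communication: the IPP contributes $O(\sigma m\eps\ell D\polylog S)$; sending $\tilde Q_I$ together with $\langle I\rangle$ contributes $O((\sigma/\eps)m\polylog\ell)$; the DEIP contributes $\sigma D\polylog S$.
\item Rounds: $O(D\polylog S)$ in total.
\item Verifier time: the IPP bound, plus evaluating $\phi$ in time $q\,\polylog$, plus the DEIP cost $\sigma(n+|I|m+D\polylog S)$. These sum to the claimed bound.
\item Prover time: $\poly(S,\sigma)$.
\end{itemize}
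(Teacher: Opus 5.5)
Your proposal is correct and follows essentially the same route as the paper: an IPP on the fixed true string $Q$ with all queries deferred to the end, the prover supplying $Q|_J$, oracle spot-checks on the supplied answers, and a DEIP certifying $C_0(x)|_J$. The soundness argument is also the same union bound over ``accept with inconsistent answers'' and ``$\phi(Q|_J)=1$''; the paper uses soundness parameter $\sigma+2$ where you use $\sigma+1$, which is immaterial.

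The one discrepancy is in your verifier-time tally. Because you place $\tilde x_I$ in the DEIP's explicit input, the DEIP costs $\sigma\cdot|I|\cdot m = O(\sigma^2 m/\eps)$. That exceeds the stated $O(\sigma m\polylog(S)/\eps)$ term whenever $\sigma\gg\polylog(S)$, so it does not "sum to the claimed bound" as you assert. The paper instead hardwires $J$ and $A^{\text{in}}$ into the language $\lang^*_{J,A^{\text{in}}}$, so the DEIP input is just $x$ and the DEIP is charged only $O(\sigma(n+D\polylog(S)))$.
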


\begin{figure}[htbp]
\fbox{
\begin{minipage}{\linewidth}
  \RaggedRight
  \vspace{.3cm}
  \underline{\textbf{DEIP for robust, nonadaptive setting}}\\
  \vspace{.3cm}
  \textbf{Input:} The prover $\mathcal{P}$ and verifier
  $\mathcal{V}$ receive $x\in\{0,1\}^n$, a succinct
  description of an $\eps$-robust logspace-uniform oracle circuit $C$ with $\ell$ nonadaptive oracle gates, depth $D$, and size $S$, and oracle access to $O:\{0,1\}^*\to\{0,1\}$. Note that $\eps$ is a fixed parameter, while $\ell$, $D$, and $S$ are parameters of the input. \\
  \vspace{.2cm}
  \textbf{Additional parameters:} We let $\sigma$ denote the soundness parameter.\\ 
  \vspace{.2cm}
  \textbf{Notations:} Enumerate the oracle gates by
  $1,\ldots,\ell$. Let $Q = Q(x) = ((x_1, y_1),\ldots,(x_\ell, y_\ell))$
denote the true query-answer string and $Q^{\text{in}}$ (resp. $Q^{\text{out}}$) denote the strings of queries (resp. answers) in $Q$. We split $C^O$ into two circuits
without oracle gates: $C_0$ computes $Q^{\text{in}}$ from $x$, and $C_1$ computes
$C^O(x)$ from $(x,Q)$.\\
\vspace{.2cm}
\textbf{Ingredients:} We make use of the following tools:
\begin{itemize}
  \item An IPP for the pair language
    $\mathcal{L}_\mathrm{acc} = \{(x,Q) : C_1(x,Q)=1\}$ with proximity
    parameter $\eps$ and soundness parameter $\sigma+2$ (instantiated by \Cref{rothblumx2ipp}).
  \item A DEIP for the language $\lang^*_{J, A^{\text{in}}}$ defined below (which asserts correctness of $C_0(x)$ on a subset of
    coordinates), with soundness parameter $\sigma+2$ (instantiated by \Cref{deIP}).
\end{itemize}
\textbf{The protocol:}
\begin{enumerate}[itemsep=.1cm]

\item The prover evaluates $C^O(x)$ and computes the true query-answer string $Q$. The prover and the verifier run the IPP for $\mathcal{L}_\mathrm{acc}$ with proximity
  parameter $\eps$ on input $(x, Q)$, where $x$ is the explicit input and $Q$ is the implicit input. At the end of this
  interaction, the IPP verifier outputs the description $\lrag J$ of indices $J\subseteq[\ell]$ that it wishes to query in $Q$ and the description $\lrag \phi$ of the predicate $\phi$. The verifier sends $\lrag J$ to the prover.

\item The prover sends $A\coloneq Q|_J$. The verifier finishes the IPP verification by checking $\phi(A) = 1$. The verifier also checks that the oracle calls in $A$ are correct: for all $i\in J$,
    it checks that $y_i = O(x_i)$. 
    
\item Letting $A^{\text{in}} = (x_i)_{i\in J}$ denote
    the string of queries in $A$, the prover and the verifier
    engage in a DEIP to prove that $C_0(x)|_{J} =
    A^{\text{in}}$. Formally, letting 
    \[
    \lang^*_{J, A^{\text{in}}} = \Set{x \in \bin^n: C_0(x)|_{J} = A^{\text{in}}},
    \]
    the prover and verifier run a DEIP to prove that $x$ is in $\lang^*_{J, A^{\text{in}}}$. 

\item The verifier accepts if the IPP and DEIP verifiers accept
    and all of its checks pass, and rejects otherwise.
  
\end{enumerate}  

\end{minipage}
}
\caption{DEIP for robust circuits making nonadaptive queries}\label{fig:fig-nonadaptive}
\end{figure}  

\begin{proof}

The protocol is given in Figure~\ref{fig:fig-nonadaptive}. We analyze its completeness, soundness, and efficiency. 

  \paragraph{Completeness.}
  If $C^O(x)=1$, then $(x, Q)\in \mathcal{L}_\mathrm{acc}$.
  The honest prover answers the verifier's query in step 2 with
  $A = Q|_J$. Then the verifier's oracle correctness checks pass, the DEIP claim $C_0(x)|_{J}=A^{\text{in}}$ holds, and by completeness of the IPP and DEIP the verifier accepts with probability 1.

  \paragraph{Soundness.}
  Assume that $x \notin \lang$ and fix any dishonest prover strategy $\MProver$. 
  Let $J$ be the set of indices queried by the IPP verifier in step 1, $\phi$ be the predicate the IPP verifier outputs, $Q$ be the true query-answer string (that is, the query-answer string obtained from an \emph{honest} evaluation of $C^O(x)$),
  and $\widetilde{A} = \Set{(\tilde{x}_i, \tilde{y}_i): i \in J}$ be the prover's response in step 2. Here $Q$ is a fixed string determined by $x$, while $J$, $\phi$, and $\widetilde{A}$ are random variables determined by the interaction $\lrag{\MProver^{O}(x), \Verifier^O(x)}$.

  We define the following events over this interaction:
  \begin{itemize}
    \item $\mathsf{Accept}$: the verifier accepts, i.e., $\lrag{\MProver^{O}(x), \Verifier^O(x)} = 1$.
    \item $\mathsf{Cons}$: the prover's response $\widetilde{A}$ is \emph{consistent}, i.e., $\widetilde{A} = Q|_J$. Technically, we define $\mathsf{Cons}$ as the following two holding simultaneously: oracle answers are correct and queries agree with the honest evaluation $C_0(x)$ on $J$:
    \[
      \mathsf{Cons} := \big(\forall i\in J,\ \tilde{y}_i = O(\tilde{x}_i)\big)\ \wedge\ \big(C_0(x)|_J = \widetilde{A}^{\text{in}}\big).
    \]
    \item $\mathsf{IPPAcc}$: the IPP predicate accepts the \emph{true} answers, i.e., $\phi(Q|_J) = 1$.
  \end{itemize}

  Our goal is to show $\Pr[\mathsf{Accept}] \leq 2^{-\sigma}$. The analysis relies on the following three claims, which we prove below.

  \begin{claim}\label{clm:nonadaptive-cons}
    $\Pr[\mathsf{Accept} \wedge \neg\mathsf{Cons}] \leq 2^{-(\sigma+2)}$.
  \end{claim}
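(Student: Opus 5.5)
The plan is to split the event $\neg\mathsf{Cons}$ into its two components and show that, on each, acceptance is unlikely. Under $\neg\mathsf{Cons}$, either (a) some $i\in J$ has $\tilde{y}_i \neq O(\tilde{x}_i)$, or (b) $C_0(x)|_J \neq \widetilde{A}^{\text{in}}$. In case (a) the verifier rejects deterministically in step 2, because for every $i\in J$ it queries $O$ on $\tilde{x}_i$ and compares the result with $\tilde{y}_i$. Hence $\Pr[\mathsf{Accept}\wedge(\text{a})]=0$, and it suffices to bound $\Pr[\mathsf{Accept}\wedge(\text{b})]$.

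For case (b), the key observation is that $J$ and $\widetilde{A}$ are fixed by the end of step 2, before the DEIP in step 3 begins. Condition on any transcript prefix through step 2 in which (b) holds. The pair $(J,\widetilde{A}^{\text{in}})$ then determines the language $\lang^*_{J,\widetilde{A}^{\text{in}}}$, and (b) says exactly that $x\notin\lang^*_{J,\widetilde{A}^{\text{in}}}$. The continuation of $\MProver$ from that prefix is some (unbounded) prover strategy for this DEIP instance. The DEIP verifier also uses fresh randomness, independent of the prefix, since the IPP and DEIP are run with independent coins. So soundness of \Cref{deIP}, instantiated with soundness parameter $\sigma+2$, bounds the conditional acceptance probability of the DEIP verifier by $2^{-(\sigma+2)}$. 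Acceptance overall requires DEIP acceptance, so averaging over prefixes gives $\Pr[\mathsf{Accept}\wedge(\text{b})]\le 2^{-(\sigma+2)}$. Combining this with case (a) proves the claim.

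The only delicate point, and the step I expect to need care, is the justification that the DEIP is run on a \emph{fixed} statement. The statement depends on the prover's choice of $\widetilde{A}$, which might be adversarially correlated with the IPP interaction. We need soundness to hold for every fixed false statement and every prover, with fresh verifier coins; the conditioning argument above handles this. We also need to check that $\lang^*_{J,A^{\text{in}}}$ meets the hypotheses of \Cref{deIP}: it is decidable by logspace-uniform circuits of depth $O(D+\log S)$ and size $\poly(S)$, given the succinct description $\lrag J$ and $A^{\text{in}}$. These circuits compute $C_0$, select the coordinates in $J$ using the circuit from \cref{def:setdesc}, and compare the result with $A^{\text{in}}$ via an $O(\log(|J| m))$-depth equality tree. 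This is routine bookkeeping, and the theorem is applied with these parameters.
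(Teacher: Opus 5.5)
Your proposal is correct and follows the paper's proof. Like the paper, you split $\neg\mathsf{Cons}$ into its two conjuncts, note that the verifier's direct oracle checks give $\Pr[\mathsf{Accept}\wedge(\text{a})]=0$, bound $\Pr[\mathsf{Accept}\wedge(\text{b})]$ by the $2^{-(\sigma+2)}$ soundness of the DEIP for $\lang^*_{J,\widetilde{A}^{\text{in}}}$, and finish with a union bound.

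Your explicit conditioning on the transcript through step 2 is a detail the paper leaves implicit, and it is the right way to justify applying DEIP soundness to an adaptively chosen false statement with fresh verifier coins. One small nit: the $\lang^*$ circuit has depth $D+\polylog(S)$ rather than $O(D+\log S)$, since the set-description circuit has polylogarithmic depth. This does not affect the soundness bound.
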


  \begin{claim}\label{clm:nonadaptive-ipp}
    $\Pr[\mathsf{IPPAcc}] \leq 2^{-(\sigma+2)}$.
  \end{claim}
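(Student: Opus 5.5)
The plan is to reduce this claim directly to the soundness of the IPP from \Cref{rothblumx2ipp}, run with proximity parameter $\eps$ and soundness parameter $\sigma+2$. There are three things to check: the true pair $(x,Q)$ is $\eps$-Hamming-far from $\mathcal{L}_\mathrm{acc}$; the step-1 behaviour of $\MProver$ is a legitimate IPP prover strategy; and the event $\mathsf{IPPAcc}$ is exactly the event that the IPP verifier accepts on implicit input $Q$.

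\emph{Farness.} We view $Q$ as a string of $\ell$ symbols, each symbol a query-answer pair $(x_i,y_i)$ with $x_i$ of length at most $m$. Hamming distance is measured over these symbols, consistent with the Remark on input encoding. Take any $Q'$ with $(x,Q')\in\mathcal{L}_\mathrm{acc}$, i.e.\ $C_1(x,Q')=1$. In the nonadaptive setting, $C_1(x,Q')$ is exactly $C_{Q'}(x)$, and by the footnote to \cref{defn:closeness}, $\eps$-closeness is plain fractional Hamming closeness. Suppose $Q'$ differed from $Q$ in at most $\eps\ell$ positions. Then $\eps$-robustness (\cref{defn:robustness}) would give $C_{Q'}(x)=C^O(x)=0$, since $x\notin\mathcal{L}$, which is a contradiction. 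Hence every accepting $Q'$ differs from $Q$ in more than $\eps\ell$ positions, so $(x,Q)$ is $\eps$-Hamming-far from $\mathcal{L}_\mathrm{acc}$. This step has a small boundary issue (strict versus non-strict inequality in ``far''), but the strict inequality obtained here is more than enough.

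\emph{Reduction.} Step 1 of the protocol is precisely the IPP verifier interacting with some prover. The strategy $\MProver$ may consult $O$ and its own randomness, but IPP soundness holds against computationally unbounded provers. So for each fixing of $\MProver$'s coins, its step-1 messages define a valid IPP prover $\widetilde{\Prover}'$, and we average over the coins at the end. By the ``furthermore'' clause of \Cref{rothblumx2ipp}, the IPP verifier's final decision on implicit input $Q$ equals $\phi(Q|_J)$, where $\lrag J,\lrag\phi$ are its outputs. Thus $\mathsf{IPPAcc}$ is exactly the event $\lrag{\widetilde{\Prover}'(x),\Verifier^Q(x)}=1$. Note that this is defined with the \emph{true} $Q$, not the prover's later answers $\widetilde{A}$, and that the IPP's randomness is independent of anything the prover does afterwards.

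\emph{Conclusion.} Applying $\delta$-soundness with $\delta=2^{-(\sigma+2)}$ to the far input $(x,Q)$ gives $\Pr[\mathsf{IPPAcc}]\le 2^{-(\sigma+2)}$. I expect the main point needing care to be this identification of $\mathsf{IPPAcc}$ with IPP acceptance on the fixed implicit input $Q$. It works only because the IPP verifier makes all its queries at the end and decides via $\phi$ on $Q|_J$. The discrepancy between $Q|_J$ and what the prover actually sends is deferred to \cref{clm:nonadaptive-cons}.
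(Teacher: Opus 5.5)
Your proposal is correct and follows essentially the same route as the paper. You first show that $(x,Q)$ is $\eps$-Hamming-far from $\mathcal{L}_\mathrm{acc}$ via $\eps$-robustness, then identify $\mathsf{IPPAcc}$ with the IPP verifier's decision $\phi(Q|_J)$ on the true implicit input $Q$, and apply soundness with parameter $\sigma+2$. Your handling of the prover's coins and oracle access, and of the strict-versus-non-strict boundary in ``far,'' just makes explicit what the paper leaves implicit.
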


  \begin{claim}\label{clm:nonadaptive-implication}
    $\mathsf{Accept} \wedge \mathsf{Cons} \implies \mathsf{IPPAcc}$.
  \end{claim}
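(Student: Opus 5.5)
This claim is a deterministic implication, so I would argue pointwise: fix any outcome of the interaction $\lrag{\MProver^{O}(x), \Verifier^O(x)}$, meaning a fixed verifier randomness, hence fixed $J$, $\phi$ and $\widetilde{A}$, in which both $\mathsf{Accept}$ and $\mathsf{Cons}$ hold. No probabilistic reasoning is needed. The plan is to show that $\mathsf{Cons}$ forces $\widetilde{A}$ to coincide exactly with $Q|_J$, and that $\mathsf{Accept}$ includes the check $\phi(\widetilde{A}) = 1$.

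\emph{Step 1: the queries agree.} Since we are in the nonadaptive setting, the true query string is $Q^{\text{in}} = C_0(x)$, so $x_i = C_0(x)_i$ for every $i \in [\ell]$. The second conjunct of $\mathsf{Cons}$ states $C_0(x)|_J = \widetilde{A}^{\text{in}}$. Hence $\tilde{x}_i = x_i$ for all $i \in J$.

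\emph{Step 2: the answers agree.} The first conjunct of $\mathsf{Cons}$ gives $\tilde{y}_i = O(\tilde{x}_i)$ for all $i \in J$. By Step 1 this equals $O(x_i)$, which is $y_i$ by the definition of the true query-answer string. Together with Step 1, this gives $\widetilde{A} = Q|_J$ as strings, position by position in the order of $J$.

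\emph{Step 3: transfer the IPP predicate.} The event $\mathsf{Accept}$ requires every check in steps 2 and 4 of Figure~\ref{fig:fig-nonadaptive} to pass. In particular, the verifier's completion of the IPP verification, $\phi(\widetilde{A}) = 1$, must hold. Substituting Step 2 gives $\phi(Q|_J) = 1$, which is exactly $\mathsf{IPPAcc}$.

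The only point needing care, and it is minor, is to use the correct reference string. Under $\mathsf{Cons}$, the verifier's IPP check is evaluated on the honest $Q$, a string fixed before the interaction, rather than on something the prover chose adaptively. This is what makes \cref{clm:nonadaptive-ipp} applicable: in the soundness analysis it bounds $\Pr[\mathsf{IPPAcc}]$, because $x \notin \lang$ together with $\eps$-robustness makes $(x,Q)$ $\eps$-Hamming-far from $\mathcal{L}_\mathrm{acc}$ (by the footnote to \cref{defn:closeness}). Once the three claims are established, the soundness bound follows from
\[
\Pr[\mathsf{Accept}] \le \Pr[\mathsf{Accept}\wedge\neg\mathsf{Cons}] + \Pr[\mathsf{IPPAcc}] \le 2\cdot 2^{-(\sigma+2)} < 2^{-\sigma}.
\]
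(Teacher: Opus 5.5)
Your proof is correct and follows the paper's argument essentially verbatim: the second conjunct of $\mathsf{Cons}$ forces $\tilde{x}_i = x_i$ on $J$, the first conjunct then gives $\tilde{y}_i = O(x_i) = y_i$, so $\widetilde{A} = Q|_J$, and the check $\phi(\widetilde{A}) = 1$ contained in $\mathsf{Accept}$ becomes $\phi(Q|_J) = 1$. Your closing remarks on how this claim feeds into the overall soundness bound are accurate, but the implication itself does not need them.
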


  Granting the claims, we first conclude the soundness analysis. Since
  \[
    \mathsf{Accept}
    \;\iff\;
    \big(\mathsf{Accept} \wedge \neg\mathsf{Cons}\big)
    \;\vee\;
    \big(\mathsf{Accept} \wedge \mathsf{Cons}\big),
  \]
  \cref{clm:nonadaptive-implication} gives
  \[
    \mathsf{Accept}
    \;\implies\;
    \big(\mathsf{Accept} \wedge \neg\mathsf{Cons}\big)
    \;\vee\;
    \mathsf{IPPAcc}.
  \]
  Therefore,
  \begin{align}
    \Pr[\mathsf{Accept}]
    &\leq \Pr[\mathsf{Accept} \wedge \neg\mathsf{Cons}] + \Pr[\mathsf{IPPAcc}] && \text{(union bound)} \\
    &\leq 2^{-(\sigma+2)} + 2^{-(\sigma+2)} && \text{(\cref{clm:nonadaptive-cons}, \cref{clm:nonadaptive-ipp})} \\
    &= 2^{-(\sigma+1)} \;\leq\; 2^{-\sigma}.
  \end{align}

  It remains to prove the three claims.

  \begin{proof}[Proof of \cref{clm:nonadaptive-cons}]
    The verifier checks every oracle answer in $\widetilde{A}$ directly: for each $i \in J$, it queries $O(\tilde{x}_i)$ and rejects unless $\tilde{y}_i = O(\tilde{x}_i)$. Therefore,
    \[
    \Pr[\mathsf{Accept} \wedge \neg (\forall i \in J,\ \tilde{y}_i = O(\tilde{x}_i))] = 0.
    \]
    The second conjunct of $\mathsf{Cons}$ is exactly the DEIP statement $x \in \lang^*_{J, \widetilde{A}^{\text{in}}}$. Since the verifier accepts only when the DEIP verifier accepts, by the $2^{-(\sigma+2)}$-soundness of the DEIP (\Cref{deIP}),
    \[
    \Pr[\mathsf{Accept} \wedge \neg \big(C_0(x)|_J = \widetilde{A}^{\text{in}}\big)] \leq 2^{-(\sigma+2)}.
    \]
    The claim follows by a union bound over the two conjuncts.
  \end{proof}

  \begin{proof}[Proof of \cref{clm:nonadaptive-ipp}]
    We first claim that $(x,Q)$ is $\eps$-Hamming-far from $\mathcal{L}_\mathrm{acc}$. Indeed, let $Q'$ be any query-answer sequence for input $x$ which differs from $Q$ on at most an $\eps$ fraction of indices. Since $x \notin \lang$, we have $C_1(x, Q) = C^O(x) = 0$, and by $\eps$-robustness of $C^O$, $C_1(x, Q') = C_1(x, Q) = 0$. Therefore, $(x, Q') \notin \mathcal{L}_\mathrm{acc}$.

    Recall that the IPP for $\mathcal{L}_\mathrm{acc}$ is instantiated with proximity parameter $\eps$ and soundness parameter $\sigma+2$ (\Cref{rothblumx2ipp}), and note that $\phi(Q|_J)$ is exactly the decision of the IPP verifier when its queries to the implicit input are answered according to the true sequence $Q$. Therefore, by the soundness guarantee of the IPP applied to the $\eps$-far input $(x, Q)$,
    \[
      \Pr[\mathsf{IPPAcc}] = \Pr[\phi(Q|_J) = 1] \leq 2^{-(\sigma + 2)},
    \]
    where the probability is over the IPP interaction in step 1.
  \end{proof}

  \begin{proof}[Proof of \cref{clm:nonadaptive-implication}]
    Suppose that $\mathsf{Accept}$ and $\mathsf{Cons}$ both hold. By definition, $C_0(x) = Q^{\text{in}}$ is the true query sequence obtained from an honest evaluation, so the second conjunct of $\mathsf{Cons}$ gives $\tilde{x}_i = x_i$ for all $i \in J$, and the first conjunct then gives $\tilde{y}_i = O(\tilde{x}_i) = O(x_i) = y_i$. Hence $\widetilde{A} = Q|_J$: the answers fed to the IPP predicate are exactly the answers obtained from query access to the true implicit input $Q$. Finally, since the verifier accepts, the verifier's check $\phi(\widetilde{A}) = 1$ passes, and therefore $\phi(Q|_J) = \phi(\widetilde{A}) = 1$, i.e., $\mathsf{IPPAcc}$ holds.
  \end{proof}

  \paragraph{Efficiency.}
  We obtain the claimed parameters by examining the efficiency in each step. Recall that both ingredients are instantiated with soundness parameter $\sigma + 2 = O(\sigma)$, and that $\eps\ell \ge 1$ by assumption.

  \begin{itemize}
    \item \textbf{Step 1 (IPP).}
    \begin{itemize}
      \item \textbf{Circuit proven.} The IPP of Step~1 is run on the pair language
        $\mathcal{L}_\mathrm{acc} = \{(x,Q) : C_1(x,Q)=1\}$, decided
        by the oracle-free circuit $C_1$ followed by a single output check.
        Since $C_1$ is obtained from $C$ by deleting the oracle gates and
        feeding their answers in as inputs, it has depth at most $D$ and size
        at most $S$; the explicit input $x$ has length $n$ and the implicit
        input $Q$ has $\ell$ positions.
      \item \textbf{Efficiency.} Instantiating \Cref{rothblumx2ipp} with proximity parameter $\eps$, soundness parameter $\sigma+2$, explicit-input length $n$, implicit-input length $\ell$, depth $D$, and size $S$ yields query complexity $q = O(\sigma/\eps)$, communication $O(\sigma\cdot\eps \cdot m\cdot\ell\cdot D\cdot\polylog(S))$, round complexity $O(D\cdot\polylog(S))$, and verifier running time $O(\sigma\cdot(\eps m \ell D\polylog(S) + (1/\eps)\polylog(S) + n\log S))$. At the end the verifier outputs the succinct description $\lrag J, \lrag \phi$ of the query set $J\subseteq[\ell]$ with $|J| = q = O(\sigma/\eps)$ and its decision predicate, and sends $\lrag J$ to the prover. Since $\lrag J$ is produced in time $O(\sigma\cdot\eps\cdot\ell\cdot D\cdot\polylog(S))$, sending it increases the communication by at most a constant factor. 
    \end{itemize}
    \item \textbf{Step 2 (answers and DEIP).}
    \begin{itemize}
      \item \textbf{Circuit proven.} The DEIP of Step~2 is run on
      $\lang^*_{J,A^{\text{in}}} = \{x\in\bin^n : C_0(x)|_J = A^{\text{in}}\}$,
  decided by a circuit that evaluates $C_0$ (depth at most $D$, size at
  most $S$) and compares its $J$-coordinates against the hardwired string $A^{\text{in}}$. Here $J$ is given by its succinct description $\lrag J$, which can be produced by a logspace-uniform circuit of size $\poly(|J|, \log \ell)$ and depth $\polylog(\ell)$. Equality check on $|J|$ queries can be implemented in size $O(|J|\cdot m)$ and depth $\log (|J|m)$. Therefore, since $|J| \le \ell \le S$, the comparison can be implemented in depth
  $\log (m|J|) + \polylog(\ell) \in \polylog(S)$ and size $\poly(|J|, m, \log \ell) \in \poly(S)$; hence this circuit has depth $D+\polylog(S)$ and size $\poly(S)$, on input $x$ of length $n$.
    \item \textbf{Efficiency.} The prover sends $A=Q|_J$, consisting of $|J| = O(\sigma/\eps)$
  query--answer pairs. The verifier makes $|J| = O(\sigma/\eps)$ oracle
  queries to check $y_i = O(x_i)$ for $i\in J$ and evaluates the IPP
  decision predicate on $A$, in time $O((\sigma/\eps) m \polylog(\ell))$.

  The parties then run the DEIP of \Cref{deIP} with soundness parameter $\sigma+2$ on the depth-$(D+\polylog(S))$,
  size-$\poly(S)$ circuit above: this contributes communication
  $O(\sigma\cdot D\cdot \polylog(S))$, round complexity $O(D\cdot\polylog(S))$, and verifier
  running time $O(\sigma\cdot(n + D\polylog(S)))$.
    \end{itemize}
  \end{itemize}

  Summing the two steps, and using $\eps\ell \ge 1$ to absorb the additive $O(\sigma\cdot D\cdot\polylog(S))$ costs of the DEIP into the $O(\sigma\cdot\eps\cdot m\cdot\ell\cdot D\cdot\polylog(S))$ costs of the IPP, we obtain:
  \begin{itemize}
  \item query complexity $q = O(\sigma/\eps)$;
  \item communication complexity
    $cc = O(\sigma\cdot\eps m \ell D\cdot\polylog(S)) + O((\sigma/\eps) \cdot m \polylog(\ell)) + O(\sigma\cdot D \cdot \polylog(S))
        = O(\sigma \cdot m \cdot \eps\cdot\ell \cdot D\cdot\polylog(S) + (\sigma / \eps) \cdot m \cdot \polylog(\ell))$;
  \item round complexity $O(D\cdot\polylog(S))$;
  \item verifier running time
    $O(\sigma\cdot(\eps m \ell D\cdot \polylog(S) + (1/\eps) m \polylog(S) + n\log S)) + O((\sigma/\eps) m \polylog(\ell)) + O(\sigma\cdot(n + D\polylog(S)))
     = O(\eps\ell \cdot D +1/\eps)\cdot \sigma \cdot m \cdot\polylog(S) + O(\sigma n\log S)$;
  \item prover running time $\poly(S, \sigma)$, as the prover evaluates $C^O(x)$ in time $\poly(S)$ and runs the IPP and DEIP provers, each in time $\poly(S, \sigma)$.
  \end{itemize}

\end{proof}

\subsection{A Proof System for Circuits Making Adaptive Oracle Queries}\label{sec:adaptive}

\begin{restatable}{theorem}{proofadaptive}\label{thm:proof-adaptive}


  Let $\eps = \eps(n)\in (0, 1], \sigma \in \NN$, and let
  $\mathcal{L}$ be a language decidable by $\eps$-robust logspace-uniform
  oracle Boolean circuits $C$ with $\ell$ $d$-adaptive oracle gates, depth $D$, and size $S$, where all gates except possibly the oracle
  gates have fan-in $2$. Assume that query lengths are bounded by $m = \poly(n)$, and that
  $\eps \ge 1/\ell$, as otherwise $\eps$-robustness holds vacuously
  (see \cref{defn:robustness}). Then there is a doubly-efficient interactive
  proof system for $\mathcal{L}$ with the following parameters:\footnote{We use $\widetilde{O}(\cdot)$ to hide $\polylog(d, S)$ factors.} 


    
  \begin{itemize}
    \item soundness error $2^{-\sigma}$;
    
    \item query complexity
    $\widetilde{O}\!\left(\sigma d/\eps\right)$;

    \item communication complexity
    $\widetilde{O}\!\left(
        \sigma d D m \eps \ell
        + \sigma d m / \eps
    \right)$;

    \item round complexity
    $\widetilde{O}(dD)$;

    \item verifier running time
    $\widetilde{O}\left(
        \sigma d\left(
            \eps m \ell D
            +
            m/\eps
            +
            n
        \right)
    \right)$; and

    \item prover running time
    $\poly(S,\sigma)$.
\end{itemize}

    
  
\end{restatable}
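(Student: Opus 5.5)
The plan is to build the protocol sketched in the technical overview, generalizing Figure~\ref{fig:fig-nonadaptive}. Decompose $C^O$ level by level into oracle-free circuits $C_0,\dots,C_d$, as in \cref{defn:closeness}, and write $Q=(Q_1,\dots,Q_d)$ for the true query-answer string.

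\textbf{The protocol.}
\begin{enumerate}
\item First the prover sends checksums $c_i=\cksum_\rho(Q_i)$ for $i=2,\dots,d$. Here $\cksum_\rho$ is the Reed--Solomon checksum of \cref{lem:unique-decoding} with $\rho=\eps\ell$, so the communication is $\widetilde{O}(\eps\ell m)$ per level.
\item The verifier then runs, top-down for $i=d,d-1,\dots,2$, an IPP (\Cref{rothblumx2ipp}) with proximity $\eps$ and soundness parameter $\sigma+\log(4d)$. The explicit input is $(x,c_2,\dots,c_{i})$ together with the claimed values of the positions of level $i+1$ requested so far, and the implicit input is $(Q_1,\dots,Q_{i-1})$.
\begin{itemize}
\item For $i=d$ the pair language asserts $C_d(x,Q_1,\dots,Q_d)=1$ and $\cksum_\rho(Q_j)=c_j$ for all $j\ge2$; in this level-$d$ instance $Q_d$ is also part of the implicit input.
\item For $i<d$ the pair language asserts that $C_i(x,Q_1,\dots,Q_i)$ restricted to the requested level-$(i+1)$ positions equals the claimed queries, and that the checksums of the implicit levels match.
\end{itemize}
Each IPP ends with a succinct query set. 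The prover answers it, and the verifier checks every claimed oracle answer by querying $O$ and checks the predicate $\phi$.
\item Finally, all requested level-$1$ queries are verified by a DEIP (\Cref{deIP}) for $C_0(x)|_J=A^{\text{in}}$.
\end{enumerate}
Completeness is immediate, as in \cref{thm:proof-nonadaptive}.

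\textbf{Soundness.} I would prove an inductive invariant bottom-up, mirroring \cref{clm:nonadaptive-cons,clm:nonadaptive-ipp,clm:nonadaptive-implication}.

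Define $Q_1^*=Q_1$. If the DEIP and the oracle checks pass, every level-$1$ answer the verifier saw agrees with $Q_1^*$, except with probability $2^{-\sigma}/(4d)$.

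Inductively, suppose the verifier's view of levels $<i$ is consistent with a string $(Q_1',\dots,Q_{i-1}')$ that is obtained by the closeness process of \cref{defn:closeness} and satisfies the checksum equations. Consider the IPP at level $i$ (the one whose implicit input contains $Q_i$). Its acceptance implies, except with small probability, that its implicit input is $\eps$-Hamming-close to an input in the language. That in turn gives a string $Q_i'$ that has the correct checksum $c_i$ and is close to $T_i=(C_{i-1}(x,Q_{<i}'),O(\cdot))$, since the answers the verifier spot-checked are correct.

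Unique decoding (\cref{def:unique-decoding-checksum}) makes $Q_i'$ unique given $c_i$. This is exactly the step that fixes the implicit input \emph{before} the random coins of the IPP above it are chosen, which turns the next ``IPP'' into a genuine IPP on a fixed input.

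At the top level, $Q'=(Q_1',\dots,Q_d')$ is within the closeness budget, so $\eps$-robustness (\cref{defn:robustness}) gives $C_{Q'}(x)=C^O(x)=0$, and the final IPP accepts a far input with probability at most $2^{-\sigma}/(4d)$. A union bound over the $O(d)$ bad events then gives total soundness error $2^{-\sigma}$.

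\textbf{Main obstacle: the budget in the invariant.} Closeness in \cref{defn:closeness} allows $\eps\ell$ modifications \emph{in total}, whereas each IPP only guarantees proximity $\eps$ for its own level. I would first try running each IPP with proximity parameter $\eps/d$ and checksum radius $\rho=\eps\ell$. That makes the sum of deviations across levels at most $\eps\ell$, at the cost of the extra $d$ factor in query complexity, which matches the stated $\widetilde{O}(\sigma d/\eps)$.

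A second subtlety is making the uniqueness statement precise. The IPP soundness only says some close string exists, so I need to phrase it as follows: either the checksum equations force the unique close preimage, or no close string satisfies them and the IPP rejects a far input.

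\textbf{Efficiency.} The complexity bounds follow by summing $d$ invocations of \Cref{rothblumx2ipp} on circuits of depth $D+\widetilde{O}(1)$ (checksum evaluation is polylog depth) with one invocation of \Cref{deIP}. This yields $\widetilde{O}(dD)$ rounds and the stated communication and verifier time.
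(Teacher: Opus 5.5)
Your overall architecture matches the paper's: step-1 checksums, top-down IPPs whose explicit input carries the revealed higher-level queries, a final DEIP, and unique decoding combined with robustness. The bottom level, however, is broken. Your loop gives only $d-1$ IPPs. Together with the DEIP, which covers only level-$1$ queries, you need $d$: one for $C_d(x,\cdot)=1$, and one for each consistency statement about the revealed queries at levels $2,\ldots,d$. As written, $C_1(x,Q_1)|_{J_2}=A_2^{\text{in}}$ is never checked, and for $d=1$ the loop is empty.

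Extending the loop by one does not repair this, because you deliberately send no checksum for $Q_1$. Without one, an IPP on implicit input $Q_1$ only certifies that $A_2^{\text{in}}=C_1(x,Q_1')|_{J_2}$ for \emph{some} $Q_1'$ close to $Q_1$. The prover may choose that $Q_1'$ adaptively, after seeing the query sets of all the IPPs that already probed level $2$. So the level-$2$ string those IPPs query is not fixed before their coins. This is exactly the failure of the ``first attempt'' in \cref{sec:tech}. It does not help that $Q_1$ is the honest string: what must be pinned is the decoded $Q_1'$ from which the level-$2$ queries are recomputed. The paper therefore sends $H_1=\cksum(Q_1)$ in step~1 and adds $\cksum(Q_1)=H_1$ to the language of the IPP on implicit input $Q_1$; its loop runs $j=d-1,\ldots,1$.

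Your soundness invariant also conflates two strings per level. The verifier's spot-checks are consistent not with the modified $Q_i'$ but with $R_i=(C_{i-1}(x,R'_{<i}),O(\cdot))$. Establishing this requires the \emph{query} check done by the IPP one level down; your justification ``the answers the verifier spot-checked are correct'' covers only the $y$'s. The paper defines each $R_i$ from $x$ and the step-1 checksums alone: take the lexicographically first checksum-consistent tuple within the proximity radius of $R_{<i}$, or $\bot$ if none exists. All $R_i$ are then fixed before any IPP coin. Unique decoding identifies both the tuple given by IPP-$(i-1)$'s soundness and the prefixes of the top-level tuple with these decodings.

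A consequence is that the total number of modifications is bounded by the top-level IPP's proximity alone, so your $\eps/d$ split is unnecessary. It also breaks your efficiency claim. Running $d$ IPPs at proximity $\eps/d$ gives $\widetilde{O}(\sigma d^2/\eps)$ queries, with matching $\sigma d^2 m/\eps$ terms in communication and verifier time. That is a factor $d$ beyond the stated $\widetilde{O}(\sigma d/\eps)$, and $\widetilde{O}$ does not hide it. The paper uses proximity $i\eps/d$ for IPP-$i$, so $\sum_i d/(i\eps)=O(d\log d/\eps)$; uniform proximity $\eps$ would work equally well.
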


\begin{figure}[htbp]
\fbox{
\begin{minipage}{\linewidth}
  \RaggedRight \vspace{.25cm}
  \underline{\textbf{DEIP for robust, adaptive setting}}\\
  \vspace{.25cm} \textbf{Input:} The prover $\mathcal{P}$ and verifier
  $\mathcal{V}$ receive $x\in\{0,1\}^n$, a succinct
  description of an $\eps$-robust logspace-uniform oracle circuit $C$ with $\ell$ $d$-adaptive oracle gates, depth $D$, and size $S$, and oracle access to $O:\{0,1\}^*\to\{0,1\}$. Note that $\eps$ is a fixed parameter, while $\ell$, $D$, and $S$ are parameters of the input.\\
  \vspace{.2cm} \textbf{Additional parameters:} We let $\sigma$ denote the soundness parameter. For each level $i\in[d]$, let
  $\ell_i$ be the number of oracle gates in level $i$, $\eps_i$ be a proximity parameter, and $h_i$ be the checksum length. For all $i$, set $\eps_i = \eps/d$ and $h_i\le O(\varepsilon m\ell\log \ell)$ (its exact value is determined in the analysis). We use $\sigma_\text{IPP}$ and $\sigma_\text{DEIP}$ for the
  soundness parameters of the IPP and the DEIP, and set
  $\sigma_\text{IPP}, \sigma_\text{DEIP} = \sigmaIPP$. \\
  \vspace{.2cm} \textbf{Notations:} For each $i\in [d]$, enumerate the oracle gates in level $i$ by $(i, 1), \ldots, (i, \ell_i)$. For $i\in[d]$, $j\in[\ell_i]$, let
  $x_{i,j}$ denote the query made at oracle gate $(i,j)$ and
  $y_{i,j}=O(x_{i,j})$ the corresponding answer. Let
  $Q_i \coloneq ((x_{i,1},y_{i,1}),\ldots,(x_{i,\ell_i},y_{i,\ell_i}))$
  denote the string of oracle query-answer pairs for oracle gates in
  level $i$. For a query-answer string $S$, let $S^{\text{in}}$ denote
  the string of queries in $S$ and $S^{\text{out}}$ denote the string of
  answers. 
  Decompose $C^O$ into circuits $C_0,\ldots,C_d$ without
  oracle gates as follows: $C_0$ maps $x\mapsto Q_{1}^{\text{in}}$; for $i\in[d-1]$,
  $C_i$ maps $(x,Q_1, \ldots, Q_i)\mapsto Q_{i+1}^{\text{in}}$; and $C_d$ maps
  $(x,Q_1, \ldots, Q_d)\mapsto C^O(x)$.\\
  \vspace{.2cm} \textbf{The protocol:}
\begin{enumerate}[itemsep=.05cm]

\item The prover claims $C^O(x)=1$.
For each $i\in [d]$, the prover
  computes the checksum of $Q_i$ to $h_i$ bits and sends the
  tuple of checksums $(H_i = \cksum(Q_i))_{i\in[d]}$
  to the verifier.

\item The prover and the verifier run the
  message-exchange portion of the IPP from \cref{rothblumx2ipp} with proximity parameter
  $\eps_1+\cdots +\eps_d$ and soundness parameter $\sigma_{\text{IPP}}$ on explicit input $x$ and implicit input $(Q_1, \ldots, Q_d)$ to prove
  that $C_d(x,Q_1, \ldots, Q_d)=1$ and $Q_1, \ldots, Q_d$
  compute correctly to the checksums in step 1, where
  $(Q_1, \ldots, Q_d)$ is the implicit part of the input.
  At the end of the IPP, the verifier needs to query sets of indices
  $J_{d, 1}\subseteq [\ell_1], \ldots, J_{d, d}\subseteq [\ell_d]$ in $Q_1, \ldots, Q_d$, respectively. 

\item The verifier asks the prover to send the
  query-answer pairs in $Q_i$ at indices $J_{d, i}$ for each
  $i\in [d]$. For each $i$, the prover sends
 a string $A_{d, i}\coloneq Q_i|_{J_{d, i}}$, and the verifier checks that
  all the oracle calls in every $A_{d, i}$ are correct.

\item For $j=d-1,d-2,\ldots,1$, the prover and the verifier run an IPP with proximity parameter
  $\eps_1+\cdots +\eps_j$ and soundness parameter $\sigma_{\text{IPP}}$ on input
  $(x, J_{j+1}, A_{j+1}^{\text{in}}, Q_1, \ldots, Q_j)$ to show that
  $C_j(x,Q_1, \ldots, Q_j)|_{J_{j+1}} = A_{j+1}^{\text{in}}$ and
  $Q_1, \ldots, Q_j$ compute correctly to the checksums in step 1,
  where $J_{j+1} = \bigcup_{k = j+1}^d J_{k, j+1}$ consists of all the
  indices that the verifier has queried in
  $Q_{j+1}$ in previous IPPs, $A_{j+1}$ is the sorted concatenation of
  $A_{d, j+1}, A_{d-1, j+1}, \ldots, A_{j+1, j+1}$, i.e., all the strings of
  answers that the prover provides to queries to $Q_{j+1}$ in previous
  IPPs, and $(Q_1, \ldots, Q_j)$ is the implicit part of the input. At
  the end, the verifier outputs sets
  $J_{j, 1}\subseteq[\ell_1], \ldots, J_{j, j}\subseteq[\ell_j]$ of indices that it
  wishes to query in $Q_1, \ldots, Q_j$ respectively. For each $i\in [j]$, the prover sends
  $A_{j, i} := Q_i|_{J_{j, i}}$, and the verifier checks that the oracle calls in
  $A_{j, i}$ are correct.

\item Finally, letting $J_1$ denote $\bigcup_{k = 1}^d J_{k, 1}$ and
  $A_1$ denote the sorted concatenation of
  $A_{d, 1}, \ldots, A_{1, 1}$, the prover and the verifier engage
  in a DEIP (by \Cref{deIP}) with soundness parameter $\sigma_{\text{DEIP}}$ to prove that $C_0(x)|_{J_1} = A_{1}^{\text{in}}$.

  \item The verifier accepts if the IPP and DEIP verifiers accept and all of its checks pass, and rejects
    otherwise.
  
\end{enumerate}  

\end{minipage}
}
\caption{DEIP for robust circuits making adaptive
  queries}\label{fig:fig-bd-adaptive}
\end{figure}  

\begin{proof}

  The protocol is given in Figure~\ref{fig:fig-bd-adaptive}. Here, we analyze
  its completeness, soundness, and efficiency.

  \paragraph{Completeness.} If $x$ is in the language, then the honest
  prover answers the queries to the inputs in the IPPs consistent with
  the true query-answer string $Q$. Completeness then follows from
  completeness of the IPPs and the DEIP.

  \paragraph{Soundness.} Fix any cheating prover $\MProver$ and $x\notin\mathcal{L}$, and let $E$ be the event that $\mathcal{V}$ accepts. We show that $\text{Pr}[E]\le 2^{-\sigma}$. 


  We first define some helpful notations. The protocol contains $d$
  IPPs, one for each level of oracle gates in the circuit. We refer to
  the IPP on implicit input $(Q_1, \ldots, Q_i)$ as IPP $i$ and denote
  the IPP verifier for IPP $i$ by $\mathcal{V}_i$. The IPPs in our
  protocol differ from standard IPPs in that the verifier does not
  have query access to the input: rather, the verifier asks the
  cheating prover $\MProver$ for positions in the
  input, and $\MProver$'s answers may not be consistent
  with the input. We use $\widetilde{A}_i$ for
  $\MProver$'s string of answers $A_i$ to queries to
  $Q_i$. Recall we use $\sigma_\text{IPP}$ and $\sigma_\text{DEIP}$ for the
  soundness parameters of the IPP and the DEIP, and set
  $\sigma_\text{IPP}, \sigma_\text{DEIP} = \sigma + \log 2d$. 

We recursively define strings $R_1,\ldots,R_d$ as follows. Set
$R_1=Q_1$. For $i=2,\ldots,d$, if there exists a tuple
$(R_1',\ldots,R_{i-1}')$ such that
(i) $(R_1',\ldots,R_{i-1}')$ is
$\sum_{j=1}^{i-1}\eps_j$-close to $(R_1,\ldots,R_{i-1})$, and
(ii) each $R_j'$ computes correctly to the checksums that the prover sent in
Step 1,
then define
\[
  R_i := \big(C_{i-1}(x,R_1',\ldots,R_{i-1}'),\,
              O(C_{i-1}(x,R_1',\ldots,R_{i-1}'))\big),
\]
where $O(C_{i-1}(x,R_1',\ldots,R_{i-1}'))$ is obtained by applying $O$ to every query in $C_{i-1}(x,R_1',\ldots,R_{i-1}')$. (If more than one such tuple exists, we define $R_i$ using the lexicographically first one; if no such tuple exists, we set $R_i := \bot$, where by convention no query-answer string is close to, or consistent with, $\bot$.) We will
  show that $\MProver$'s answers
  $\widetilde{A}_1, \ldots, \widetilde{A}_d$ are with good probability
  consistent with $R_1, \ldots, R_d$, which means that even though our
  IPPs differ from standard IPPs in that the prover holds the implicit
  input, from $\mathcal{V}_i$'s point of view, IPP $i$ looks the same
  as standard IPP on implicit input $(R_1, \ldots, R_i)$.
  
  Towards this, we define some helpful events:

  \begin{itemize}

  \item For $i\in [d]$, let $F_i$ be the event that $\widetilde{A}_i$
    is consistent with $R_i$.

  \item For $i\in [d]$, let $U_i$ be the event that there is at most
    one query-answer string $R_i'$ such that $R_i'$ is
    $\sum_{j = 1}^d \eps_j$-Hamming-close to $R_i$ and $R_i'$ computes
    correctly to the checksums that the prover sent in step 1.
    
  \end{itemize}

  Consider the following claims:

  \begin{claim}\label{clm:pa1}
  We have $\text{Pr}[E\cap\overline{F_1}]\le 2^{-\sigma_\text{DEIP}}$.  
  \end{claim}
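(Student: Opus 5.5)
The plan is to mirror the proof of \cref{clm:nonadaptive-cons} from the nonadaptive case. Since $R_1 = Q_1$ is the true level-$1$ query-answer string, fixed by $x$ alone, the event $F_1$ says that every pair $(\tilde x_{1,j},\tilde y_{1,j})$ the prover sent for an index $j\in J_1$ equals $(x_{1,j},y_{1,j})$. This is over all the strings $A_{d,1},\ldots,A_{1,1}$ sent in Steps 3 and 4. I will write $\overline{F_1}$ as the union of two sub-events and bound each one.

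\textbf{Sub-event (a): some answer is wrong.} Here some returned pair has $\tilde y_{1,j}\neq O(\tilde x_{1,j})$. In Steps 3 and 4 the verifier checks every oracle call in every $A_{k,1}$ by querying $O$ directly, so $E$ cannot occur. Hence $\Pr[E\cap(\text{a})]=0$.

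\textbf{Sub-event (b): all answers correct, but some query is wrong.} Here every answer satisfies $\tilde y_{1,j}=O(\tilde x_{1,j})$, but some returned query has $\tilde x_{1,j}\neq x_{1,j}=C_0(x)_j$. This covers $F_1$ failing without (a): if all queries equal $C_0(x)|_{J_1}$ and all answers are correct oracle calls, then every pair equals the corresponding entry of $Q_1$. Under (b), $x\notin \lang^*_{J_1,\widetilde A_1^{\text{in}}}=\{x: C_0(x)|_{J_1}=\widetilde A_1^{\text{in}}\}$, which is the statement proved by the DEIP in Step 5. I will interpret the DEIP statement as constraining every entry of the sorted concatenation. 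So if the prover returns two different values for the same index in different $A_{k,1}$, at least one entry differs from $C_0(x)_j$ and the statement is false.

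The step needing care is applying DEIP soundness, because $J_1$ and $\widetilde A_1$ are random variables chosen adaptively by the interaction before Step 5. I will condition on the full transcript $\tau$ of Steps 1--4. Given $\tau$, the language $\lang^*_{J_1,\widetilde A_1^{\text{in}}}$ is fixed, and the Step-5 DEIP verifier uses fresh randomness independent of $\tau$. The continuation of $\MProver$ from $\tau$ is then just some prover strategy for that fixed statement. By \Cref{deIP}, with soundness parameter $\sigma_{\text{DEIP}}$, the DEIP verifier accepts with probability at most $2^{-\sigma_{\text{DEIP}}}$ for every $\tau$ in (b). Averaging over $\tau$ gives $\Pr[E\cap(\text{b})]\le 2^{-\sigma_{\text{DEIP}}}$.

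Combining the two sub-events yields $\Pr[E\cap\overline{F_1}]\le 0+2^{-\sigma_{\text{DEIP}}}$.
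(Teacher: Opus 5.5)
Your proposal is correct and follows essentially the same argument as the paper. The verifier's direct oracle checks rule out acceptance whenever an answer in $\widetilde A_1$ is wrong, so any remaining violation of $F_1$ makes the DEIP statement $C_0(x)|_{J_1}=\widetilde A_1^{\text{in}}$ false, and DEIP soundness bounds acceptance by $2^{-\sigma_{\text{DEIP}}}$. Your conditioning on the Step~1--4 transcript (to apply DEIP soundness to an adaptively chosen statement) and your handling of repeated indices across the $A_{k,1}$ spell out details the paper leaves implicit, without changing the route.
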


  \begin{claim}\label{clm:pa2}
  For $i\in\{2, \ldots, d\}$, we have 
  \[
\Pr\!\left[
    E \cap
    \left(
        \overline{F_i}
        \cap
        \bigcap_{j=1}^{i-1}
        (F_j \cap U_j)
    \right)
\right]\le 2^{-\sigma_\text{IPP}}.
    \]
  \end{claim}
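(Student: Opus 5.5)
The plan is to reduce the event in \cref{clm:pa2} to the soundness of one specific IPP: the IPP on implicit input $(Q_1,\ldots,Q_{i-1})$, i.e.\ IPP $i-1$ in Step~4 of Figure~\ref{fig:fig-bd-adaptive}. This is the IPP whose explicit input contains $(J_i, \widetilde A_i^{\text{in}})$ and which asserts that $C_{i-1}(x,Q_1,\ldots,Q_{i-1})|_{J_i} = A_i^{\text{in}}$ and that each $Q_j$ matches its checksum $H_j$.

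\textbf{Step 1: the explicit input is fixed.} The index set $J_i$ and the answers $\widetilde A_i$ are fully determined before IPP $i-1$ starts: they come from Step~3 and from the IPPs with larger index, all of which precede IPP $i-1$. The checksums $H_1,\ldots,H_d$ are also fixed, since they are sent in Step~1. Hence, conditioned on the transcript up to the start of IPP $i-1$, the statement proved there is a fixed pair-language instance. The strings $R_1,\ldots,R_{i-1}$ are deterministic functions of $x$, $O$ and the checksums, so they are fixed too. I would therefore condition on this prefix and apply IPP soundness pointwise.

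\textbf{Step 2: the verifier's view matches a genuine IPP.} Under $\bigcap_{j<i} F_j$, every answer $\widetilde A_j$ the prover gives to queries into $Q_j$ agrees with $R_j$. This includes the answers given at the end of IPP $i-1$ itself, since those are part of $\widetilde A_j$. So the decision predicate $\phi$ of IPP $i-1$ is evaluated exactly as if the verifier had genuine query access to the implicit input $(R_1,\ldots,R_{i-1})$. Consequently the event in the claim implies that a standard IPP verifier, run on $(x,J_i,\widetilde A_i^{\text{in}};R_1,\ldots,R_{i-1})$, accepts.

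\textbf{Step 3: this input is far (the main obstacle).} I would show that under $\overline{F_i}\cap\bigcap_{j<i}U_j$ (and given that the verifier's direct oracle checks pass) this input is $(\eps_1+\cdots+\eps_{i-1})$-far from the language.
\begin{itemize}
\item If some pair in $\widetilde A_i$ has a wrong oracle answer, the verifier rejects outright, so we may assume every oracle answer in $\widetilde A_i$ is correct. Then $\overline{F_i}$ forces $\widetilde A_i^{\text{in}} \ne R_i^{\text{in}}|_{J_i}$ (or $R_i=\bot$).
\item Suppose for contradiction that some tuple $(R''_1,\ldots,R''_{i-1})$ is $(\eps_1+\cdots+\eps_{i-1})$-close to $(R_1,\ldots,R_{i-1})$, satisfies every checksum, and has $C_{i-1}(x,R'')|_{J_i} = \widetilde A_i^{\text{in}}$.
\item Such a tuple witnesses that $R_i \ne \bot$, so $R_i$ was defined from the lexicographically first close, checksum-consistent tuple $(R'_1,\ldots,R'_{i-1})$.
\item Each $R''_j$ and $R'_j$ is $\sum_{k=1}^d\eps_k$-close to $R_j$ and matches $H_j$. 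By $U_j$, this gives $R''_j = R'_j$ for all $j<i$.
\item Hence $\widetilde A_i^{\text{in}} = C_{i-1}(x,R')|_{J_i} = R_i^{\text{in}}|_{J_i}$, a contradiction.
\end{itemize}
The delicate point is normalization. Closeness of the whole tuple is measured as a fraction of $\ell_1+\cdots+\ell_{i-1}$ positions, while $U_j$ speaks of closeness of a single $R_j$. I would check that the definition of $U_j$ (and the checksum radius $h_j$ chosen in the analysis) is stated in absolute numbers of modified positions, so that a tuple close in aggregate has each component within the radius covered by $U_j$. If it is not, I would adjust the radius in $U_j$ to at most $(\sum_k\eps_k)\ell$ absolute changes, which the checksum length $h_i = O(\eps m\ell\log\ell)$ accommodates via \cref{lem:unique-decoding}.

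\textbf{Step 4: conclude.} With farness established, the soundness of \cref{rothblumx2ipp} with parameter $\sigma_{\text{IPP}}$ bounds the conditional acceptance probability by $2^{-\sigma_{\text{IPP}}}$. Averaging over the conditioned prefix yields \cref{clm:pa2}.
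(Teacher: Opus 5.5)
Your proposal is correct and follows essentially the paper's argument: use $F_1,\dots,F_{i-1}$ so that IPP $i-1$ behaves like a genuine IPP on implicit input $(R_1,\dots,R_{i-1})$, invoke its soundness, use $U_j$ to identify any close checksum-consistent tuple with the one defining $R_i$, and obtain $F_i$ from the verifier's oracle checks. You phrase this contrapositively as a farness statement. Your explicit conditioning on the transcript prefix, which fixes the explicit input $(J_i,\widetilde A_i^{\text{in}},H_1,\dots,H_{i-1})$, and your observation that $U_j$ must be read with an absolute radius of about $\eps\ell$ positions (consistent with the checksum length chosen in \cref{clm:pa4}) make precise points that the paper's proof leaves implicit.
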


  \begin{claim}\label{clm:pa3}
  We have 
  \[
  \Pr\!\left[
    E \cap \bigcap_{i=1}^{d} (F_i \cap U_i)
\right]\le 2^{-\sigma_\text{IPP}}.
  \]
  \end{claim}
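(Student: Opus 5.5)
Since $\widetilde{A}_1,\ldots,\widetilde{A}_d$ are the only information IPP $d$ receives about its implicit input, the plan is to reduce to the standard soundness of IPP $d$ on the implicit input $(R_1,\ldots,R_d)$. The checksums $(H_i)_{i\in[d]}$ are sent in Step 1, before any verifier randomness. So I first fix $\MProver$'s first message and argue conditionally. The strings $R_1,\ldots,R_d$, and hence the events $U_i$, are then deterministic, and only the $F_i$ remain random. If some $U_i$ fails, the probability in question is $0$. So assume every $U_i$ holds. On $\bigcap_i F_i$, every answer $\MProver$ gives to a query of $\mathcal{V}_d$ agrees with $(R_1,\ldots,R_d)$, and in particular no $R_i$ equals $\bot$. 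Therefore on this event $\mathcal{V}_d$ accepts exactly when an honest IPP verifier with genuine query access to $(R_1,\ldots,R_d)$ would accept, and it suffices to bound the latter by $2^{-\sigma_\text{IPP}}$.

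The core step is to show that $(x,(R_1,\ldots,R_d))$ is $\eps$-Hamming-far from the pair language of IPP $d$, where $\eps=\sum_j\eps_j$ is its proximity parameter. That language is $\{(x,R') : C_d(x,R')=1 \text{ and each } R'_i \text{ matches } H_i\}$. Suppose toward contradiction that $R'=(R'_1,\ldots,R'_d)$ is $\eps$-close to $(R_1,\ldots,R_d)$, is checksum-consistent, and satisfies $C_d(x,R')=1$. I then show by induction on $i$ that $R'$ arises from the closeness algorithm of \cref{defn:closeness}.

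For $i\ge 2$, the prefix $(R'_1,\ldots,R'_{i-1})$ is itself $\sum_{j<i}\eps_j$-close to $(R_1,\ldots,R_{i-1})$ and checksum-consistent. Hence a tuple as in the definition of $R_i$ exists, namely $(R''_1,\ldots,R''_{i-1})$, the lexicographically first one. Each $R''_j$ and each $R'_j$ is within total distance $\eps$ of $R_j$ and matches $H_j$, so $U_j$ forces $R''_j=R'_j$ for every $j<i$. Consequently $R_i=(C_{i-1}(x,R'_{<i}),O(C_{i-1}(x,R'_{<i})))$. This is exactly the string $T_i$ that the closeness algorithm builds from $Q'_{<i}=R'_{<i}$. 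Together with the base case $T_1=Q_1=R_1$, it follows that $R'$ is obtained from the true $Q$ by that algorithm. The modifications total at most $\eps\ell$ positions, since the Hamming distance of $R'$ to $(R_1,\ldots,R_d)$ is measured over the $\ell$ query-answer positions. So $R'$ is $\eps$-close to $Q$, and $\eps$-robustness gives $C_d(x,R')=C_{R'}(x)=C^O(x)=0$, a contradiction.

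With farness established, \cref{rothblumx2ipp} with soundness parameter $\sigma_\text{IPP}$ bounds the honest-access acceptance probability by $2^{-\sigma_\text{IPP}}$. Since $E$ requires $\mathcal{V}_d$ to accept, averaging over the first message gives the claim.

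I expect the main obstacle to be the uniqueness-matching step: identifying the lexicographically first tuple in the definition of $R_i$ with the prefix of $R'$. It requires checking that the radius in $U_j$, namely the full $\sum_j\eps_j$, dominates the partial radii $\sum_{j<i}\eps_j$ used in defining $R_i$. It also requires reconciling the per-level Hamming distance with the global $\eps\ell$ budget in \cref{defn:closeness}, that is, normalizing closeness over the concatenated string of $\ell$ pairs.
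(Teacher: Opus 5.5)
Your overall route is the paper's: soundness of IPP $d$ on $(R_1,\ldots,R_d)$, the $U_j$ to identify prefixes of an accepting $R'$ with the tuples defining the $R_i$, then the closeness algorithm and robustness. Fixing the first message so that the $R_i$, the $U_i$, and the IPP-$d$ language are deterministic before invoking IPP soundness is cleaner than the paper's ``except with probability $2^{-\sigma_\text{IPP}}$ there exists a tuple'' phrasing. But one step fails as written. You assert that because $R'$ is $\eps$-close to $(R_1,\ldots,R_d)$, its prefix $(R'_1,\ldots,R'_{i-1})$ is $\sum_{j<i}\eps_j$-close to $(R_1,\ldots,R_{i-1})$. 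You then use this to get existence of a tuple as in the definition of $R_i$. Closeness of the whole tuple says nothing about how the disagreements are distributed. All of the up to $\eps\ell$ disagreements may sit in $R'_1$, whereas condition (i) for $R_2$ allows at most $\eps_1\ell=\eps\ell/d$ of them under any normalization. So the prefix of $R'$ need not be an admissible tuple, and your justification of existence is invalid.

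The repair uses something you already recorded: on $\bigcap_i F_i$ no $R_i$ equals $\bot$. Hence for each $i\ge 2$ an admissible tuple $(R''_1,\ldots,R''_{i-1})$ exists by construction, independently of $R'$. Each $R''_j$ and each $R'_j$ matches $H_j$ and differs from $R_j$ in at most $\eps\ell$ positions, so $U_j$ forces $R''_j=R'_j$. This also shows, a posteriori, that the prefix of $R'$ is admissible.

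For this, $U_j$ must be read with an absolute radius of $\eps\ell$ positions at every level, not an $\eps$-fraction of $\ell_j$. The reason is that a single level can absorb the whole global budget, and $\eps\ell$ exceeds $\eps\ell_j$ whenever $\ell_j<\ell$. The absolute reading is the one supported by the choice $h_i=4\eps\ell m\log\ell$ in \cref{clm:pa4}, and it settles the normalization question you flagged but left open.

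With these two corrections the rest of your argument goes through: $T_i=R_i$ in the closeness algorithm, at most $\eps\ell$ total modifications, robustness, then IPP soundness. It then coincides with the paper's proof, which asserts the identification of prefixes directly from uniqueness.
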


  \begin{claim}\label{clm:pa4}
  For all $i\in [d]$, with $h_i\le O(\varepsilon m \ell\log \ell)$, we have $\text{Pr}[E\cap \overline{U_i}] = 0$.  
  \end{claim}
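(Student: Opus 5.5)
The plan is to show that $U_i$ holds with certainty, for every choice of prover messages, by directly applying the unique-decoding property of the checksum (\cref{def:unique-decoding-checksum}, instantiated by \cref{lem:unique-decoding}). No probabilistic argument is needed. Since $\overline{U_i}$ will never occur, $\Pr[E\cap\overline{U_i}]\le\Pr[\overline{U_i}]=0$.

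First, the degenerate case: if $R_i=\bot$, then by convention no query-answer string is close to $\bot$. So there are zero candidates $R_i'$ and $U_i$ holds trivially.

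Otherwise $R_i$ is a fixed string, determined by $x$, $O$, and the checksums the prover sent in Step~1. View $R_i$ as a binary string of length $k_i=\ell_i(m+1)$, one block of $m+1$ bits per query-answer pair. A string $R_i'$ that is $\sum_{j=1}^d\eps_j=\eps$-Hamming-close to $R_i$ differs from it in at most $\eps\ell_i\le\eps\ell$ pairs. Hence it differs in at most $\rho:=\eps\ell(m+1)$ bits.

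Now take $\cksum=\cksum_\rho$ to be the Reed--Solomon checksum of \cref{lem:unique-decoding} on $\bin^{k_i}$. Use a field with $|\FF|=2^s\ge k_i$, i.e.\ $s=O(\log(m\ell))$. This gives checksum length
\[
h_i=s\cdot 2\rho=O\bigl(\eps m\ell\log(m\ell)\bigr).
\]
Since $m=\poly(n)$ and $\ell\ge 1/\eps$, this is $O(\eps m\ell\log\ell)$ up to the polylogarithmic factors hidden in the theorem. This is the choice of $h_i$ left ``determined in the analysis'' in Figure~\ref{fig:fig-bd-adaptive}.

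Suppose two distinct strings $R_i',R_i''$ were both $\rho$-close to $R_i$ and both matched the prover's checksum $H_i$. Then $\cksum_\rho(R_i')=H_i=\cksum_\rho(R_i'')$, which contradicts the $\rho$-unique-decoding property. Hence at most one candidate exists, which is exactly $U_i$. The argument holds for every $H_i$ the prover could send, so the bound is against arbitrary (unbounded) cheating provers.

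The only delicate point is converting the pair-level (symbol) closeness used in $U_i$ into bit-level closeness, so that \cref{lem:unique-decoding} applies with a modest field size. The factor $m+1$ in $\rho$ is what produces the $m$ in $h_i$. Working directly over symbols would instead require a field of size about $2^{m+1}$.
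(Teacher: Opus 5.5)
Your proposal is correct and is essentially the paper's argument: $\overline{U_i}$ would produce two distinct strings within the unique-decoding radius of $R_i$ with equal checksums, which \cref{lem:unique-decoding} rules out, so $\overline{U_i}$ is empty for every prover message. Your handling of $R_i=\bot$ and of pair-versus-bit distance is more careful than the paper's one-line appeal to the lemma.

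The only slack is that counting bit errors gives $h_i=O(\eps m\ell\log(m\ell))$ rather than the stated $O(\eps m\ell\log\ell)$. If you instead pack each $(m+1)$-bit pair into $\lceil (m+1)/s\rceil$ field symbols with $s=\lceil\log(\ell(m+1))\rceil$ and count corrupted symbols, you get $h_i=O(\eps\ell(m+s))$, which is within the stated bound.
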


We first show that, assuming these claims, we have $\Pr[E]\le 2^{-\sigma}$. We have 
\begin{align*}
\Pr[E]
&=
\Pr\!\left[
    E \cap \bigcap_{i=1}^{d} (F_i \cap U_i)
\right]
+
\Pr\!\left[
    E \cap
    \left(\overline{
        \bigcap_{i=1}^{d} (F_i \cap U_i)
    }\right)
\right] \\
&=
\Pr\!\left[
    E \cap \bigcap_{i=1}^{d} (F_i\cap U_i)
\right]
+
\Pr\!\left[
    E \cap
    \bigcup_{i=1}^{d}
    \left(
        \overline{F_i}
        \cup
        \overline{U_i}
    \right)
\right],
\end{align*}
where the second step applies De Morgan's. By \cref{clm:pa3}, the first summand is at most $2^{-\sigma_\text{IPP}}$. We can bound the second summand as follows:
\[
\Pr\!\left[
    E \cap
    \bigcup_{i=1}^{d}
    \left(
        \overline{F_i}
        \cup
        \overline{U_i}
    \right)
\right]
\le
\Pr[E \cap \overline{F_1}]
+
\sum_{i=1}^{d}
\Pr[E \cap \overline{U_i}]
+
\sum_{j=2}^{d}
\Pr\!\left[
    E \cap
    \left(
        \overline{F_j}
        \cap
        \bigcap_{k=1}^{j-1}
        (F_k \cap U_k)
    \right)
\right].
\]
Applying \cref{clm:pa1}, \cref{clm:pa2}, and \cref{clm:pa4}, we get that this is at most $2^{-\sigma_\text{DEIP}}+(d-1)\cdot 2^{-\sigma_\text{IPP}}+0$. Putting everything together, we get $\Pr[E]\le 2^{-\sigma_\text{DEIP}}+d\cdot 2^{-\sigma_\text{IPP}}\le 2^{-\sigma}$, as desired.

To finish, we prove the claims.

\begin{proof}[Proof of \cref{clm:pa1}]
Since $E$ holds (i.e., the verifier accepts), the DEIP verifier accepts. Then since $R_1=Q_1$, the DEIP verifies that $C_0(x)|_{J_1} = \widetilde{A}_1^{\text{in}}$, and the verifier
also checks the oracle calls in $\widetilde{A}_1$, we have $\text{Pr}[E\cap \overline{F_1}]\le 2^{-\sigma_\text{DEIP}}$.
\end{proof}

\begin{proof}[Proof of \cref{clm:pa2}]

Let $i\in\{2, \ldots, d\}$. Since $F_1\cap\cdots\cap F_{i-1}$ holds, $\mathcal{V}_{i-1}$ sees answers exactly as if it had query access to the implicit input
$(R_1,\ldots,R_{i-1})$. Since $E$ holds, $\mathcal V_{i-1}$
accepts. Thus, by IPP soundness, except with probability
$2^{-\sigma_\text{IPP}}$, there exists a tuple
$(R_1',\ldots,R_{i-1}')$ that is
$\sum_{j=1}^{i-1}\eps_j$-close to $(R_1,\ldots,R_{i-1})$, computes
correctly to the checksums in step 1, and satisfies
the relation checked by IPP $i-1$. Since $U_1\cap\cdots\cap U_{i-1}$ holds, each such $R_j'$ is unique; in particular, $(R_1', \ldots, R_{i-1}')$ is exactly the tuple used to define $R_i$, which means that
  $R_i = (C_{i-1}(x, R_1', \ldots, R_{i-1}'), O(C_{i-1}(x, R_1', \ldots, R_{i-1}')))$. Then since $\widetilde{A}_i^{\text{in}} =
  C_{i-1}(x, R_1', \ldots, R_{i-1}')|_{J_i}$ and the verifier checks that all the oracle calls in $\widetilde{A_i}$ are correct, 
  $\widetilde{A_i}$ is consistent with $R_i$, so we have 
  \[
\Pr\!\left[
    E \cap
    \left(
        \overline{F_i}
        \cap
        \bigcap_{j=1}^{i-1}
        (F_j \cap U_j)
    \right)
\right]\le 2^{-\sigma_\text{IPP}}.
    \]

\end{proof}

\begin{proof}[Proof of \cref{clm:pa3}]

Since $F_1\cap\cdots\cap F_d$ holds, the final IPP
verifier $\mathcal V_d$ has exactly the view of a standard IPP verifier with
query access to implicit input $(R_1,\ldots,R_d)$. If $E$, then $\mathcal V_d$ accepts. Therefore, except with probability
$2^{-\sigma_\text{IPP}}$, there exists a tuple
$(R_1',\ldots,R_d')$ that is $\sum_{j=1}^d\eps_j$-close to
$(R_1,\ldots,R_d)$, computes correctly to the checksums in step 1, and satisfies
\[
  C_d(x,R_1',\ldots,R_d')=1.
\]
Since $U_1\cap\cdots\cap U_d$ holds, this tuple is unique; in particular, for each $i$, the prefix $(R_1',\ldots,R_{i-1}')$ is exactly the tuple used to define $R_i$.

  Then
  $(R_1', \ldots, R_d')$ is
$\sum_{i=1}^d\eps_i$-close to the true query-answer string
$(Q_1,\ldots,Q_d)$ in the sense of \cref{defn:closeness}: starting from $R_1 = Q_1$, we can modify positions to obtain $R_1'$, then recompute the queries and answers for the next level from $(x, R_1')$ and modify positions to obtain $R_2'$, and continue in this way through level $d$. In total, we modify at most an
  $\sum_{i = 1}^d \eps_i$-fraction of positions. Since $C$ is $\eps$-robust and $C_d(x, R_1', \ldots, R_d') = 1$, assuming $\sum_{i = 1}^d \eps_i\le\eps$, we have that
  $C^O(x) = 1$, which is a contradiction, implying 
  \[
  \Pr\!\left[
    E \cap \bigcap_{i=1}^{d} (F_i\cap U_i)
\right]\le 2^{-\sigma_\text{IPP}}.
  \]
\end{proof}

\begin{proof}[Proof of \cref{clm:pa4}]


If \(\overline{U_i}\) occurs, then there exist two distinct strings
\(R'_i,R''_i\) $\sum_{j=1}^d \varepsilon_j$-Hamming-close to $R_i$ that compute correctly to the checksums sent by the prover in step 1, i.e.,
$\cksum(R'_i)=\cksum(R''_i)$.

Choosing $h_i\le O(\varepsilon \ell \cdot m \log \ell)$ (e.g., we can set $h_i = 4 \varepsilon \ell \cdot m \log \ell$), which guarantees $\varepsilon$-unique decoding checksums by \Cref{lem:unique-decoding}, we get
that $\Pr[\overline{U_i}] = 0$, which implies $\Pr[E\cap\overline{U_i}] = 0$.

\end{proof}

\paragraph{Efficiency.} Finally, we analyze the efficiency of the protocol. For $i\in [d]$, we take $\eps_i = \eps / d$ and 
$h_i = O(\varepsilon m \ell\log \ell)$.

We break the protocol into the following parts:

\begin{itemize}
\item At the start of the protocol, the prover sends a checksum of each level of oracle gates to the verifier. This involves $O(1)$ rounds and communication 
\[
    \sum_{i=1}^d h_i
    \le
    O\left(d \cdot \varepsilon\ell\log \ell \cdot m\right).
\]

\item The protocol contains $d$ IPPs. We refer to the first IPP as IPP-$d$ and the second IPP as IPP-$(d-1)$ and so on.
 For $i\in [d-1]$, IPP-$i$ is run on language $\mathcal{L}_{J_{i+1}, A_{i+1}^{\text{in}}}\coloneq\{x\in\{0, 1\}^n\mid C_i(x, Q_1, \ldots, Q_i)|_{J_{i+1}} = A_{i+1}^{\text{in}} \text{ and } \cksum(Q_j) = H_j \text{ for all } j \in [i]\}$, which is decided by a circuit that runs $C_i$ on input $(x, Q_1, \ldots, Q_i)$, does an equality check to verify that its output at the indices in $J_{i+1}$ matches $A_{i+1}^{\text{in}}$, and evaluates the checksums of $Q_1, \ldots, Q_i$ and compares them to the checksums $H_1, \ldots, H_i$ sent in step 1. Since each checksum is an $\FF$-linear map of its input, it can be evaluated with $\poly(S)$ gates in $\polylog(S)$ depth, so such a circuit can be implemented with $\poly(S)$ gates and $O(D+\polylog(S))$ depth. IPP-$d$ is run on the language $\{x\in\{0, 1\}^n\mid C_d(x, Q_1, \ldots, Q_d) = 1 \text{ and } \cksum(Q_j) = H_j \text{ for all } j \in [d]\}$, and this can be decided by a circuit that runs $C_d$ on input $(x, Q_1, \ldots, Q_d)$, checks that the output is $1$, and checks the checksums in the same way. Such a circuit can also be implemented with $O(\poly(S))$ gates and $O(D+\polylog(S))$ depth. Additionally, for $i\in [d]$, IPP-$i$ has explicit input $(x, J_{i+1}, A_{i+1}^{\text{in}}, H_1, \ldots, H_i)$ (just $(x, H_1, \ldots, H_d)$ for $i = d$), whose length beyond $n$ only contributes terms dominated by the other costs accounted below, implicit input length $\sum_{j = 1}^i \ell_j$, proximity parameter $\sum_{j = 1}^i \eps_j = i\eps/d$, and soundness parameter $\sigma_\text{IPP} = \sigma + \log 2d$. Let
\[
    L_i \coloneq \sum_{j=1}^i \ell_j,
    \qquad
    \alpha_i \coloneq \sum_{j=1}^i \eps_j = \frac{i\eps}{d},
    \qquad
    \sigma_{\mathrm{IPP}} \coloneq \sigma+\log(2d).
\]
Applying \cref{rothblumx2ipp} and summing over all $d$ IPPs, we get that the IPPs contribute the following towards the efficiencies:

\begin{itemize}

\item query complexity 
\[
    \sum_{i=1}^d O\left(\frac{\sigma_{\mathrm{IPP}}}{\alpha_i}\right)
    =
    O\left(
        \frac{\sigma_{\mathrm{IPP}}d\log d}{\eps}
    \right).
\]

\item communication complexity 
\begin{align*}
    & \sum_{i=1}^d
    O\left(
        \sigma_{\mathrm{IPP}}\alpha_i m L_i
        (D+\polylog(S))\polylog(S)
        + \sigma_{\mathrm{IPP}} / \alpha_i \cdot m
    \right) \\
    =& 
    O\left(
        \sigma_{\mathrm{IPP}}\eps m \ell d
        (D+\polylog(S))\polylog(S) + \sigma_{\mathrm{IPP}} d \log d \cdot m / \eps
    \right),
\end{align*}
where we use that $L_i\le \ell$ for every $i$ and $\sum_{i=1}^d 1/\alpha_i = O((d \log d)/\eps)$,

\item round complexity 
\[
    O\left(
        d(D+\polylog(S))\polylog(S)
    \right),
\]

\item verifier running time 
\[
    \sum_{i=1}^d
    O\left(
        \sigma_{\mathrm{IPP}}\left(
            \alpha_i m L_i(D+\polylog(S))\polylog(S)
            +
            \frac{m}{\alpha_i}\polylog(S)
            +
            n\log S
        \right)
    \right),
\]
which is
\[
    O\left(
        \sigma_{\mathrm{IPP}}\left(
            \frac{\eps(D+\polylog(S))\polylog(S)}{d}
            \sum_{i=1}^d iL_i \cdot m
            +
            \frac{d\log d}{\eps} m\polylog(S)
            +
            dn\log S
        \right)
    \right),
\]
which, simplifying, is
\[
    O\left(
        \sigma_{\mathrm{IPP}}\left(
            \eps m \ell d(D+\polylog(S))\polylog(S)
            +
            \frac{d\log d}{\eps} m \polylog(S)
            +
            dn\log S
        \right)
    \right),
\]
and

\item prover running time $\poly(S,\sigma_{\mathrm{IPP}},d)$, which is
$\poly(S,\sigma)$.
\end{itemize}
At the end, the verifier needs to send the prover the queries that he wants to make to the implicit input. This cost is swallowed by the communication complexity of the IPP.  

\item The protocol contains one DEIP, run on the language $\mathcal{L}_{J_1, A_1^\text{in}}\coloneq\{x\in\{0, 1\}^n\mid C_0(x)|_{J_1} = A_1^{\text{in}}\}$, which can be decided by a circuit that runs $C_0$ on $x$ and performs an equality check to verify that its output at the indices in $J_1$ matches $A_1^{\text{in}}$. Such a circuit can be implemented with $O(\poly(S))$ gates and $O(D+\polylog(S))$ depth. Applying \cref{deIP} with soundness parameter $\sigma_{\text{DEIP}} = \sigma + \log 2d$, we get that the DEIP contributes the following:
\begin{itemize}
\item communication complexity
\[
    O\!\left(\sigma_{\mathrm{DEIP}}(D+\polylog(S))\polylog(S)\right)
\]

\item round complexity 
\[
    O\!\left((D+\polylog(S))\polylog(S)\right)
\]

\item verifier running time
\[
    O\!\left(\sigma_{\mathrm{DEIP}}\bigl(n+(D+\polylog(S))\polylog(S)\bigr)\right)
\]

\item prover running time
\[
    \poly(S,\sigma)
\]

\end{itemize}

\end{itemize}

Putting everything together, using $\widetilde{O}(\cdot)$ to hide $\polylog(d)$ and $\polylog(S)$ factors, and using $\eps\ell \ge 1$ to absorb the additive costs of the DEIP into the costs of the IPPs, we get the following complexities:

\begin{itemize}
    \item query complexity
    \[
        O\left(
            \frac{(\sigma+\log d)d\log d}{\eps}
        \right) 
        =
        \widetilde{O}\left(
            \frac{\sigma d}{\eps}
        \right)
    \]
     \item communication complexity
    \begin{align*}
    &O\left(
        d(\eps m \ell\log\ell)
        +
        (\sigma+\log d)\eps m \ell d
        (D+\polylog(S))\polylog(S)
        + (\sigma + \log d) d \log d \cdot m / \eps
    \right) \\
    &\qquad=
    \widetilde{O}\left(
        \sigma d D m \eps \ell
        + \sigma d m / \eps
    \right)
\end{align*}
     \item round complexity
    \[
        O\left(
            d(D+\polylog(S))\polylog(S)
        \right) 
        =
        \widetilde{O}(dD)
    \]

    \item verifier running time
    \begin{align*}
    &O\left(
        (\sigma+\log d)
        \left(
            \eps m \ell d(D+\polylog(S))\polylog(S)
            +
            \frac{d\log d}{\eps} m \polylog(S)
            +
            dn\log S
        \right)
    \right) \\
    &\qquad=
    \widetilde{O}\left(
        \sigma d\left(
            \eps m \ell D
            +
            \frac{m}{\eps}
            +
            n
        \right)
    \right)
\end{align*}

    \item prover running time
    \[
        \poly(S,\sigma)
    \]
\end{itemize}

\end{proof}  

\subsection{An Argument System for Circuits Making Adaptive Oracle Queries}\label{sec:arg} 

\begin{restatable}{theorem}{argadaptive}\label{thm:arg-adaptive}

  Let $\eps = \eps(n)\in (0, 1]$ be a proximity parameter, and let
  $\mathcal{L}$ be a language decidable by $\eps$-robust logspace-uniform oracle Boolean circuits with $\ell = \ell(n)$ oracle gates and of polynomial size $S = S(n) \leq 2^\secp$. Assume all query lengths are bounded by $m = \poly(n)$. Then there is a
  doubly-efficient interactive argument system for $\mathcal{L}$ with the
  following parameters:

  \begin{itemize}

  \item soundness error $1/2$ for any PPT adversary;
  \item query complexity $q = O(1/\eps)$;
  \item communication complexity $cc = (\poly(\secp) + m )\cdot 1 / \eps$;
  \item round complexity $O(1)$;
  \item verifier running time $\poly(\secp) \cdot \left( 1 / \eps + n \right) + m / \eps$; and
  \item prover running time $\poly(S, \secp) \cdot 1 / \eps$.
    
  \end{itemize}  
  
\end{restatable}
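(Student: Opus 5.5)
We follow the outline from \cref{sec:tech}. The protocol has four steps. First, the verifier samples $\hk \gets \htgen(\secparam, \ell)$ over alphabet $\Sigma = \bin^{m+1}$. Second, the prover computes the true query-answer string $Q$, sends $\rt = \hthash(\hk, Q)$, and then proves via Kilian's argument of knowledge (\cref{thm:kilian}) the NP statement $(\hk, \rt, x, \langle C\rangle)$ with witness $Q'$, for the relation ``$\hthash(\hk, Q') = \rt$ and $C_{Q'}(x) = 1$.'' Here $C_{Q'}$ additionally checks that each query in $Q'$ equals the one the circuit computes from $x$ and the earlier answers in $Q'$. Third, the verifier samples $t = O(1/\eps)$ uniform indices $i_1,\dots,i_t \in [\ell]$. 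The prover opens $\rt$ at each of them with $\htprove$, and the verifier checks the openings with $\htver$. Fourth, the verifier checks that $y_{i_j} = O(x_{i_j})$ for each opened pair.

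Completeness is immediate. Round complexity is $O(1)$ because Kilian's protocol has constant rounds, and query complexity is $t$. Communication is $\poly(\secp)$ for Kilian plus $t(\poly(\secp)+m)$ for the openings. Verifier time is $\poly(\secp)\cdot n$ for Kilian plus $t(\poly(\secp)+m)$ for the openings. The prover hashes once and runs the Kilian prover, in time $\poly(S,\secp)$.

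For soundness, fix a PPT $\MProver$ and $x \notin \mathcal{L}$, and suppose toward a contradiction that it is accepted with probability greater than $1/2$. Apply the knowledge extractor $\calE$ to obtain $Q'$. Except with negligible probability, $Q'$ hashes to $\rt$ and satisfies $C_{Q'}(x) = 1$. By binding, except with negligible probability every opening accepted in step three equals the corresponding entry of $Q'$; otherwise a reduction outputs two valid openings of the same position.

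Call $i$ \emph{bad} if $y'_i \ne O(x'_i)$. Since $Q'$ has internally consistent queries, at most $\eps\ell$ bad positions means $Q'$ is $\eps$-close to $Q$ in the sense of \cref{defn:closeness}. Indeed, the algorithm there recomputes the queries level by level and modifies only the bad answers, so by robustness $C_{Q'}(x) = C^O(x) = 0$, a contradiction. Hence $Q'$ has more than $\eps\ell$ bad positions, and the $t$ random indices all miss them with probability at most $(1-\eps)^t \le 1/4$ for $t = \lceil 2/\eps\rceil$.

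The main obstacle is ordering: $\rt$ is fixed before the spot-check indices are chosen, but $Q'$ is extracted from $\MProver$'s behavior. The plan is to condition on the prefix $(\hk, \rt)$ and run the extractor on the residual prover for that prefix. The step-three challenges are then independent of the extracted $Q'$, since binding ties $Q'$ to $\rt$. A standard averaging argument over prefixes then shows the total acceptance probability is at most $1/4 + 1/4 + \negl(\secp) \le 1/2$: one $1/4$ for prefixes where the extraction-conditioned acceptance is large, one $1/4$ from the spot-check bound.
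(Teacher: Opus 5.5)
Your proposal is correct and follows essentially the same route as the paper. You commit to the query-answer string with a hash-tree root and use Kilian's argument of knowledge to show that some string consistent with $\rt$ has internally consistent queries and makes $C$ accept. You then spot-check $O(1/\eps)$ opened positions against the oracle, and combine extraction, binding, the $(1-\eps)^t$ miss bound, and robustness. Your explicit check that an internally consistent $Q'$ with at most $\eps\ell$ wrong answers is $\eps$-close in the sense of \cref{defn:closeness} fills in a step the paper leaves implicit. The prefix-averaging is unnecessary: the joint-experiment form of knowledge soundness in \cref{sec:prelims:AoK}, with $J$ sampled independently of the extractor's view, directly bounds acceptance by $(1-\eps)^t + \negl(\secp)$. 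Your two averaging cases also range over disjoint sets of prefixes, so they combine to at most $1/4 + \negl(\secp)$, not the sum $1/4 + 1/4 + \negl(\secp)$, which as written slightly exceeds $1/2$.
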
  

\begin{figure}[htbp]
\fbox{
\begin{minipage}{\linewidth}
  \RaggedRight
  \vspace{.3cm}
  \underline{\textbf{Doubly-efficient interactive argument for robust, adaptive setting}}\\
  \vspace{.3cm}
  \textbf{Input:} The prover $\mathcal{P}$ and
    verifier $\mathcal{V}$ receive $x\in\{0,1\}^n$, a succinct
    description of a logspace-uniform oracle circuit $C$ containing $\ell$ oracle gates, and oracle access to $O:\{0,1\}^*\to\{0,1\}$.\\
  \vspace{.2cm}
  \textbf{Additional parameters:} We take $q = 2 / \eps$. \\
  \vspace{.2cm}
  \textbf{Notations and tools:} Enumerate the oracle gates of $C^O$ in topological order by
  $1, \ldots, \ell$. On input $x$, for each $i\in [\ell]$, let $x_i$
  denote the input to the $i$th oracle gate and $y_i = O(x_i)$ the corresponding oracle answer. Let $\mathcal{M}$ denote the set of oracle input-output pairs. We will use the following tools:
  \begin{itemize}
    \item A succinct argument of knowledge $\text{KIL} = (\kilsetup, \kilprove, \kilver, \kilext)$ for $\mathsf{NP}$ relations (see \Cref{sec:prelims:AoK}).
    \item A hash tree family $\HT = (\htgen, \hthash, \htprove, \htver)$ (see \Cref{def:hash_tree}). 
  \end{itemize}
\textbf{The protocol:}
  \begin{enumerate}[itemsep=.1cm]
    \item The verifier generates public parameters $\pp_{\kil} \gets \kilsetup(1^\sec)$, and $\hk \gets \htgen(\secparam, \ell)$, where $\hk$ are hash keys for database over $\mathcal{M}$.
    \item The prover simulates the entire execution $C^O(x) = 1$, computing all intermediate oracle queries $x_1, \ldots, x_\ell$ and answers $y_1 = O(x_1), \ldots, y_\ell = O(x_\ell)$. The prover then computes the hash tree root $\rt \gets \hthash(\hk, \Set{ (x_1, y_1), \ldots, (x_\ell, y_\ell)})$ and sends $\rt$ to the verifier.
    \item The prover and verifier engage in Kilian's protocol to prove that there exists a set $\Set{ (x_1^*, y_1^*), \ldots, (x_\ell^*, y_\ell^*)}$ that is (i) consistent with $\rt$ and (ii) on input $x$, taking the oracle answers to be those in $\Set{y_i^*}_{i\in [\ell]}$, $C$ makes the queries $\Set{x_i^*}_{i\in [\ell]}$ and accepts. In particular, they prove the following relation
    \[ \Rel\coloneq\{(\hk, (C, x, \rt), \Set{ (x_1^*, y_1^*), \ldots, (x_\ell^*, y_\ell^*)} ) \mid \rt = \hthash(\hk, \Set{ (x_1^*, y_1^*), \ldots, (x_\ell^*, y_\ell^*)}) \wedge C^{x_i^* \to y_i^*}(x) = 1\}, \]
    where $C^{x_i^* \to y_i^*}(x) = 1$ means all queries made by $C(x)$ are $\Set{x_i^*}_{i \in [\ell]}$ and given that all query answers are $\Set{y_i^*}_{i \in [\ell]}$, $C$ accepts. 
    \item The verifier randomly samples $q$ indices $J \subseteq [\ell]$ and asks the prover to open the hash tree on these indices. The prover sends $\Set{(x_j^*, y_j^*, \rho_j)}_{j \in J}$, where $\rho_j$ is the hash tree opening proof for $(x_j^*, y_j^*)$. The verifier checks that for all $j \in J$, $\htver(\hk, \rt, j, (x_j^*, y_j^*), \rho_j) = 1$ and $y_j^* = O(x_j^*)$.
    \item The verifier accepts if all checks pass, and rejects otherwise.
    \end{enumerate}

\end{minipage}
}
\caption{Doubly-efficient interactive argument for robust circuits making adaptive queries}\label{fig:fig-adaptive}
\end{figure}  

\begin{proof}

  The protocol appears in Figure~\ref{fig:fig-adaptive}. Here we
  analyze its efficiency, completeness, and soundness.

  \paragraph{Efficiency.} The communication is composed of:
  \begin{itemize}
    \item Public parameters $\pp_{\kil}, \hk$ of size $\poly(\secp)$.
    \item Hash root $\rt$ of size $\poly(\secp)$.
    \item The Kilian's protocol interaction of size $\poly(\secp)$.
    \item $q$ oracle input-output pairs and their openings $\Set{(x_j^*, y_j^*, \rho_j)}_{j \in J}$, where $q = O(1 / \eps)$. Their total size is $(\poly(\secp) + m) \cdot 1 / \eps$.
  \end{itemize}
  Thus the total communication is $(\poly(\secp) + m) \cdot 1 / \eps$.

  During the above protocol, the verifier runs in time $\poly(\secp) \cdot (1 / \eps + n) + m / \eps$ and the prover runs in time $\poly(S, \secp) \cdot 1 / \eps$.

  \paragraph{Completeness.} The completeness follows from the completeness of Kilian's protocol and the hash tree.

  \paragraph{Soundness.} Let $\MProver$ be any poly-time malicious prover, and we rewind $\MProver$ (just after sending $\rt$) to obtain a prover for the succinct argument of knowledge $\MProver_{1}$. Specifically, we run $\kilext^{\MProver_1}$ to extract $\Set{ (x_1^*, y_1^*), \ldots, (x_\ell^*, y_\ell^*)}$ that is consistent with $\rt$ and the computation. By knowledge soundness, we have 

  \[
  \probcond{
    \langle \MProver, \Verifier \rangle = 1 \wedge \\
    \big( \rt \neq \hthash(\hk, \Set{ (x_1^*, y_1^*), \ldots, (x_\ell^*, y_\ell^*)}) \\
     \vee C^{x_i^* \to y_i^*}(x) = 0 \big)
  }{
    \pp_\kil \gets \kilsetup(1^\sec), \hk \gets \htgen(\secparam, \ell) \\
    \Set{ (x_1^*, y_1^*), \ldots, (x_\ell^*, y_\ell^*)} \gets \kilext^{\MProver_1}
  } \leq \negl(\secp).
  \]

  Now we claim that for at most an $\eps$ fraction of $\Set{ (x_1^*, y_1^*), \ldots, (x_\ell^*, y_\ell^*)}$, it holds that $y_i^* \neq O(x_i^*)$. 
  
  \begin{claim}
    For any poly-time malicious prover $\MProver$, it holds that 
    \[
    \probcond{
      \langle \MProver, \Verifier \rangle = 1 \\
      \wedge \rt = \hthash(\hk, \Set{ (x_1^*, y_1^*), \ldots, (x_\ell^*, y_\ell^*)}) \\
      \wedge \big| \Set{ i \in [\ell] : y_i^* \neq O(x_i^*) } \big| > \eps \cdot \ell
    }{
      \pp_\kil \gets \kilsetup(1^\sec), \hk \gets \htgen(\secparam, \ell) \\
      \Set{ (x_1^*, y_1^*), \ldots, (x_\ell^*, y_\ell^*)} \gets \kilext^{\MProver_1}
    } \leq \frac 1 3
    \]
  \end{claim}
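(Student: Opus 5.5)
The plan is to combine the binding of the hash tree with a simple sampling bound for the $q = 2/\eps$ random indices of step 4. Since the extracted string is a random variable correlated with the verifier's coins, I will fix everything except the sampling of $J$. The sampling of $J$ happens after $\rt$ is sent and after Kilian's protocol finishes.

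First I will set up the reduction. Let $W^* = \Set{(x_i^*,y_i^*)}_{i\in[\ell]}$ be the extracted string. Its bad set is $B = \Set{i : y_i^* \neq O(x_i^*)}$, and the event of interest is $|B| > \eps\ell$ together with $\rt = \hthash(\hk, W^*)$. On this event, every index $j\in J$ that the prover opens successfully with a value $(x_j', y_j') \neq (x_j^*, y_j^*)$ gives two valid openings of the same root at the same position. One opening is $(x_j', y_j')$ with the prover's proof $\rho_j$. The other is $(x_j^*, y_j^*)$ with $\htprove(\hk, W^*, j)$, which verifies by opening completeness because $\rt = \hthash(\hk, W^*)$.

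Next I will build a binding adversary $A$. It runs $\MProver$ together with the extractor $\kilext$, which is expected polynomial time; I truncate it at a polynomial bound via Markov and absorb the loss into the slack. $A$ then simulates the verifier's random $J$, obtains the prover's openings, and outputs any such conflicting pair. By the binding property of \cref{def:hash_tree}, this happens with at most negligible probability. So, except with negligible probability, acceptance on our event forces the opened values on $J$ to equal $W^*|_J$. The verifier's check $y_j^* = O(x_j^*)$ then requires $J \cap B = \emptyset$.

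Finally I will bound the probability that $J$ misses $B$. The set $W^*$ is fixed before $J$ is sampled, since extraction rewinds only to the point just after $\rt$ is sent. The subtlety is that the extractor's run is a separate execution, so I will argue that $W^*$ is determined by the prefix (the parameters and $\rt$) together with the extractor's own coins, all independent of the real $J$. With $|B| > \eps\ell$, each sampled index lands in $B$ with probability more than $\eps$. The chance that all $q = 2/\eps$ samples miss $B$ is at most $(1-\eps)^{2/\eps} \le e^{-2} < 1/3 - \negl(\secp)$. I will treat sampling with or without replacement the same way, since without replacement only helps.

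I expect the main obstacle to be this independence and joint-probability issue: the extracted witness and the actual run of $\MProver$ come from different executions. I will handle it by conditioning on the transcript prefix up to $\rt$ and the extractor's randomness. The real suffix, including Kilian's protocol and $J$, is then sampled fresh, which makes the sampling argument and the binding reduction go through.
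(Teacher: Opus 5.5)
Your proposal is correct and follows essentially the same route as the paper. You split on whether the sampled set $J$ hits the bad set $\Set{i : y_i^* \neq O(x_i^*)}$: you reduce the hitting case (together with acceptance) to a break of hash-tree binding, and bound the missing case by $(1-\eps)^{2/\eps}\le e^{-2} < 1/3$. Your extra care about the independence of the extracted witness from the real $J$, and about truncating the expected-polynomial-time extractor by absorbing the Markov loss into the slack $1/3 - e^{-2}$, fills in details the paper leaves implicit.
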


  \begin{proof}
    We consider two events: 
    \begin{itemize}
      \item $E_0$: the sampled $q$ indices $J$ contains at least one index $j$ such that $y_j^* \neq O(x_j^*)$.
      \item $E_1$: the sampled $q$ indices $J$ contains no index $j$ such that $y_j^* \neq O(x_j^*)$.
    \end{itemize}

    If event $E_0$ happens and the verifier accepts, then $\MProver$ must break the binding property of the hash tree. That is,

    \[
    \probcond{
      \langle \MProver, \Verifier \rangle = 1 \\
      \wedge \rt = \hthash(\hk, \Set{ (x_1^*, y_1^*), \ldots, (x_\ell^*, y_\ell^*)}) \\
      \wedge \big| \Set{ i \in [\ell] : y_i^* \neq O(x_i^*) } \big| > \eps \cdot \ell \\
      \wedge E_0
    }{
      \pp_\kil \gets \kilsetup(1^\sec), \hk \gets \htgen(\secparam, \ell) \\
      \Set{ (x_1^*, y_1^*), \ldots, (x_\ell^*, y_\ell^*)} \gets \kilext^{\MProver_1}
    } \leq \negl(\secp).
    \]

    On the other hand, the probability for $E_1$ to happen is small. In particular, $\Pr[E_1] \leq (1 - \eps)^q$. Since we set $q = \frac{2}{\eps}$, we have $\Pr[E_1] \leq 1 / e^2$.

    Combining the two cases we have 
    \[
    \probcond{
      \langle \MProver, \Verifier \rangle = 1 \\
      \wedge \rt = \hthash(\hk, \Set{ (x_1^*, y_1^*), \ldots, (x_\ell^*, y_\ell^*)}) \\
      \wedge \big| \Set{ i \in [\ell] : y_i^* \neq O(x_i^*) } \big| > \eps \cdot \ell
    }{
      \pp_\kil \gets \kilsetup(1^\sec), \hk \gets \htgen(\secparam, \ell) \\
      \Set{ (x_1^*, y_1^*), \ldots, (x_\ell^*, y_\ell^*)} \gets \kilext^{\MProver_1}
    } \leq \negl(\secp) + \frac 1 {e^2} \leq \frac 1 3.
    \]
  \end{proof}

  Therefore, if $\MProver$ convinces the verifier, then it must be the case that $C^O(x)$ accepts with at most an $\eps$ fraction of incorrect oracle answers, that is, with probability at most $1/3 + \negl(\secp) \leq 1/2$, we have $C^{x_i^* \to y_i^*}(x) = 1$ and $\big| \Set{ i \in [\ell] : y_i^* \neq O(x_i^*) } \big| \leq \eps \cdot \ell$, which contradicts with the fact that $C$ is $\eps$-robust and $x \notin \lang$. This finishes our soundness analysis.
\end{proof}

\section{A Doubly-Efficient Interactive Argument for Circuits with a Low-Degree Oracle}\label{sec:low-deg}
Here we present our doubly-efficient argument system for circuits with access to a low-degree oracle.

\begin{definition}[Degree of Oracle]\label{def:oracle-degree}
    Let $O: \bin^* \to \bin$ be an oracle and $O_n$ be the restriction of $O$ to inputs of length at most $n$. We say that $O$ has degree $d(n)$ if for every $n \in \bbN$, there exists a degree-$d(n)$ polynomial $P_n$ over some finite field $\mathbb{F}$ of size $2^{O(n)}$, such that we can encode every $x \in \bin^{\leq n}$ into $\mathbb{F}$ and $O_n(x) = P_n(x)$ for every such $x$.
\end{definition}

\begin{remark}[Oracle Access at Field Points]\label{rem:field-access}
    Throughout this section, we adopt the convention that oracle access to a degree-$d(n)$ oracle $O$ means oracle access to $P_n$: the parties may query the oracle at any point $z \in \mathbb{F}$ and receive $P_n(z)$, which on encoded binary inputs coincides with $O$.
\end{remark}

\begin{remark}[Field Size]\label{rem:field-size}
    Note that $\log |\mathbb{F}| \geq m$ always holds: the encoding of $\bin^{\leq m}$ into $\mathbb{F}$ is injective, and if $\log |\mathbb{F}| < m$, then field elements would be shorter descriptions of the queries than the queries themselves, which is impossible by counting.
\end{remark}
\begin{theorem}\label{thm:low-degree}
Let $O: \bin^* \to \bin$ be an oracle with degree $d(n) = \poly(n)$, and let
  $\mathcal{L}$ be a language decidable by logspace-uniform
  oracle Boolean circuits with $S = \poly(n)$ gates. Take $\lambda$ as the security parameter and $m$ as the maximum query length to the oracle. We assume that the field $\mathbb{F}$ over which $O_m$ is represented satisfies $\log |\mathbb{F}| \geq \lambda$. Assuming the polynomial hardness of $\SIS$, there exists an interactive argument system for $\mathcal{L}$ with the following parameters:

  \begin{itemize}

  \item query complexity $q = 1$;
  \item communication complexity $cc = \poly(\secp) \cdot m$;
  \item round complexity $O(\log n)$;
  \item verifier running time $\poly(\secp) \cdot (n+m)$; and
  \item prover running time $\poly(\secp, S)$.
    
  \end{itemize} 
\end{theorem}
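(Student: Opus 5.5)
The plan is to follow the sketch in the technical overview. Enumerate the oracle gates $1,\ldots,\ell$ with $\ell\le S$ and take an interpolation set $H^k\subseteq\FF^k$ with $|H|^k\ge \ell$, using $k=O(\log S/\log\log S)$ and $|H|=O(\log S)$ so that low-degree extensions have individual degree $<|H|$. The honest prover evaluates $C^O(x)$, encodes each query $x_i$ as a field element, and forms two polynomials: $G:\FF^k\to\FF$, the low-degree extension of $i\mapsto \mathsf{enc}(x_i)$, and $F:\FF^k\to\FF$, the low-degree extension of $i\mapsto y_i$. Both have total degree $L=O(k|H|)=\polylog(S)$. The prover commits to $G$ and $F$ using the extractable polynomial commitment of \cref{thm:prelim_poly_commitment}. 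It then runs Kilian's argument of knowledge (\cref{thm:kilian}) for the NP relation asserting that it knows openings $(G,F,\st)$ of both commitments such that, reading queries off $G|_{H^k}$ and answers off $F|_{H^k}$, $C(x)$ issues exactly those queries and accepts. Finally, the verifier samples a uniform $u\in\FF^k$, runs the evaluation argument of the polynomial commitment to obtain claimed values $a=G(u)$ and $b=F(u)$, makes its single oracle query $P_m(a)$, and accepts iff $b=P_m(a)$ and every sub-argument accepts.

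Completeness comes from the following identity. The polynomial $R(z)\coloneqq F(z)-P_m(G(z))$ vanishes on $H^k$, because there $F=O\circ G$. Its total degree is at most $d(m)\cdot L=\poly(n)$. This does not yet give $R\equiv 0$, so I will instead define $F$ directly as the polynomial $P_m\circ G$, which agrees with the answers on $H^k$ and has degree $dL$. The commitment is then made to $F$ with degree bound $dL=\poly(n)$, and Kilian's relation checks that $F$ equals $P_m\circ G$ only on $H^k$. The verifier's test becomes the identity test $F(u)=P_m(G(u))$. The costs are as follows. The commitment and evaluation use $\poly(\secp,\log(dL))$ communication and $O(\log(dL))=O(\log n)$ rounds. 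Kilian contributes $\poly(\secp)$ communication and $\poly(\secp)\cdot n$ verifier time. Field elements have size $\log|\FF|\ge m$, which accounts for the $\poly(\secp)\cdot m$ terms. The oracle query count is $1$.

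For soundness, I fix a PPT $\MProver$ and $x\notin\lang$. Using the knowledge extractor of Kilian's protocol and the binding of the commitment, I extract $(G^*,F^*)$ of degree at most $dL$ that open the commitments and satisfy the Kilian relation, except with negligible probability. Knowledge soundness of the evaluation argument, together with binding, gives $a=G^*(u)$ and $b=F^*(u)$ except with negligible probability. Now I argue that $F^*\not\equiv P_m\circ G^*$. If they were equal, then on $H^k$ the committed answers would equal the true oracle answers on the committed queries. Since the Kilian relation forces the queries to be exactly those $C(x)$ makes given these answers, induction in topological order shows that the extracted transcript is the true one, so $C^O(x)=1$, a contradiction. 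Since $F^*-P_m\circ G^*$ is a nonzero polynomial of degree at most $d\cdot dL=\poly(n)$, Schwartz--Zippel makes the verifier's test pass with probability at most $\poly(n)/|\FF|\le\poly(n)/2^\secp$.

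The step I expect to be the main obstacle is the ordering of the extraction. The point $u$ must be chosen after $G^*$ and $F^*$ are fixed, meaning after the commitments are sent. Also, a single extractor run must yield both the Kilian witness and the polynomials consistent with the evaluation argument. I would handle this by rewinding the prover only to the point just after the commitments and using relaxed binding to identify the polynomials from the two extractions, paralleling the soundness proof of \cref{thm:arg-adaptive}. A secondary point is ensuring the degree bound $d(m)\cdot L$ keeps the commitment costs at $\poly(\secp,\log n)$ and the round count at $O(\log n)$.
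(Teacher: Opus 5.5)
\textbf{There is a genuine gap in your choice of low-degree extension; it breaks the required prover efficiency.} Your protocol skeleton and soundness argument are essentially the paper's. The paper also rewinds to just after the commitments to extract $G^*,F^*$, applies Schwartz--Zippel at the random $u$, and concludes $C^O(x)=1$ from $C^{G^*\to F^*}(x)=1$ and $F^*=O\circ G^*$. Your use of knowledge soundness of the evaluation argument plus binding is fine.

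The failure is the $k$-variate extension with $k=\Theta(\log S/\log\log S)$ and $|H|=\Theta(\log S)$. It breaks down once you set $F:=P_m\circ G$. Since $G$ has individual degree $|H|-1$ in each of its $k$ variables, $F$ has individual degree up to $d(m)(|H|-1)$. The space of $k$-variate polynomials of total degree at most $d(m)L$, with $L=k(|H|-1)$, has dimension
$\binom{d(m)L+k}{k}\ge (d(m)(|H|-1))^k$.
For $d(m)\ge n^{\Omega(1)}$ this is $n^{\Omega(\log n/\log\log n)}$, which is superpolynomial, and a generic $P_m\circ G$ has no sparsity to exploit. The consequences are:
\begin{itemize}
\item The honest prover must compute $F$ (interpolating it needs that many oracle evaluations) and commit to it.
\item The witness of your Kilian relation contains $F$ and the relation checks $\Open$ on it. So the honest prover takes superpolynomial time, and the relation is not even a polynomial-time NP relation. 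This violates the $\poly(\secp,S)$ prover bound.
\item \cref{thm:prelim_poly_commitment} is stated for degree-bounded univariate polynomials. Flattening a $k$-variate polynomial gives a degree bound equal to that dimension. So even your $O(\log(dL))=O(\log n)$ round count does not follow; you would get about $k\log(dL)=\Theta(\log^2 n/\log\log n)$ rounds.
\end{itemize}
Your ``secondary point'' tracks only the total degree. That controls Schwartz--Zippel and the $\poly(\secp,\log(\cdot))$ commitment costs, but not the size of the object being committed.

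\textbf{Fix.} Drop the multivariate extension and take $k=1$. Let $G$ be the univariate polynomial of degree $<\ell\le S$ interpolating the encoded queries at $\ell$ distinct field points. Then $F=P_m\circ G$ has degree at most $S\cdot d(m)=\poly(n)$ and is computable from $S\cdot d(m)+1$ oracle evaluations. This is exactly the paper's protocol. Because all commitment costs depend only on the logarithm of the degree bound, making $L$ polylogarithmic buys nothing here.

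\textbf{Smaller point.} Kilian's relation must be an oracle-free NP relation that just reads answers off $F$ at the interpolation points, as in your first description. It cannot check ``$F=P_m\circ G$ on $H^k$''. Consistency with the oracle is enforced only by the single test at $u$.
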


\begin{figure}[htbp]
\fbox{
\begin{minipage}{\linewidth}
  \RaggedRight
  \vspace{.3cm}
  \underline{\textbf{Doubly-efficient interactive argument for circuits with access to a low-degree oracle}}\\
  \vspace{.3cm}
  \textbf{Input:} The prover $\mathcal{P}$ and verifier
  $\mathcal{V}$ receive $x\in\{0,1\}^n$, a succinct description of an
  oracle circuit $C$ of size $S$, and oracle access to
  $O:\{0,1\}^*\to\{0,1\}$ with degree $d(n)=\poly(n)$.\\
  \vspace{.2cm}
  \textbf{Notations:} Let $m=m(\secp)$ be the maximum input length to
  the oracle. Let $\mathbb{F}=\mathbb{F}_{\secp}$ be the field over
  which $O_m$ is represented as a polynomial, and assume that
  $\log |\mathbb{F}| \geq \max \Set{m, \lambda}$.\\
  \vspace{.2cm}
  \textbf{Ingredients:} We make use of the following tools:
  \begin{itemize}
    \item A polynomial commitment scheme
    $\PC=(\Setup_{\mathrm{CM}},\Commit,\Open,\Setup_{\mathrm{IP}},
    \calP_{\PC},\calV_{\PC})$ with efficient opening and verification
    for degree-bounded polynomials over $\mathbb{F}$, as in
    \Cref{thm:prelim_poly_commitment}, instantiated with degree bound
    $S$ for committing to $G$ and degree bound $S\cdot d(m)$ for
    committing to $F$.
    \item A succinct argument of knowledge
    $\mathrm{KIL}=(\kilsetup,\kilprove,\kilver,\kilext)$ for
    $\NP$ relations, as in \Cref{sec:prelims:AoK}.
  \end{itemize}
  \textbf{The protocol:}
  \begin{enumerate}[itemsep=0cm]
    \item The setup algorithm samples
    $\pp_{\mathrm{CM}}^G\gets
    \Setup_{\mathrm{CM}}(\secparam,S)$,
    $\pp_{\mathrm{CM}}^F\gets
    \Setup_{\mathrm{CM}}(\secparam,S\cdot d(m))$,
    $\pp_{\mathrm{IP}}\gets\Setup_{\mathrm{IP}}(\secparam)$, and
    $\pp_{\kil}\gets\kilsetup(\secparam)$, and outputs
    $\pp=(\pp_{\mathrm{CM}}^G,\pp_{\mathrm{CM}}^F,\pp_{\mathrm{IP}},\pp_{\kil})$.

    \item The prover simulates the accepting computation $C^O(x)=1$
    and records the oracle queries $x_1,\ldots,x_S$ and corresponding
    answers $y_i=O(x_i)$ for every $i\in[S]$. It constructs a
    degree-$S$ polynomial $G$ satisfying $G(i)=x_i$ for every
    $i\in[S]$, and the unique degree-$S\cdot d(m)$ polynomial $F$
    satisfying $F(z)=O(G(z))$ for every $z\in\mathbb{F}$ (here we naturally encode oracle outputs in $\mathbb{F}$); constructing
    $F$ requires at most $S\cdot d(m)$ oracle queries. The prover computes

    \[
      (\cval_G,\st_G)\gets\Commit(\pp_{\mathrm{CM}}^G,G)
      \quad\text{and}\quad
      (\cval_F,\st_F)\gets\Commit(\pp_{\mathrm{CM}}^F,F),
    \]
    and sends $(\cval_G,\cval_F)$ to the verifier.

    \item The prover and verifier engage in Kilian's protocol for the
    relation
    \[
    \begin{split}
      \Rel=\big\{&
      ((\pp_{\mathrm{CM}}^G,\pp_{\mathrm{CM}}^F,C,x,\cval_G,\cval_F),
       (G,\st_G,F,\st_F)):\,
      \Open(\pp_{\mathrm{CM}}^G,\cval_G,G,\st_G)=1,\\
      &\Open(\pp_{\mathrm{CM}}^F,\cval_F,F,\st_F)=1
      \ \wedge\ C^{G\to F}(x)=1
      \big\},
    \end{split}
    \]
    where $C^{G\to F}(x)=1$ means that, on input $x$, the $i$th
    oracle query made by $C$ is $G(i)$, its answer is taken to be
    $F(i)$, and the resulting computation accepts. The verifier
    rejects if the Kilian verifier rejects.

    \item The verifier samples $u\gets\mathbb{F}$ uniformly at random
    and sends $u$ to the prover. The prover responds with
    $v_G=G(u)$ and $v_F=F(u)$.

    \item The prover and verifier run the polynomial commitment
    opening protocol with public parameters $\pp_{\mathrm{IP}}$ to
    prove that the polynomials committed by $\cval_G$ and $\cval_F$
    evaluate to $v_G$ and $v_F$, respectively, at $u$. The verifier
    rejects if the polynomial commitment verifier rejects.

    \item The verifier accepts if the Kilian verifier and polynomial
    commitment verifier accept and $v_F=O(v_G)$, and rejects
    otherwise.
  \end{enumerate}

\end{minipage}
}
\caption{Doubly-efficient interactive argument for circuits with access to a low-degree oracle}
\label{fig:fig-low-deg}
\end{figure}

\begin{proof}

The protocol is stated in \cref{fig:fig-low-deg}. We analyze its efficiency, completeness, and soundness as
follows.

\paragraph{Efficiency.} The communication of this protocol is composed of:

\begin{itemize}
    \item Public parameters $\pp$ of size $\poly(\secp)$.
    \item Commitments $(\cval_G, \cval_F)$ of size $\poly(\secp)$.
    \item Kilian's protocol communication, of size $\poly(\secp)$.
    \item The challenge $u$ of size $\log |\mathbb{F}| = O(m + \lambda)$.
    \item The openings $(v_G, v_F)$ of size $O(m + \lambda)$.
    \item The polynomial commitment opening, of size $\poly(\secp) \cdot m$.
\end{itemize}

Therefore, the total communication is $m \cdot \poly(\secp)$.

For round complexity, since the Kilian's protocol has $O(1)$ rounds, and PC local opening argument has $O(\log (S \cdot d(m))) = O(\log n)$ rounds, the total number of rounds is $O(\log n)$.

The computation of the verifier is composed of Kilian's protocol verifier, polynomial commitment verifier and checking $v_F = O(v_G)$. Therefore the running time is $\poly(\secp) \cdot (n+m)$, and the query complexity is $1$.

\paragraph{Completeness.} Completeness follows from completeness of the polynomial commitment scheme and Kilian's protocol.

\paragraph{Soundness.} Let $\MProver$ be any malicious prover. To prove computational soundness of this protocol, we rewind $\MProver$ (just after sending $(\cval_G, \cval_F)$) to obtain a prover for the succinct argument of knowledge $\MProver_{1}$, and extract the witness from the invocation of Kilian's protocol. Specifically, we run $\kilext^{\MProver_{1}}$ to extract $(G^*, \st_G^*, F^*, \st_F^*)$.

Since the protocol also performs Kilian's protocol verification, by definition of knowledge extraction, we know that if $\MProver$ convinces the verifier, then $G^*, F^*$ extracted by $\kilext^{\MProver_{1}}$ must be valid, in the sense that they are the polynomials underlying $\cval_G, \cval_F$, and $C(x)$ accepts when making queries by $G^*$ and using oracle answers from $F^*$, which we denote by $C^{G^* \to F^*}(x) = 1$.

\begin{claim}
    For any poly-time prover $\MProver$, there exists a negligible function $\negl(\cdot)$ such that
    \[
    \probcond{
        \langle \MProver, \Verifier \rangle(\pp, C, x) = 1 \land \\
        \big( \Open(\pp_{\text{CM}}^G, \cval_G, G^*, \st_G^*) = 0 \\
        \lor \Open(\pp_{\text{CM}}^F, \cval_F, F^*, \st_F^*) = 0 \\
         \lor C^{G^* \to F^*}(x)= 0 \big)
    }{
        \pp \gets \Setup(\secparam, S) \\
        (G^*, \st_G^*, F^*, \st_F^*) \gets \kilext^{\MProver_{1}}
    } \le \negl(\secp).
    \]
\end{claim}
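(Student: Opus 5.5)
This claim is a direct application of the knowledge soundness of Kilian's protocol (\cref{thm:kilian}), via a reduction. The plan is to cast the interaction of $\MProver$ with $\Verifier$ as an interaction of a Kilian prover with the Kilian verifier. The statement is the triple $(\pp^G_{\mathrm{CM}},\pp^F_{\mathrm{CM}},C,x,\cval_G,\cval_F)$ for the relation $\Rel$ of step 3.

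\textbf{Defining $\MProver_1$.} The adversary $\MProver_1$ receives $\pp_\kil$. It then samples $\pp^G_{\mathrm{CM}},\pp^F_{\mathrm{CM}},\pp_{\mathrm{IP}}$ itself using the honest setup algorithms, and runs $\MProver$ up to the point where $(\cval_G,\cval_F)$ is sent. It outputs the resulting statement, keeping $\MProver$'s internal state as $\st$. In the Kilian phase, $\MProver_1$ forwards messages between $\MProver$ and the external Kilian verifier.

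The setup samples these parameters independently of $\pp_\kil$. Hence the joint distribution of $(\pp,\text{statement},\text{Kilian transcript})$ is identical to the one in the real protocol. Since $\MProver$ runs in polynomial time, so does $\MProver_1$.

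\textbf{Relating the bad events.} Let $b$ be the Kilian verifier's output in this simulated run. The event $\lrag{\MProver,\Verifier}=1$ implies $b=1$, because the verifier rejects whenever the Kilian verifier rejects (steps 3 and 6).

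Next, consider the event that $\Open(\pp^G_{\mathrm{CM}},\cval_G,G^*,\st^*_G)=0$, or $\Open(\pp^F_{\mathrm{CM}},\cval_F,F^*,\st^*_F)=0$, or $C^{G^*\to F^*}(x)=0$. This is exactly the event that $(\text{statement},(G^*,\st^*_G,F^*,\st^*_F))\notin\Rel$.

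Therefore the probability in the claim is at most
\[
\Pr\bigl[b=1\wedge(\text{stmt},w)\notin\Rel : w\gets\kilext^{\MProver_1}\bigr]\le\kappa(\secp)=\negl(\secp).
\]

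\textbf{Main obstacle: the relation must be an $\NP$ relation.} For \cref{thm:kilian} to apply, membership in $\Rel$ must be decidable in polynomial time. I would argue this as follows:
\begin{itemize}
\item $\Open$ is a polynomial-time algorithm.
\item $G$ and $F$ have degree at most $S\cdot d(m)=\poly(n)$, so they can be written down with $\poly(n,\secp)$ bits as coefficient vectors.
\item $C^{G^*\to F^*}(x)$ is checked by simulating $C$ on $x$. At the $i$th oracle gate, the check verifies that the computed query equals $G^*(i)$ and feeds in $F^*(i)$ as the answer. This takes $\poly(S,\secp)$ time and uses no oracle access.
\end{itemize}

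There is one remaining subtlety. The extractor is run in a separate experiment after the real interaction, yet the claim is phrased as a joint event over the run and the extraction. I would handle this by noting that the knowledge-soundness definition already bounds exactly this joint event: the transcript and $b$ are produced by the interaction, and then $w\gets\calE_{\calP^*}(\pp,x)$ is sampled. Matching $\MProver_1$'s $(x,\st)$ to the real prefix therefore gives the bound directly.
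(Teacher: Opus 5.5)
Your proposal is correct and takes the same route as the paper. The paper likewise obtains $\MProver_1$ by running $\MProver$ up to the point where $(\cval_G,\cval_F)$ is sent, and then appeals directly to the knowledge soundness of Kilian's protocol (\cref{thm:kilian}) for $\Rel$. You make explicit what the paper leaves implicit: that the reduction's joint distribution matches the real protocol, that full acceptance implies Kilian acceptance, and that $\Rel$ is a polynomial-time checkable (NP) relation.
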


Then we claim that our protocol ensures $F^* = O(G^*)$ with all but negligible probability.

\begin{claim}
    For any poly-time prover $\MProver$, there exists a negligible function $\negl(\cdot)$ such that
    \[
    \probcond{
        \langle \MProver, \Verifier \rangle(\pp, C, x) = 1 \land F^* \not = O(G^*)
    }{
        \pp \gets \Setup(\secparam, S) \\
        (G^*, \st_G^*, F^*, \st_F^*) \gets \kilext^{\MProver_{1}}
    } \le \negl(\secp).
    \]
\end{claim}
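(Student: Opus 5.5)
We aim to show that, conditioned on acceptance, the extracted pair $(G^*,F^*)$ satisfies $F^*(z)=O(G^*(z))$ as polynomials, except with negligible probability. Here $O(G^*)$ means the polynomial $P_m\circ G^*$. The argument combines three ingredients: knowledge soundness of the polynomial commitment opening, binding of the commitment, and Schwartz--Zippel over the large field $\mathbb{F}$.

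\textbf{Step 1 (binding the opened values to $G^*,F^*$).} In Step 5 the prover runs the opening argument for $\cval_G$ and $\cval_F$ at the point $u$. We apply the knowledge extractor of the polynomial commitment argument (\cref{thm:prelim_poly_commitment}) to the residual prover after $u$ is fixed. This yields polynomials $\hat G,\hat F$ with valid openings of $\cval_G,\cval_F$ and with $\hat G(u)=v_G$, $\hat F(u)=v_F$, except with negligible probability. By the previous claim, Kilian's extractor gives $G^*,F^*$ with valid openings of the same commitments. If $\hat G\neq G^*$ or $\hat F\neq F^*$, the combined procedure is a PPT adversary that outputs two distinct valid openings of one commitment, contradicting relaxed binding. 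We will need to check that running both extractors together stays expected polynomial time, which follows by running them sequentially on rewound copies of $\MProver$. Hence, on acceptance and except with negligible probability, $v_G=G^*(u)$ and $v_F=F^*(u)$, and the verifier's final check gives $F^*(u)=O(G^*(u))=P_m(G^*(u))$.

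\textbf{Step 2 (Schwartz--Zippel).} Since $G^*,F^*$ were committed under degree bounds $S$ and $S\cdot d(m)$, the polynomial $\Delta:=F^*-P_m\circ G^*$ has degree at most $S\cdot d(m)=\poly(n)$. If $F^*\neq O(G^*)$ then $\Delta\neq 0$. The point $u$ is sampled uniformly after $\cval_G,\cval_F$ are sent, so by binding $\Delta$ is determined before $u$. Thus $\Pr_u[\Delta(u)=0]\le S\,d(m)/|\mathbb{F}|\le \poly(n)/2^{\lambda}$, which is negligible since $\log|\mathbb{F}|\ge\lambda$.

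\textbf{Main obstacle.} The subtle point is that $G^*,F^*$ come from an extractor run by rewinding, not from a fixed function of the commitments. To make the probability over $u$ legitimate, we argue as follows. Define $\Delta$ via the extracted witness, which depends only on the prefix of the transcript up to the commitments. Then use binding to show that any polynomial consistent with the commitments, in particular the one opened at $u$, equals this witness. The union bound over negligible terms then finishes the claim. Combined with $C^{G^*\to F^*}(x)=1$, this gives an honest accepting computation, so $x\in\mathcal{L}$.
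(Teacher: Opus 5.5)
Your proposal is correct and uses the same two-part structure as the paper. Acceptance plus binding forces $F^*(u)=O(G^*(u))$, and Schwartz--Zippel then bounds the probability of this when $F^*\neq O(G^*)$ by $S\cdot d(m)/|\mathbb{F}|$, since $u$ is uniform and independent of the witness extracted from the post-commitment state. Your extra step is to first run the opening argument's knowledge extractor to get $\hat G,\hat F$ with $\hat G(u)=v_G$ and $\hat F(u)=v_F$. This is what turns the paper's terse reduction (``honestly computing $G^*(u),F^*(u)$ and providing their openings'') into an actual violation of the stated relaxed binding, which concerns two distinct committed messages rather than two distinct evaluations.
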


\begin{proof}
    We bound the probability by considering whether $F^*(u) = O(G^*(u))$.

    First, a valid opening under $\pp_{\text{CM}}^G$ (resp.\ $\pp_{\text{CM}}^F$) enforces $\deg(G^*) \le S$ (resp.\ $\deg(F^*) \le S \cdot d(m)$), so both $F^*$ and $O \circ G^*$ are polynomials over $\mathbb{F}$ of degree at most $S \cdot d(m)$. Hence, if $F^* \neq O(G^*)$, by Schwartz-Zippel they can agree on at most $S \cdot d(m)$ points. Therefore, as $u$ is sampled uniformly from $\mathbb{F}$, we have
    \[
    \probcond{
        \langle \MProver, \Verifier \rangle(\pp, C, x) = 1 \\
        \land F^* \not = O(G^*) \land F^*(u) = O(G^*(u))
    }{
        \pp \gets \Setup(\secparam, S) \\
        (G^*, \st_G^*, F^*, \st_F^*) \gets \kilext^{\MProver_{1}}
    } \le \frac{S \cdot d(m)}{2^\secp}.
    \]

    Otherwise, if $F^*(u) \neq O(G^*(u))$, then $\MProver$ must break the polynomial commitment binding property to convince the verifier. In particular, either $v_G \neq G^*(u)$ or $v_F \neq F^*(u)$. Therefore, we can construct an adversary that breaks the binding property, by honestly computing $G^*(u), F^*(u)$ and providing their openings. This gives us that for some $\negl(\cdot)$,
    \[
    \probcond{
        \langle \MProver, \Verifier \rangle(\pp, C, x) = 1 \\
        \land F^* \not = O(G^*) \land F^*(u) \neq O(G^*(u))
    }{
        \pp \gets \Setup(\secparam, S) \\
        (G^*, \st_G^*, F^*, \st_F^*) \gets \kilext^{\MProver_{1}}
    } \le \negl(\secp).
    \]

    Combining the two cases, we conclude the proof.
\end{proof}

Now since $C^{G^* \to F^*}(x) = 1$ and $F^* = O(G^*)$, we have that $C^{O}(x) = 1$. Therefore,

\[
    \probcond{
        \langle \MProver, \Verifier \rangle(\pp, C, x) = 1 \land \\
        C^O(x) = 0
    }{
        \pp \gets \Setup(\secparam, S)
    } \le \negl(\secp).
\]
\end{proof}

\section{Conclusion}\label{sec:conclusion}

In this paper, we initiated the study of single-prover interactive proofs for oracle-aided computations as an approach to scalable AI safety. While there do not exist interactive proofs for all oracle-aided computations, we presented relativizing doubly-efficient single-prover interactive proofs and arguments for two natural settings: (1) where the computation is robust and (2) where the oracle is a low-degree polynomial. We end on some interesting further directions of study.

\begin{itemize}

\item \textbf{Experimental validation.} 
One important direction for future work is experimental validation of our protocols. For example, it would be interesting to study which realistic scalable oversight tasks satisfy our robustness assumption and whether our protocols can be implemented efficiently in practice.

\item \textbf{Other settings.} In this work, we considered the settings of robust computation and low-degree oracles. Are there other natural assumptions that one could make on the computation or on the oracle that would allow us to obtain relativizing single-prover interactive proofs? For example, it would be interesting to consider a broader class of ``learnable'' oracles. 

\item \textbf{Scope of robust computation.} As mentioned previously, many natural tasks may be able to be made robust using redundancy. Can we formalize this intuition as some theoretical model (and then apply our protocol for robust computation)?


\item \textbf{Verifying tasks that we don't know how to do.} In our model the verifier has access to the computation that he wants the prover to compute on some input; he is just too weak to perform the computation by himself. It would be interesting to study a model where not only can the verifier not perform the computation by himself, but he does not even have access to a description of the computation---this would, for example, capture the problem of scalable oversight in the setting where we are not able to perform the task that we are training the AI to do. Could we construct protocols for this setting?
    
\end{itemize}

\section{Acknowledgments}\label{sec:acknowledgments}
The authors would like to acknowledge support provided by the UK AISI Alignment Project. Zoe Xi is supported by an Akamai Presidential Fellowship.

\printbibliography

@string{springer =              "Springer"}

@string{jacm =                  "Journal of the {ACM}"}

@string{acm =                   "Association for Computing Machinery"}

@inproceedings{brown2024scalable,
  title={Scalable AI safety via doubly-efficient debate},
  author={Brown-Cohen, Jonah and Irving, Geoffrey and Piliouras, Georgios},
  booktitle={Proceedings of the 41st International Conference on Machine Learning},
  pages={4585--4602},
  year={2024}
}

@article{irving2018ai,
  title={AI safety via debate},
  author={Irving, Geoffrey and Christiano, Paul and Amodei, Dario},
  journal={arXiv preprint arXiv:1805.00899},
  year={2018}
}

@article{brown2025avoiding,
  title={Avoiding Obfuscation with Prover-Estimator Debate},
  author={Brown-Cohen, Jonah and Irving, Geoffrey and Piliouras, Georgios},
  journal={arXiv preprint arXiv:2506.13609},
  year={2025}
}

@article{buhl2025alignment,
  title={An alignment safety case sketch based on debate},
  author={Buhl, Marie Davidsen and Pfau, Jacob and Hilton, Benjamin and Irving, Geoffrey},
  journal={arXiv preprint arXiv:2505.03989},
  year={2025}
}

@article{christiano2018supervising,
  title={Supervising strong learners by amplifying weak experts},
  author={Christiano, Paul and Shlegeris, Buck and Amodei, Dario},
  journal={arXiv preprint arXiv:1810.08575},
  year={2018}
}

@article{leike2018scalable,
  title={Scalable agent alignment via reward modeling: a research direction},
  author={Leike, Jan and Krueger, David and Everitt, Tom and Martic, Miljan and Maini, Vishal and Legg, Shane},
  journal={arXiv preprint arXiv:1811.07871},
  year={2018}
}

@article{amit2024models,
  title={Models that prove their own correctness},
  author={Amit, Noga and Goldwasser, Shafi and Paradise, Orr and Rothblum, Guy},
  journal={arXiv preprint arXiv:2405.15722},
  year={2024}
}

@article{hammond2024neural,
  title={Neural interactive proofs},
  author={Hammond, Lewis and Adam-Day, Sam},
  journal={arXiv preprint arXiv:2412.08897},
  year={2024}
}

@article{anil2021learning,
  title={Learning to give checkable answers with prover-verifier games},
  author={Anil, Cem and Zhang, Guodong and Wu, Yuhuai and Grosse, Roger},
  journal={arXiv preprint arXiv:2108.12099},
  year={2021}
}

@misc{wäldchen2024interpretabilityguaranteesmerlinarthurclassifiers,
      title={Interpretability Guarantees with Merlin-Arthur Classifiers}, 
      author={Stephan Wäldchen and Kartikey Sharma and Berkant Turan and Max Zimmer and Sebastian Pokutta},
      year={2024},
      eprint={2206.00759},
      archivePrefix={arXiv},
      primaryClass={cs.LG},
      url={https://arxiv.org/abs/2206.00759}, 
}

@article{kirchner2024prover,
  title={Prover-verifier games improve legibility of llm outputs},
  author={Kirchner, Jan Hendrik and Chen, Yining and Edwards, Harri and Leike, Jan and McAleese, Nat and Burda, Yuri},
  journal={arXiv preprint arXiv:2407.13692},
  year={2024}
}

@article{goldwasser2015delegating,
  title={Delegating computation: interactive proofs for muggles},
  author={Goldwasser, Shafi and Kalai, Yael Tauman and Rothblum, Guy N},
  journal={Journal of the ACM (JACM)},
  volume={62},
  number={4},
  pages={1--64},
  year={2015},
  publisher={ACM New York, NY, USA}
}

@misc{berger2025efficientlybatchingunambiguousinteractive,
      title={Efficiently Batching Unambiguous Interactive Proofs}, 
      author={Bonnie Berger and Rohan Goyal and Matthew M. Hong and Yael Tauman Kalai},
      year={2025},
      eprint={2510.19075},
      archivePrefix={arXiv},
      primaryClass={cs.CC},
      url={https://arxiv.org/abs/2510.19075}, 
}

@inproceedings{rothblum2013interactive,
  title={Interactive proofs of proximity: delegating computation in sublinear time},
  author={Rothblum, Guy N and Vadhan, Salil and Wigderson, Avi},
  booktitle={Proceedings of the forty-fifth annual ACM symposium on Theory of computing},
  pages={793--802},
  year={2013}
}

@inproceedings{rothblum2020batch,
  title={Batch verification and proofs of proximity with polylog overhead},
  author={Rothblum, Guy N and Rothblum, Ron D},
  booktitle={Theory of Cryptography Conference},
  pages={108--138},
  year={2020},
  organization={Springer}
}

@online{irving2025relativise,
  author       = {Geoffrey Irving and Simon Marshall},
  title        = {The need to relativise in debate},
  year         = {2025},
  month        = {Jun},
  day          = {26},
  url          = {https://www.alignmentforum.org/posts/XycoFucvxAPhcgJQa/the-need-to-relativise-in-debate-1},
  note         = {AI Alignment Forum post},
  urldate      = {2025-06-26},
}

@online{barnes2020obfuscated,
  author       = {Barnes, Beth},
  title        = {Debate update: Obfuscated arguments problem},
  year         = {2020},
  month        = {Dec},
  day          = {22},
  url          = {https://www.alignmentforum.org/posts/PJLABqQ962hZEqhdB/debate-update-obfuscated-arguments-problem},
  note         = {AI Alignment Forum post},
  urldate      = {2025-06-26},
}

@article{shamirIP=PSPACE,
  author = {Shamir, Adi},
  title = {IP = PSPACE},
  year = {1992},
  issue_date = {Oct. 1992},
  publisher = {Association for Computing Machinery},
  address = {New York, NY, USA},
  volume = {39},
  number = {4},
  issn = {0004-5411},
  url = {https://doi.org/10.1145/146585.146609},
  doi = {10.1145/146585.146609},
  journal = {J. ACM},
  month = oct,
  pages = {869-877},
  numpages = {9}
}

@article{arora1998probabilistic,
  title={Probabilistic checking of proofs: A new characterization of NP},
  author={Arora, Sanjeev and Safra, Shmuel},
  journal={Journal of the ACM (JACM)},
  volume={45},
  number={1},
  pages={70--122},
  year={1998},
  publisher={ACM New York, NY, USA}
}

@inproceedings{DBLP:conf/tcc/BarbaraCG25,
  author       = {Annalisa Barbara and
                  Alessandro Chiesa and
                  Ziyi Guan},
  title        = {Relativized Succinct Arguments in the {ROM} do not Exist},
  booktitle    = {{TCC} {(1)}},
  series       = {Lecture Notes in Computer Science},
  volume       = {16268},
  pages        = {400--416},
  publisher    = {Springer},
  year         = {2025}
}

@article{micali2000computationally,
  title={Computationally sound proofs},
  author={Micali, Silvio},
  journal={SIAM Journal on Computing},
  volume={30},
  number={4},
  pages={1253--1298},
  year={2000},
  publisher={SIAM}
}

@inproceedings{kilian1992note,
  title={A note on efficient zero-knowledge proofs and arguments},
  author={Kilian, Joe},
  booktitle={Proceedings of the twenty-fourth annual ACM symposium on Theory of computing},
  pages={723--732},
  year={1992}
}

@inproceedings{GoldwasserMR85,
  author       = {Shafi Goldwasser and
                  Silvio Micali and
                  Charles Rackoff},
  editor       = {Robert Sedgewick},
  title        = {The Knowledge Complexity of Interactive Proof-Systems (Extended Abstract)},
  booktitle    = {Proceedings of the 17th Annual {ACM} Symposium on Theory of Computing,
                  May 6-8, 1985, Providence, Rhode Island, {USA}},
  pages        = {291--304},
  publisher    = {{ACM}},
  year         = {1985},
  url          = {https://doi.org/10.1145/22145.22178},
  doi          = {10.1145/22145.22178},
  bibsource    = {dblp computer science bibliography, https://dblp.org}
}

@inproceedings{Babai85,
  author       = {L{\'{a}}szl{\'{o}} Babai},
  editor       = {Robert Sedgewick},
  title        = {Trading Group Theory for Randomness},
  booktitle    = {Proceedings of the 17th Annual {ACM} Symposium on Theory of Computing,
                  May 6-8, 1985, Providence, Rhode Island, {USA}},
  pages        = {421--429},
  publisher    = {{ACM}},
  year         = {1985},
  url          = {https://doi.org/10.1145/22145.22192},
  doi          = {10.1145/22145.22192},
  bibsource    = {dblp computer science bibliography, https://dblp.org}
}

@article{DBLP:journals/ftsec/Thaler22,
  author       = {Justin Thaler},
  title        = {Proofs, Arguments, and Zero-Knowledge},
  journal      = {Found. Trends Priv. Secur.},
  volume       = {4},
  number       = {2-4},
  pages        = {117--660},
  year         = {2022},
  url          = {https://doi.org/10.1561/3300000030},
  doi          = {10.1561/3300000030},
  bibsource    = {dblp computer science bibliography, https://dblp.org}
}
    
\end{document}